\documentclass[letterpaper]{article} 
\usepackage[preprint]{aaai2027}  
\usepackage[hyphens]{url}  
\usepackage{graphicx} 
\usepackage{natbib}  
\usepackage{caption} 
\usepackage{algorithm}
\usepackage{algorithmic}

\usepackage{newfloat}
\usepackage{listings}
\DeclareCaptionStyle{ruled}{labelfont=normalfont,labelsep=colon,strut=off} 
\floatstyle{ruled}
\newfloat{listing}{tb}{lst}{}
\floatname{listing}{Listing}
\usepackage{mathtools} 
\usepackage{booktabs} 
\usepackage{tikz} 

\usepackage{amsmath}
\usepackage{amsfonts}
\usepackage{mathtools}
\usepackage{makecell}
\usepackage{multirow}

\usepackage{amsthm}
\usepackage{accents }
\usepackage{amssymb}
\usepackage[utf8]{inputenc}
\usepackage{xparse}
\usepackage{tabularx}

\usepackage{mathtools}
\usepackage{multirow}

\newtheorem{theorem}{Theorem}

\newtheorem{lemma}{Lemma}
\newtheorem{definition}{Definition}
\newtheorem{proposition}{Proposition}
\newtheorem{remark}{Remark}
\newtheorem{example}{Example}

\newcommand{\R}{\mathbb{R}}
\newcommand{\N}{\mathbb{N}}

\newcommand{\Pb}{\mathbb{P}}
\newcommand{\Oc}{\mathcal{O}}
\newcommand{\Gc}{\mathcal{G}}
\newcommand{\Fc}{\mathcal{F}}
\newcommand{\Hc}{\mathcal{F}}

\newcommand{\union}{\bigcup}
\DeclarePairedDelimiter\abs{\lvert}{\rvert}
\DeclarePairedDelimiter\norm{\lVert}{\rVert}

\DeclarePairedDelimiter\roundBracket{\lparen}{\rparen}
\newcommand{\round}{\roundBracket*}

\DeclarePairedDelimiter{\setBracket}{\{} {\}}
\newcommand{\setB}{\setBracket*}
\newcommand{\set}{\setBracket}

\usepackage{booktabs}
\usepackage{xcolor}

\newcommand{\closure}[1]{\overline{#1}}

\NewDocumentCommand{\dist}{o}{\IfNoValueTF{#1} {\rho} {\operatorname{d}\round{#1}} }
\newcommand{\ball}{\mathcal B}
\newcommand{\ballbrack}{\round}
\NewDocumentCommand{\B}{o m m}{\IfNoValueTF {#1} {\ball_{#2} \ballbrack {#3}} {\ball^{#1}_{#2} \ballbrack {#3}}}

\NewDocumentCommand{\jac}{mmo}{
	\IfNoValueTF {#3}{
		\frac{\partial #1}{\partial #2}
	}	
	{
		\left. \frac{\partial #1}{\partial #2}  \right \rvert_{#3}
	}
}

\newcommand{\correction}[1]{#1}

\usepackage[noabbrev,capitalize]{cleveref}

\usepackage{booktabs}

\title{Rigorous Error Certification for Neural PDE Solvers:\\ From Empirical Residuals to Solution Guarantees}
\author{
    Amartya Mukherjee\equalcontrib\corresponding,
    Maxwell Fitzsimmons\equalcontrib,
    David C. Del Rey Fernández,
    Jun Liu,
}
\affiliations{
    Department of Applied Mathematics, University of Waterloo, Waterloo, ON, Canada
}

\begin{document}

\maketitle

\begin{abstract}%
Uncertainty quantification for partial differential equations is traditionally grounded in discretization theory, where solution error is controlled via mesh/grid refinement. Physics-informed neural networks fundamentally depart from this paradigm: they approximate solutions by minimizing residual losses at collocation points, introducing new sources of error arising from optimization, sampling, representation, and overfitting. As a result, the generalization error in the solution space remains an open problem.

Our main theoretical contribution establishes generalization bounds that connect residual control to solution-space error. We prove that when neural approximations lie in a compact subset of the solution space, vanishing residual error guarantees convergence to the true solution. We derive deterministic and probabilistic convergence results and provide certified generalization bounds translating residual, boundary, and initial errors into explicit solution error guarantees.
\end{abstract}




\section{Introduction}

Partial differential equations (PDEs) play a crucial role in modelling phenomena across physics \citep{karakashian1998space}, engineering \citep{zobeiry2021physics}, finance \citep{zvan1998general}, and control \citep{meng2024physics}. However, they are conceptually difficult and computationally expensive to solve. In recent years, neural network-based approaches have attracted growing attention across machine learning, applied mathematics, and physics. In particular, physics-informed neural networks (PINNs) approximate solutions by fitting functions that satisfy the governing equations at sampled locations \citep{raissi2019physics,huang2006extreme,chen2022bridging,han2018solving,sirignano2018dgm,weinan2021algorithms};see \cite{karniadakis2021physics} for a comprehensive overview of the physics-informed learning paradigm and its applications.

In many scientific and decision-making settings, solving a PDE is not sufficient: one must also quantify the uncertainty and reliability of the computed solution.
For example, classical numerical methods such as finite difference \citep{iserles2009first}, finite element \citep{reddy2005introduction}, and spectral methods \citep{shen2011spectral} provide a well-developed theory of discretization error, where the dominant source of error arises from domain discretization.
Unlike classical discretization schemes, the loss minimized by PINNs is defined through residual evaluations at finitely many collocation points. Consequently, small training residual does not automatically imply small error in the underlying function space. The central theoretical difficulty is therefore to understand when residual control translates into convergence to the true PDE solution. Without such a link, the reliability of neural PDE solvers cannot be rigorously assessed.

While recent works have begun to bound residual errors, boundary violations, or testing errors for PINNs, a fundamental question remains largely unexplored: When does a small residual loss imply that the neural network is close to the true solution in the solution space?

Our results establish, for the first time, a direct theoretical link between residual-based training objectives and error in the solution space for PINNs. Unlike prior works that bound residual, boundary, or initial condition errors in isolation \citep{zhang2018efficient,xu2020automatic,eirasefficient}, we show how such quantities affect the true solution error. This perspective reframes uncertainty quantification for neural PDE solvers: residual control alone is insufficient; structural regularity is essential.

We complement our theoretical analysis with numerical experiments on ODEs and PDEs, including elliptic, parabolic, and hyperbolic examples. Using formal verification tools, we illustrate how error certification yields verifiable generalization guarantees and empirically stable convergence.

The main contributions of this paper are as follows:
\begin{enumerate}
    \item We conduct a convergence analysis of PINNs via compactness arguments.
    \item We provide generalization bounds based on errors that can be explicitly computed without access to the true solution.
    \item We do formal verifications on several systems and show that the true solution error is bounded above by our generalization error.
\end{enumerate}

\section{Related Works}\label{sec:related}

\textbf{Neural PDE Solvers.}
There are various machine learning-based solvers for PDEs, among which PINNs \citep{raissi2019physics} and extreme learning machines (ELMs) \citep{huang2006extreme} are widely used. PINNs incorporate the PDE residuals and initial/boundary conditions directly into the loss function, while ELMs perform a least squares approximation. Both methods demonstrate remarkable empirical performance in solving a wide variety of PDEs.
Despite their success, a rigorous understanding of when residual minimization guarantees convergence in the solution space, and how residual error translates into solution error, remains largely unresolved.

\textbf{Formally Verified Neural Networks.}
Motivated by adversarial attacks in neural networks, verifiability and perturbation analysis has been of interest to the trustworthy machine learning community. 
Satisfiability modulo theories (SMT) solvers such as dReal \citep{gao2013dreal} have been used to verify inequalities in neural networks by finding counterexamples.
Similarly, $\alpha$-CROWN \citep{zhang2018efficient} and $\beta$-CROWN \citep{wang2021beta} compute bounds in neural networks under perturbations via linear or quadratic relaxations of activation functions.
AutoLiRPA \citep{xu2020automatic} builds upon CROWN to bound neural networks provided a bounded range of perturbations.

\textbf{Formal Verifications in PDEs.}
More recently, verifications in neural networks have been extended to PDE residuals.
An early example is neural Lyapunov functions \citep{chang2019neural,zhou2022neural,liu2023towards,liu2025physics}, where SMT solvers were used to verify that the learned neural network satisfies the inequalities required for it to be a Lyapunov function. Among these, \cite{liu2023towards,liu2025physics} specifically used PINNs to learn neural Lyapunov functions. The work of \cite{eirasefficient} proposes $\delta$-CROWN, which bounds derivatives of neural networks with respect to their inputs, and therefore, bounds residual errors in PINNs. 
Finally, generalization bounds in PDEs were explored by \cite{mishra2023estimates} in various settings under some stability and training process assumptions.

Recent works have developed residual-based error bounds for neural differential equation solvers. The work of \cite{pmlr-v216-liu23b} derives computable residual-to-error bounds for PINNs across ODEs and first-order PDEs, while \cite{kongerror} proposes recursive error-network constructions for bounding worst-case approximation errors in Fokker–Planck equations. Additionally, \cite{pmlr-v286-flores25a} incorporates deterministic residual-based bounds into Bayesian PINNs to improve uncertainty calibration, and \cite{pmlr-v244-xu24a} develops operator-learning frameworks for parabolic PDEs.

Our paper builds upon these works by first establishing convergence guarantees under compactness assumptions, showing that vanishing residual error implies convergence to the true solution. We then provide generalization bounds that can be computed explicitly without knowing the true solution. 
\correction{In contrast to \cite{mishra2023estimates}, we derive bounds that are independent of the unknown true solution and are therefore computable in practice.}
Using verification tools, we bound all sources of error (residual, initial, and boundary) and combine them with theoretically derived error estimates to obtain bounds on the true solution error. Numerical examples demonstrate that these bounds are computable for a variety of ODE and PDE problems and that the true solution error is indeed bounded by our generalization bounds.

\section{Problem formulation}

Let $X\subseteq\R^{d_X}$ be compact and $Y\subseteq\R^{d_Y}$. Let $\Gc$ be a Banach space of functions from $X$ to $Y$, equipped with norm $\|\cdot\|_\Gc$, representing the solution space. Typical examples include $C(X,Y)$ with the uniform norm or $L^p(X,Y)$. 
Let $\Hc\subseteq\Gc$ denote a hypothesis class, e.g., functions representable by neural networks of a prescribed architecture type.

\subsection{Equation Operator Formulation}

Let $O:\Gc\times X\to\R^{d_O}$ be a (possibly nonlinear) differential operator defining the equation of interest. For each $f\in\Gc$, we define the residual function $O(f,\cdot)=:O_f:X\to\R^{d_O}$. Let $\Oc$ denote a complete normed function space equipped with norm $\|\cdot\|_\Oc$. We will make the following assumptions:
\begin{enumerate}
    \item $O$ admits a {unique solution} $g\in\Gc$ that satisfies $\|O(g,\cdot)\|_\Oc=0$. 
    \item The map $f\to\|O(f,\cdot)\|_\Oc$ is continuous.
\end{enumerate}
For each $f\in \Gc$, we define the \emph{solution error} as $E_g(f)=\|f-g\|_\Gc$, where $g$ is the unique solution to $O$, and the \emph{residual error} (or \emph{equation error}) as $E_O(f)=\|O(f,\cdot)\|_\Oc$. We assume that the equation $O$ and the norm $\|\cdot\|_\Oc$ is known, but the unique solution $g$ is unknown. 

Unlike classical numerical methods for PDEs, where discretizations are posed in variational or projection forms that provide stability estimates linking residual and solution error, neural PDE solvers minimize a pointwise residual evaluated at finitely many collocation points. Without such stability inequalities, small residual error does not automatically imply small solution error. This motivates the central question of this work: when does small residual error imply small solution error, and how do we rigorously quantify solution errors from residual errors?

To illustrate the abstract formulation above, we provide an example based on ordinary differential equations (ODEs) below. 

\begin{example}[Initial Value Problem]\label{ex:problem-IVP}
    Consider an initial value problem (IVP)
    \[
    \dot \phi(t) = F\circ \phi(t), \quad \phi(0)=\phi_0,
    \]
    on $[0,T]$.
    We pick $O$ to be
    \[
    O(\phi,t)= 
    \begin{pmatrix}
        \dot \phi(t) -F\circ \phi(t) \\
        \phi(0)-\phi_0
    \end{pmatrix}
    \]
    where, if $\phi: [0,T] \to \R^n$ is continuously differentiable, then $O: C^1([0,T],\R^d) \times [0,T] \to \R^{2d}$. We endow $\mathcal O$ with the uniform norm on $[0,T]$ and endow $\mathcal G$ with the norm $\norm{g}_{\mathcal G}=\max\setB{\norm{g}_{\infty},\norm{\dot g}_{\infty}}$, where the $\norm{\cdot}_{\infty}$ is the uniform norm on $[0,T]$. When the function $F:\R^{d}\to \R^{d}$ is Lipschitz, $O$ has a unique solution, which is also the unique solution to the IVP. 
\end{example}

\subsection{Training Error}

The \emph{training eror} observed at $n$ points $E_{T,n}:\Gc\to[0,\infty)$ is a quantity directly minimized by some algorithm. For instance, in Example \ref{ex:problem-IVP}, a natural choice for the training error is,
\[
E_{T,n}(\phi) = \max_{k=1,\dots,n}\setB{\norm{O(\phi,t_k)}_{\R^{2d}} }
\]
where the points $\setB{t_k:k\in \N}$ are dense in $[0,T]$ and $\norm{\cdot}_{\R^{2d}}$ is some fixed norm on $\R^{2d}$. If instead we endowed $\mathcal O$ with the $L^2$ norm, it would be more natural to use the typical ``mean squared error (MSE)" as the training error.

The hope is that, as $n\to\infty$, a small training error $E_{T,n}(f)$ implies a small residual error $E_O(f)$, which implies a small solution error $E_g(f)$. We formalize this minimal consistency requirement.

\begin{definition}\label{def:sound appox}\em 
Suppose that $O$ has a unique solution $g$ in $\Gc$. The hypothesis class $\Hc\subseteq \Gc$ is fixed, and $E_{T,n}$ is a training error.
We say that $(\Hc, E_{T,n})$ \emph{soundly approximates} the solution to $O$, if, for every sequence $\setB{f_n \in \Hc}_{n\in \N}$ satisfying $\lim_{n\to \infty}E_{T,n}(f_n)=0$, we also have $\lim_{n\to \infty}f_n =g$ in $\Gc$.
\end{definition}

The sequence $\{f_n\in\Hc\}$ represents some computed (potential) minimizers of $E_{T,n}$ for all $n\in\N$. In other words, vanishing training error implies convergence to the true solution. This property expresses a minimal reliability requirement for neural PDE solvers.

However, sound approximation does not hold in general.
We provide a counterexample below.
\begin{example}[Inadequate convergence of potential approximates]\label{ex:not sound}
    Let $X=[0,1]$ and $\mathcal G=C(X,X)$. Define $O(f,x) =f(x)$. Then, the unique solution to $O$ is the zero function $g\equiv 0$.
    For each $n\in \N$, define the points $x_{k,n}=\frac{k}{n}$ for $k=0,1,\dots,n$.  Then define 
    \[
    f_n(x)= \begin{cases}
        2^n(x-x_{k,n}),\qquad x \in [x_{k,n},x_{k,n}+2^{-n}], \\
        1,\qquad\qquad x \in [x_{k,n}+2^{-n},x_{k+1,n}-2^{-n}], \\
        -2^n(x-x_{k+1,n}),~ x \in [x_{k+1,n}-2^{-n},x_{k+1,n}], \\
    \end{cases}
    \]
    for $k=0,\dots, n$.
    Then we have that,
    \begin{align*}
        \norm{f_n}_{\infty}&= \sup_{x\in [0,1]}\abs{f_n(x)-g(x)}= f_n(0\correction{+}2^{-n})= 1, \\
        \norm{f_n}_{p}^p &= \int_{0}^{1} \abs{f_n(x)}^p dx \geq 1- n2^{-n+1}.
    \end{align*}
    In all cases, as $n\to \infty$, we have $ E_{T,n}(f_n)=0$, but
    \begin{align*}
        \norm{f_n}_{\infty} \to 1 \neq 0, && 
        \norm{f_n}_{p}^p \to 1 \neq 0,
    \end{align*}
    which implies $f_n\not \to 0$ in the norms under consideration. 
    Thus, vanishing training error does not imply convergence.
\end{example}
\begin{figure}
    \centering
    \includegraphics[width=\linewidth]{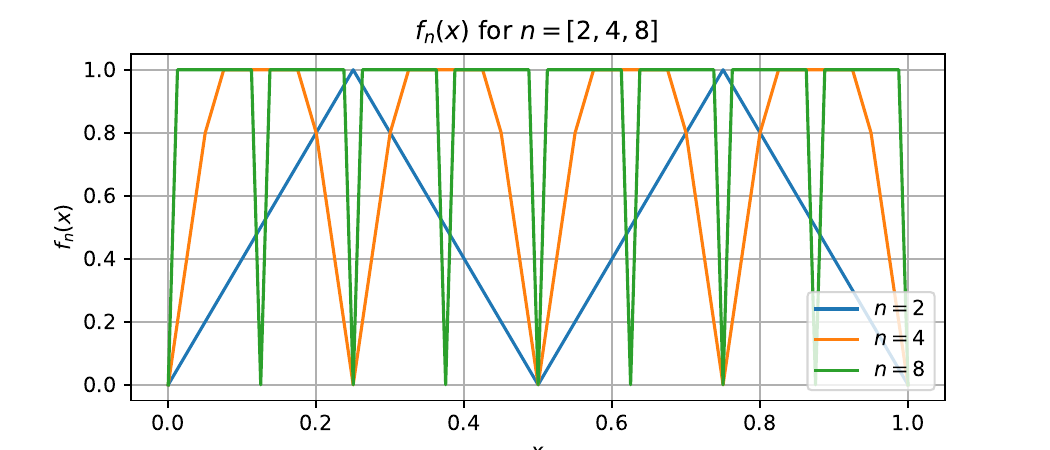}
    \caption{Illustration of $f_n(x)$ in Example \ref{ex:not sound}.}
\end{figure}
This scenario can also happen for the IVP in Example \ref{ex:problem-IVP}: we can pick $\phi_n$ as solutions to the IVP $\dot \phi_n(t) = F\circ\phi_n(t) +f_n(t)$, $\phi_n(0)=\phi_0$. 

This example demonstrates a neural overfitting phenomenon: the approximations interpolate the collocation points while oscillating arbitrarily between them. We are therefore faced with three fundamental questions:
\begin{itemize}
    \item \textbf{Consistency.} Under what structural assumptions does vanishing residual loss imply convergence to the true solution?
    \item \textbf{Error transfer.} How can residual error $E_O(f)$ be converted into bounds on the solution error $E_g(f)$ without access to the ground truth?
    \item \textbf{Certification.} Can we rigorously certify domain-wide bounds on the residual $E_O(f)$ using formal verification tools?
\end{itemize}


\section{Convergence via Compactness}\label{sec:training_error}

Example \ref{ex:not sound} demonstrates that residual minimization alone does not ensure convergence in the solution space: the approximating functions may exhibit increasingly steep oscillations while matching all collocation points. 
This suggests that residual control must be complemented by a structural condition on the hypothesis class $\Hc$. In this section, we show that this additional structural condition is compactness in $\mathcal G$. 
But first we should discuss why, in a more general problem, that compactness in $\mathcal G$ is useful. 
For a normed space $(Z,\norm{\cdot}_{Z})$ and $S\subseteq Z$, we denote $\closure{S}$ as the closure of $S$ in $Z$. We refer the reader to Appendix \ref{app:convergence} for the proofs of all the theoretical results here.

\begin{proposition}\label{thm:no training compact}
    Let $O:\mathcal G \times X \to \R^{d_O}$ be an equation, with $E_{\mathcal O}(f)=\norm{O(f,\cdot)}_{\mathcal O}$ continuous. 
    If $O$ has a unique solution $g\in \mathcal G$ then,
    \begin{align*}
        \closure{\setB{f_n:n\in \N}} \text{ is compact}&\\
        \text{and }\lim_{n\to \infty} E_{\mathcal O}(f_n) =0 &\iff \lim_{n\to \infty} f_n= g \text{ in } \mathcal G.
    \end{align*}
\end{proposition}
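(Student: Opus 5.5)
We prove the two implications separately. The reverse direction ($\Leftarrow$) uses only continuity of the norm and of $E_{\mathcal O}$. The forward direction ($\Rightarrow$) is a standard subsequence argument, with compactness supplying the subsequences and uniqueness of the solution pinning down every limit.

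\emph{($\Leftarrow$).} Suppose $f_n\to g$ in $\mathcal G$. The closure of $\setB{f_n:n\in\N}$ is then contained in $\setB{f_n:n\in\N}\cup\setB{g}$. This set is compact: any open cover contains a set holding $g$, that set holds all but finitely many $f_n$, and finitely many further sets cover the rest. A closed subset of a compact set is compact, so $\closure{\setB{f_n:n\in\N}}$ is compact. Next, $E_{\mathcal O}$ is continuous and $E_{\mathcal O}(g)=\norm{O(g,\cdot)}_{\mathcal O}=0$ by definition of the solution. Hence $E_{\mathcal O}(f_n)\to E_{\mathcal O}(g)=0$.

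\emph{($\Rightarrow$).} Let $K=\closure{\setB{f_n:n\in\N}}$, which is compact, and suppose $E_{\mathcal O}(f_n)\to 0$. Argue by contradiction: if $f_n\not\to g$, there are $\varepsilon>0$ and a subsequence $(f_{n_k})$ with $\norm{f_{n_k}-g}_{\mathcal G}\ge\varepsilon$ for all $k$. Since $K$ is a compact subset of a metric space, it is sequentially compact. So a further subsequence $(f_{n_{k_j}})$ converges to some $h\in K\subseteq\mathcal G$. By continuity of $E_{\mathcal O}$, $E_{\mathcal O}(h)=\lim_j E_{\mathcal O}(f_{n_{k_j}})=0$, which means $\norm{O(h,\cdot)}_{\mathcal O}=0$. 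By uniqueness of the solution, $h=g$. But $\norm{f_{n_{k_j}}-g}_{\mathcal G}\ge\varepsilon$ for every $j$, which contradicts $f_{n_{k_j}}\to g$.

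\emph{Main point of care.} None of the steps is hard, but two details need attention. First, "unique solution" must be read as: $g$ is the only element of $\mathcal G$ with zero residual norm. That reading is what lets us identify the limit $h$ with $g$. Second, we should use sequential compactness of $K$, which is valid because $\mathcal G$ is a normed (hence metric) space. Neither completeness of $\mathcal G$ nor any structure of $O$ beyond continuity of $E_{\mathcal O}$ is needed.
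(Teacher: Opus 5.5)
Your proof is correct and follows the paper's argument. In the forward direction, compactness supplies convergent subsequences, and continuity of $E_{\mathcal O}$ together with uniqueness forces every subsequential limit to equal $g$; the converse uses that a convergent sequence together with its limit forms a compact set. The only difference is that your contradiction argument, with a subsequence staying $\varepsilon$ away from $g$, spells out a step the paper merely asserts: in a compact set, a sequence whose convergent subsequences all converge to $g$ must itself converge to $g$.
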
	  

Assume that the training error $E_{T,n}$ approximates the true equation error $E_{\mathcal O}$. Proposition \ref{thm:no training compact} considers an ideal case of the sound approximation property in Definition \ref{def:sound appox}, where we do not need to approximate the true equation error and instead have direct access to it. In this case, to ensure sound approximation, it is both necessary and sufficient that the sequence of approximations lies in a compact subset of $\Gc$. Therefore, in our less ideal case, where the training error $E_{T,n}$ is used as a proxy for the true equation error $E_{\mathcal O}$, we should also require compactness. 

We now investigate the sound approximation property under different norms.



The case where $\mathcal G$ has the uniform norm leads to nice convergence theorems with small modifications to the training error. We provide a background on compactness in the uniform norm in Appendix \ref{sec:unifrom norm}. 
\correction{In summary, the compactness of a set $\mathcal F$ can be sufficiently achieved on connected domains by enforcing every $f\in \mathcal F$ to have a uniformly bounded Lipschitz constant and for every function to have uniformly bounded interpolation error of a fixed data point $(x_0,y_0)$. }

\begin{theorem}[Uniform-uniform convergence]\label{thm:sound equtaion solve}
    Let $\mathcal G= C(X,Y)$ with the uniform norm, where $X$ is connected, compact, and $\setB{a_n: n\in \N}$ is a sequence dense on $X$ (i.e., $\closure{\setB{a_n: n\in \N}}=X$).
    Suppose that $O:C(X,Y) \times X \to \R^d$ is a continuous function, we endow the space $\mathcal O =\setB{O(f,\cdot):f\in \mathcal G}$ with the uniform norm and let $\norm{\cdot}_{\R^d}$ be some fixed norm on $\R^d$.
    
    Assume that $O$ has a unique solution. Further, consider a hypothesis class $\Hc\subseteq\mathcal G$ and let $L: \mathcal F \to [0,\infty)$ be a function which satisfies
    \[
    \norm{f(x)-f(z)}_{Y} \leq  L(f)\norm{x-z}_X
    \]
    for all $x,z\in X$ and each $f\in \mathcal F$, where $\norm{\cdot}_X$ and $\norm{\cdot}_Y$ are some fixed norms on $X$ and $Y$, respectively. That is, $L(f)$ is a Lipschitz constant of $f\in \mathcal F$. Let $L_{\max}>0$ be some fixed number and let $x_0 \in X$, $y_0\in Y$ be some points.

    Let $\tilde {E}_{T,n}^{p}(f)=\max_{k=1,\dots, n}\setB{\norm{O(f,a_k)}_{\R^d}}$ for $p=\infty$ and $\tilde {E}_{T,n}^{p}(f)=\sum_{k=1}^n{\norm{O(f,a_k)}_{\R^d}^p}$ for $p> 0$. 

     Define the training error $E_{T,n}^{p}:\mathcal F \to \R$ for $p\in (0,\infty]$ by 
        \begin{align*}
        E_{T,n}^{p}(f)
        =&~\tilde {E}_{T,n}^{p}(f) +\max\setB{0,L(f)-L_{\max}} \\
        &+\norm{f(x_0)-y_0}_Y
        \end{align*}
        for $n\in \N$. Then $(\mathcal F,E_{T,n}^{p})$ soundly approximates the solution to $O$. \label{thm:sound equtaion solve.psum} %
    
\end{theorem}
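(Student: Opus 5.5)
The plan is a subsequence argument built on Arzelà–Ascoli. I use Proposition \ref{thm:no training compact} only as guidance, since it is cleaner to identify limit points directly. Let $\setB{f_n\in\Hc}$ satisfy $E_{T,n}^p(f_n)\to 0$. All three summands of $E^p_{T,n}$ are nonnegative, so each tends to zero separately:
\[
\tilde E^p_{T,n}(f_n)\to 0,\qquad \max\setB{0,L(f_n)-L_{\max}}\to 0,\qquad \norm{f_n(x_0)-y_0}_Y\to 0 .
\]

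\textbf{Step 1 (relative compactness).} For $n$ large, $L(f_n)\le L_{\max}+1$, so the tail of the sequence is uniformly Lipschitz and hence equicontinuous. Compactness of $X$ gives a finite diameter $D$. The convergence of $f_n(x_0)$ gives $\norm{f_n(x_0)-y_0}_Y\le 1$ eventually. Together these yield the uniform bound
\[
\norm{f_n(x)-y_0}_Y\le (L_{\max}+1)D+1
\]
for all large $n$ and all $x\in X$. Since $Y\subseteq\R^{d_Y}$ is finite-dimensional, the values of the tail lie in a bounded set. I will assume $Y$ is closed, or otherwise pass to $\closure{Y}$, so that this set is relatively compact. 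Arzelà–Ascoli then shows that $\closure{\setB{f_n}}$ is compact in the uniform norm; the finitely many initial terms do not affect this. Connectedness of $X$ does not appear to be needed once Lipschitz constants are used directly.

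\textbf{Step 2 (every limit point solves $O$).} Let $f_{n_j}\to h$ uniformly, with $h\in C(X,Y)$. Fix $k$. For $n\ge k$, the point $a_k$ appears in the training error, so
\[
\norm{O(f_n,a_k)}_{\R^d}\le \tilde E^p_{T,n}(f_n)^{1/p}
\]
for $p\in(0,\infty)$, and $\norm{O(f_n,a_k)}_{\R^d}\le \tilde E^\infty_{T,n}(f_n)$ for $p=\infty$. In both cases the right-hand side tends to zero. Joint continuity of $O$ on $C(X,Y)\times X$ then gives
\[
O(h,a_k)=\lim_j O(f_{n_j},a_k)=0 .
\]
This holds for every $k$. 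The map $x\mapsto O(h,x)$ is continuous and vanishes on the dense set $\setB{a_k}$, so it vanishes on all of $X$. Hence $\norm{O(h,\cdot)}_{\mathcal O}=0$, and uniqueness gives $h=g$. As a byproduct, this shows $E_{\mathcal O}(f_n)\to0$ along the subsequence, consistent with Proposition \ref{thm:no training compact}.

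\textbf{Step 3 (conclusion).} Suppose $f_n\not\to g$. Then there is an $\varepsilon>0$ and a subsequence with $\norm{f_{n_j}-g}_\infty\ge\varepsilon$. By Step 1 this subsequence has a further uniformly convergent subsequence, and by Step 2 its limit is $g$, which is a contradiction. Hence $f_n\to g$, which is exactly sound approximation in the sense of Definition \ref{def:sound appox}.

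I expect Step 2 to be the main obstacle. The training error controls $O(f_n,\cdot)$ only at finitely many points, so the key is to fix the collocation point first, pass to the limit in $n$, and only afterwards use density. This order works only because $O$ is jointly continuous and the limit $h$ is continuous; without compactness, the oscillations of Example \ref{ex:not sound} would break the argument.
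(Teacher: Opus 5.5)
Your proof is correct, but it takes a different route from the paper.

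\textbf{The paper's route.} The paper puts all $f_n$ in the sublevel set $\mathcal F_r$ of the compactness term. It gets compactness of $\overline{\mathcal F_r}$ from the pinned-at-a-point lemma, which is where connectedness enters. Heine--Cantor then makes $O$ uniformly continuous on $\overline{\mathcal F_r}\times X$. With that, the paper shows $E_{\mathcal O}(f_n)\to 0$ for the whole sequence: the maximizer $x_n^*$ of $\norm{O(f_n,\cdot)}$ lies within $\delta$ of some collocation point once $a_1,\dots,a_n$ form a $\delta$-net. So $E_{\mathcal O}(f_n)$ is at most $\epsilon/2$ plus the sampled residual, and Proposition~\ref{thm:no training compact} finishes.

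\textbf{Your route.} You identify limit points directly: fix $a_k$, let $n\to\infty$, and only then use density. This is more elementary and needs less. You only use continuity of $f\mapsto O(f,a_k)$ for fixed $k$ and of $x\mapsto O(h,x)$, so separate continuity suffices rather than the joint continuity you invoke. You need neither Heine--Cantor nor continuity of $E_{\mathcal O}$. As you observe, the global Lipschitz bound gives uniform boundedness without connectedness.

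\textbf{What the paper's route buys.} It produces the covering estimate ``sampled residual plus oscillation of $O(f,\cdot)$ at the fill distance, uniformly over a compact class.'' Theorem~\ref{thm:generalization} later makes exactly this explicit. The same estimate also carries over to the averaged loss of Theorem~\ref{thm:MSE sound equtaion solve}. There $\frac1n\sum_{k}\norm{O(f_n,a_k)}_2^2\to 0$ no longer dominates each individual term, so the bound at the start of your Step~2 is unavailable.

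\textbf{A small point on $Y$.} Your fallback of passing to $\overline{Y}$ does not work. A limit taking values in $\overline{Y}\setminus Y$ lies outside the domain of $O$, so Step~2 cannot be run. Closedness of $Y$ is implicit in the paper, though: $C(X,Y)$ is assumed to be a Banach space, and the pinned-point lemma takes $Y=\R^{d_Y}$. So your first option, assuming $Y$ closed, is the right one.
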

Note that the function $L(f)$ must be obtained analytically in practice. Moreover, $L(f)$ only needs to exceed the minimal Lipschitz constant of $f$, so crude but easily computable estimates suffice. In Appendix \ref{sec:Estimation of Lipschitz Constants}, we list some preliminary results on estimation of Lipschitz constants for neural network functions. The term corresponding to the singular data point $(x_0,y_0)$ may be unnecessary depending on the definition of $O$ (see Example \ref{ex:problem-IVP}, where such information is already encoded in the definition of $O$).


The additional terms, $\max\setB{0,L(f)-L_{\max}}+\norm{f(x_0)-y_0}_Y$, exist solely to make the sequence $\setB{f_n}_{n\in \N}$ lie in a compact subset of $\mathcal G$. Therefore, we dub this a ``uniform compactness term''. 


One can imagine that when $\mathcal G$ is endowed with a norm other than the uniform norm, a different term could be used to enforce compactness. Note that even when $\mathcal G$ has the uniform norm, there are other possible ``uniform compactness terms'' that ensure the sound approximation property in Definition \ref{def:sound appox}. The compactness term used in Theorem \ref{thm:sound equtaion solve} was chosen because it is conceivably usable in practice. We discuss these compactness terms in more detail in Appendix  \ref{sec:compactness_terms}. 


It is common to use the ``mean squared error (MSE)'' in training practice. Theorem \ref{thm:sound equtaion solve} does not address this case. If we add additional assumptions, we can get a similar convergence result for the MSE as follows.   

\begin{theorem}[Uniform-$L^2$ convergence]\label{thm:MSE sound equtaion solve}
    Let $\mathcal G= C(X,Y)$ be equipped with the uniform norm, where $X$ is connected, compact,  and $\setB{a_n: n\in \N}$ is a sequence dense on $X$ with the additional requirement that, for all $n\in \N$, 
    \[
    \delta_n= \inf\setB{\delta >0: X\subseteq \union_{k=1}^n\B[X]{\delta}{a_k}},
    \]
    and
    \[
    C=\sup\setB{ n\mu(\B[X]{\delta_n}{a_1}) : n\in \N} <\infty,
    \]
    where $\mu$ is the Lebesgue measure and $\B[X]{\delta}{x}$ is the open ball of radius $\delta >0$ centered at $x\in X$ with respect to some fixed norm $\norm{\cdot}_X$. 
    Suppose that $O:C(X,Y) \times X \to \R^d$ is a continuous function, we endow the space $\mathcal O =\setB{O(f,\cdot):f\in \mathcal G}$ with the $L^2$ norm. 
    
    Assume that $O$ has a unique solution. Further, consider a hypothesis class $\mathcal F\subseteq\mathcal G$ and let $L: \mathcal F \to [0,\infty)$ be a function which satisfies
    \[
    \norm{f(x)-f(z)}_{Y} \leq  L(f)\norm{x-z}_X
    \]
    for all $x,z\in X$ and each $f\in \mathcal F$, where $\norm{\cdot}_Y$ is some fixed norm on  $Y$. That is, $L(f)$ is a Lipschitz constant of $f\in \mathcal F$. Let $L_{\max}>0$ be some fixed number and let $x_0 \in X$, $y_0\in Y$ be some points.
    
    Define the training error $E_{T,n}^{MSE}:\mathcal F \to \R$ by
    \begin{align*}
        E_{T,n}^{MSE}(f)
        =&~\frac{1}{n} \sum_{k=1}^n{\norm{O(f,a_k)}_{2}^2}
        \\&+\max\setB{0,L(f)-L_{\max}}+\norm{f(x_0)-y_0}_Y
    \end{align*}
    for $n\in \N$ and $\norm{\cdot}_2$ is the $2$-norm on $\R^d$. Then $(\mathcal F,E_{T,n}^{MSE})$ soundly approximates the solution to $O$.
\end{theorem}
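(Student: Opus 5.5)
The approach is to reduce to Proposition~\ref{thm:no training compact}. Let $\setB{f_n\in\mathcal F}$ satisfy $E_{T,n}^{MSE}(f_n)\to 0$. It suffices to show two things: $\closure{\setB{f_n}}$ is compact in $C(X,Y)$, and $E_{\mathcal O}(f_n)=\norm{O(f_n,\cdot)}_{L^2}\to 0$. Proposition~\ref{thm:no training compact} then gives $f_n\to g$ uniformly. Continuity of $f\mapsto E_{\mathcal O}(f)$ in the $L^2$ norm follows from continuity of $O$ on $C(X,Y)\times X$ and compactness of $X$, since $O$ is uniformly continuous on compact sets $K\times X$.

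\textbf{Compactness.} Since each of the three nonnegative terms tends to zero, we get $\limsup_n L(f_n)\le L_{\max}$ and $f_n(x_0)\to y_0$. Hence the family is uniformly Lipschitz with a common constant $L'$. Connectedness, or simply boundedness, of the compact set $X$ gives $\norm{f_n(x)}_Y\le \norm{f_n(x_0)}_Y+L'\operatorname{diam}X$, so the family is uniformly bounded. By Arzel\`a--Ascoli (Appendix~\ref{sec:unifrom norm}), $\closure{\setB{f_n}}$ is compact. This is the same step as in Theorem~\ref{thm:sound equtaion solve}.

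\textbf{Residual convergence.} This is the main obstacle: we must pass from the empirical average $\frac1n\sum_k\norm{O(f_n,a_k)}_2^2\to0$ to $\int_X\norm{O(f_n,x)}_2^2\,dx\to0$.

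\emph{Step 1: equicontinuity of the residuals.} Let $K=\closure{\setB{f_n}}$, which is compact. Then $O$ is uniformly continuous on $K\times X$, so the functions $h_n(x)=\norm{O(f_n,x)}_2^2$ are uniformly bounded by some $M$ and equicontinuous with a common modulus $\omega$.

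\emph{Step 2: covering argument.} Since the $n$ balls $\B[X]{\delta_n}{a_k}$ cover $X$,
\[
\int_X h_n\,d\mu\le \sum_{k=1}^n\int_{\B[X]{\delta_n}{a_k}} h_n\,d\mu\le \sum_{k=1}^n \mu(\B[X]{\delta_n}{a_k})\bigl(h_n(a_k)+\omega(\delta_n)\bigr).
\]
Translation invariance of Lebesgue measure gives $\mu(\B[X]{\delta_n}{a_k})\le \mu(\B[X]{\delta_n}{a_1})\le C/n$. Here we read $\B[X]{\delta}{a}$ as the full $\R^{d_X}$ ball, or we note that intersecting with $X$ only decreases the measure. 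The right-hand side is therefore at most
\[
C\Bigl(\frac1n\sum_{k=1}^n h_n(a_k)\Bigr)+C\,\omega(\delta_n).
\]

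\emph{Step 3: conclusion.} The first term tends to zero by hypothesis. For the second, density of $\setB{a_n}$ together with compactness of $X$ gives $\delta_n\to0$: for fixed $\epsilon$, finitely many $\epsilon$-balls centered at the $a_k$ cover $X$, by density and a finite subcover. Hence $\omega(\delta_n)\to0$ and $E_{\mathcal O}(f_n)\to0$.

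The delicate points are that $\omega$ must be uniform in $n$, which is exactly why compactness is established first, and the measure bound on balls, which is exactly what the constant $C$ provides.
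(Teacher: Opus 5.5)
Your proof is correct and follows the paper's argument: compactness of $\closure{\setB{f_n}}$ from the Lipschitz and pinning terms, uniform continuity of $O$ on a compact set times $X$, a cover of $X$ by the $n$ balls of radius $\delta_n$ whose total measure is at most $C$, and then Proposition~\ref{thm:no training compact}. Your version is slightly cleaner in two places: you apply the modulus of continuity directly to $\norm{O(f_n,\cdot)}_2^2$ rather than expanding a square with a cross term, and you explicitly verify $\delta_n\to 0$, which the paper takes for granted.
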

We suspect the extra condition on the dense set, $\setB{a_k:k\in \N}$, in \cref{thm:MSE sound equtaion solve} is likely to hold in practice. 
It is somewhat difficult to even contrive of a dense set which does not satisfy the condition.

\section{Convergence Under Random Collocation}\label{sec:prob}

So far the convergence analysis has assumed deterministic training locations. In practice, PINNs are typically trained using randomly sampled collocation points. We therefore ask whether a small empirical residual still implies a small equation error under random sampling. The answer is affirmative under the same compactness assumption: convergence of the empirical residual in probability implies convergence in the solution space.

Let \(x_1,x_2,\dots\) be independent samples drawn from a probability measure \(\mu\) on \(X\). For \(p>0\), define the empirical residual
\[
E_{\text{test},p,n}(f)=\frac{1}{n}\sum_{k=1}^n \|O(f,x_k)\|_{\mathbb R^d}^p .
\]
This quantity serves as a stochastic approximation of the equation error \(E_{\mathcal O}(f)=\|O(f,\cdot)\|_{\mathcal O}\).

We say the probabilistic sound approximation property holds if, for any sequence \(\{f_n\}_{n\in\mathbb N}\subset \mathcal F\),
\[
\operatorname*{plim}_{n\to\infty} E_{test,p,n}(f_n)=0
\quad\Rightarrow\quad
f_n \to g \text{ in } \mathcal G,
\]
where $\operatorname*{plim}$ is the limit in probability (see Appendix \ref{sec:app:test_error}) and \(g\) is the unique solution satisfying \(O(g,\cdot)=0\).

The above condition states that vanishing empirical residual implies convergence to the true solution in probability. The following theorem shows that this property holds under the same compactness assumption used in the deterministic setting.

\begin{theorem}[Random collocation convergence, informal]
Let \(x_1,x_2,\dots\) be i.i.d.\ samples drawn from a probability measure \(\mu\) on \(X\).  
Assume \(E_{\mathcal O}(f)=\|O(f,\cdot)\|_{\mathcal O}\) is continuous and the sequence \(\{f_n\}_{n\in\mathbb N}\) lies in a compact subset of \(\mathcal G\). 
\correction{Assume further that every non-empty open set in $X$ has strictly positive measure under $\mu$.}
Then
\[
\operatorname*{plim}_{n\to\infty} E_{\mathrm{test},p,n}(f_n)=0
\;\Rightarrow\;
\lim_{n\to\infty}E_{\mathcal O}(f_n)=0 .
\]
Consequently, if \(O\) has a unique solution \(g\), then \(f_n\to g\) in \(\mathcal G\).
\end{theorem}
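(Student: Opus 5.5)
We argue by contradiction and use compactness to pass to a convergent subsequence. We then show that, at a limit point with nonzero residual, the empirical residual stays bounded below with high probability; this contradicts the hypothesis.

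\textbf{Setup.} Suppose $E_{\mathcal O}(f_n)\not\to 0$. Then there exist $\varepsilon>0$ and a subsequence with $E_{\mathcal O}(f_{n_j})\ge\varepsilon$. Since $\{f_n\}$ lies in a compact subset of $\mathcal G$, we may pass to a further subsequence with $f_{n_j}\to f^*$ in $\mathcal G$. Continuity of $E_{\mathcal O}$ gives $E_{\mathcal O}(f^*)\ge\varepsilon>0$, so the residual function $O(f^*,\cdot)$ is not identically zero.

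\textbf{Positive residual on an open set.} As in Theorem~\ref{thm:sound equtaion solve}, we assume $O$ is jointly continuous on $\mathcal G\times X$ with $\mathcal G$ carrying the uniform norm, or more generally that $x\mapsto O(f,x)$ is continuous and $f\mapsto O(f,\cdot)$ is continuous into $C(X,\R^d)$. Because $O(f^*,\cdot)\neq 0$, there are $x^*\in X$ and $\eta>0$ with $\|O(f^*,x^*)\|\ge 2\eta$. By continuity there is an open neighbourhood $U$ of $x^*$ on which $\|O(f^*,x)\|\ge 3\eta/2$. By uniform convergence $O(f_{n_j},\cdot)\to O(f^*,\cdot)$, so for all large $j$ we have $\|O(f_{n_j},x)\|\ge\eta$ for every $x\in U$. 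The support assumption gives $q:=\mu(U)>0$.

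\textbf{Lower bound on the empirical residual.} For large $j$,
\[
E_{\mathrm{test},p,n_j}(f_{n_j})\ge \eta^p\,\frac{1}{n_j}\#\{k\le n_j: x_k\in U\}.
\]
The key point is that the right-hand side does not depend on $f_{n_j}$; it depends only on the fixed set $U$ and the samples. This is why the bound holds even if $f_n$ is chosen depending on the data. By the weak law of large numbers applied to the Bernoulli indicators $\mathbf 1_U(x_k)$, the fraction converges in probability to $q$. Hence
\[
\Pb\bigl(E_{\mathrm{test},p,n_j}(f_{n_j})\ge \eta^p q/2\bigr)\to 1,
\]
which contradicts $\operatorname*{plim} E_{\mathrm{test},p,n}(f_n)=0$ along the subsequence. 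Therefore $E_{\mathcal O}(f_n)\to 0$. The final conclusion $f_n\to g$ then follows from Proposition~\ref{thm:no training compact}.

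\textbf{Main obstacle.} The delicate step is the second one: turning $E_{\mathcal O}(f^*)>0$ into a pointwise lower bound that holds uniformly on an open set and persists along the sequence. This needs regularity of $O$, namely continuity in $x$ and uniform continuity in $f$, that the informal statement leaves implicit. There is also a subtlety about the subsequence being random if the $f_n$ depend on the data. I would handle it by fixing the subsequence deterministically, treating $f_n$ as deterministic or at least taking $f^*$ and $U$ deterministic. If the $f_n$ are random, I would instead cover the compact closure by finitely many neighbourhoods, each with its own open set $U_i$, and apply the law of large numbers uniformly over these finitely many sets.
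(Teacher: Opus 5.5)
Your argument is correct, and it takes a genuinely different route from the paper.

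\textbf{The paper's route.} The paper first proves a uniform law of large numbers over the compact class. It takes a finite $\delta$-net from compactness, uses uniform continuity of $h_f(x)=\|O(f,x)\|^p$ on $\mathcal F\times X$, applies Hoeffding's inequality with a union bound, and finishes with Borel--Cantelli. This gives $\sup_{f}|E_{\mathrm{test},p,n}(f)-E_{\mathcal O,p}(f)^p|\to 0$ almost surely. The conclusion then follows in one line from $E_{\mathcal O,p}(f_n)^p\le E_{\mathrm{test},p,n}(f_n)+\sup_f|\cdot|$. There the limiting quantity is the $L^p(\mu)$ residual $\int\|O(f_n,x)\|^p\,d\mu$, and uniqueness is assumed in the form $E_{\mathcal O,p}(f)=0\Rightarrow f=g$.

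\textbf{Your route.} You argue by contradiction: you extract a limit point $f^*$ with $E_{\mathcal O}(f^*)>0$, and localize to an open set $U$ on which the residuals along the subsequence stay above $\eta$. Full support then gives $\mu(U)>0$, and you need only the scalar weak law for $\mathbf 1_U(x_k)$.

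\textbf{What each buys.} Your route is more elementary and works for every $p>0$. It uses the full-support hypothesis exactly where it is needed. It also lands directly on the statement's $E_{\mathcal O}=\|O(f,\cdot)\|_{\mathcal O}$ for an arbitrary norm, e.g.\ the uniform one. The paper's formal proof never invokes full support, even though that hypothesis is what is required to pass from a vanishing $L^p(\mu)$ residual to a vanishing $\|\cdot\|_{\mathcal O}$ residual. What the paper's route buys is two-sided, uniform, almost-sure control over the whole class. That covers data-dependent $f_n$ immediately and in principle yields a Hoeffding-type rate, whereas your argument is purely qualitative.

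\textbf{Data-dependent $f_n$.} Your proposed fix works if you do the following. Start from $\Pb(E_{\mathcal O}(f_n)\ge\varepsilon)\not\to 0$. Cover the compact set $K\cap\{f: E_{\mathcal O}(f)\ge\varepsilon\}$, with $K$ the given compact set, by finitely many neighbourhoods $V_i$ with associated pairs $(U_i,\eta_i)$. Then apply the weak law to the finitely many indicators $\mathbf 1_{U_i}$. This is essentially a one-sided version of the paper's net argument. The conclusion there is convergence in probability, which is also all the paper's argument actually delivers for random $f_n$.
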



A more rigorous statement and proof and additional discussions can be found in Appendix \ref{sec:app:test_error}.

\section{Error Estimates and Verified Bounds}\label{sec:error_estimate}

\begin{table*}[ht!]
    \centering
    \begin{tabular}{cccccc}
        \toprule
        & \multicolumn{5}{c}{Stiffness Parameter} \\
        \cmidrule(lr){2-6}
        & $\mu=1$ & $\mu=2$ & $\mu=3$ & $\mu=4$ & $\mu=5$\\
        \midrule
        Reference error & 3.11E-4 & 2.06E-4 & 1.91E-4 & 7.0E-4 & 1.22E-3  \\
        Generalization bound & $\begin{bmatrix}
            1.66\text{E-2}\\ 1.69\text{E-2}
        \end{bmatrix}$ & $\begin{bmatrix}
            1.62\text{E-2}\\ 2.96\text{E-2}
        \end{bmatrix}$ & $\begin{bmatrix}
            2.36\text{E-2}\\ 6.94\text{E-2}
        \end{bmatrix}$ & $\begin{bmatrix}
            3.06\text{E-2}\\ 1.28\text{E-1}
        \end{bmatrix}$ &  $\begin{bmatrix}
            4.13\text{E-2}\\ 2.32\text{E-1}
        \end{bmatrix}$ \\
        \bottomrule
    \end{tabular}
    \caption{Van der Pol equation on $[0,1]$ for varying stiffness parameters $\mu$. We report the reference error (computed against a numerical solution based on RK4) and the certified solution bound obtained via Proposition~\ref{prop:ivp} using dReal. The certified bounds are conservative, with conservativeness increasing as stiffness grows.}
    \label{table:verifying_van_der_pol}
    \vspace{-0.2in}
\end{table*}

Sections \ref{sec:training_error} and \ref{sec:prob} establish qualitative convergence under compactness assumptions. We now derive quantitative generalization error estimates in the solution space that make these guarantees explicit. The resulting bounds isolate three ingredients: operator stability, residual control, and structural compactness of the hypothesis class.

\begin{theorem}[Generalization Error Estimate]\label{thm:generalization}
    Assume $X\subseteq \R^{d_X}$ is compact, $\Gc=C(X,Y)$ is equipped with the uniform norm, and $O$ admits a unique solution $g\in\Gc$. Assume the operator $O$ satisfies a stability condition: there exists $C_0>0$ such that
    \[\|f-g\|_\infty\leq C_0\|O(f,\cdot)\|_\Oc,\quad\text{for all }f\in\Gc,\]
    Let $\{a_k\}_{k=1}^n \subset X$ denote a sampling set,
    and define the fill distance \[\delta_n=\sup_{x \in X}\min_{k=1,\dots,n}\|x - a_k\|.\]
    
    Assume further that $f$ lies in a compact subset $\Fc \subset \Gc$. Then there exists a modulus of continuity $\omega_{\Fc}(\cdot)$, depending only on $\Fc$ (see Theorem \ref{thm:AA}), such that for all $f \in \Fc$, 
    \[|f(x) - f(y)| \le \omega_{\Fc}(\|x-y\|),
    ~\text{with }~\omega_{\Fc}(r) \to 0 \text{ as } r \to 0.\]
    
    Furthermore, there exists a constant $C>0$, depending only on $C_0$ and $\Fc$, such that
    \begin{align}
    \label{eq:generalization_representative}
    \|f - g\|_\infty\le C \Big(
    &\max_{k=1,\dots,n}\|\mathcal O(f,a_k)\| 
    +\omega_{\Fc}(\delta_n)\Big).
    \end{align}
\end{theorem}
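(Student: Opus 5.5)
The plan is to chain three estimates: the stability assumption, a comparison between $\|\cdot\|_\Oc$ and the sup norm of the residual, and a covering argument that passes from the sampled residuals to the sup norm. By stability, $\|f-g\|_\infty\le C_0\|O(f,\cdot)\|_\Oc$. I will assume, as in every case the paper considers ($\Oc$ carrying the uniform norm or an $L^p$ norm on the compact $X$), that $\|h\|_\Oc\le c_X\|h\|_\infty$, with $c_X=\max\{1,\mu(X)^{1/p}\}$. The problem therefore reduces to bounding $\sup_{x\in X}\|O(f,x)\|$ by the sampled residuals plus a fill-distance term.

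\textbf{Step 1 (moduli).} The first half of the statement is the Arzel\`a--Ascoli theorem (Theorem~\ref{thm:AA}). Since $\Fc$ is compact in $C(X,Y)$, it is equicontinuous, and $\omega_{\Fc}(r)=\sup_{f\in\Fc}\sup_{\|x-y\|\le r}|f(x)-f(y)|$ is finite and tends to $0$ as $r\to 0$.

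\textbf{Step 2 (residual modulus).} I also need a uniform modulus for the residual family $\{O(f,\cdot):f\in\Fc\}$. I will obtain it from continuity of $O$ on the compact set $\Fc\times X$, which makes $O$ uniformly continuous there. Concretely, for $\varepsilon>0$ there is $r>0$ such that $\|x-y\|\le r$ implies $\|O(f,x)-O(f,y)\|\le\varepsilon$ for every $f\in\Fc$. This holds by a contradiction argument: extract convergent subsequences $f_j\to f^*$, $x_j,y_j\to x^*$ and use joint continuity. This yields a modulus $\omega_O$ depending only on $\Fc$. I then redefine $\omega_{\Fc}$ as $\max(\omega_{\Fc},\omega_O)$, which still depends only on $\Fc$ and still tends to $0$. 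If one wants $\omega_O$ expressed through the modulus of $f$ itself, one would need $O$ to be locally Lipschitz in its arguments along $\Fc$, giving $\omega_O\le K\omega_{\Fc}$ with $K$ depending on $\Fc$. I will note this as an alternative, but the uniform-continuity route is cleaner.

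\textbf{Step 3 (covering).} For any $x\in X$, choose $k$ with $\|x-a_k\|\le\delta_n$. Then
\[
\|O(f,x)\|\le\|O(f,a_k)\|+\omega_{\Fc}(\delta_n)\le\max_{k}\|O(f,a_k)\|+\omega_{\Fc}(\delta_n).
\]
Taking the supremum over $x$ and combining with the stability estimate and the norm comparison gives \eqref{eq:generalization_representative} with $C=C_0c_X$.

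The main obstacle is Step 2. The statement phrases the modulus in terms of $f$, but the covering argument genuinely requires equicontinuity of the residual functions. The uniform-continuity argument on $\Fc\times X$ is what I would try first, because it needs only the standing continuity of $O$ together with compactness of $\Fc$. A secondary point is making explicit the comparison $\|\cdot\|_\Oc\lesssim\|\cdot\|_\infty$, which the statement leaves implicit.
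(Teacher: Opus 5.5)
Your proposal is correct and follows essentially the same route as the paper: stability, then the fill-distance covering step (the paper's Lemma~\ref{lem:fill_distance}), with the common modulus taken over the residual family $\{O(f,\cdot): f\in\Fc\}$. The paper gets that modulus by noting the residual family is compact in $C(X,Y)$ as a continuous image of $\Fc$ and then invoking Arzel\`a--Ascoli, which is equivalent to your uniform-continuity argument on $\Fc\times X$. The differences are minor clarifications on your side. The paper treats only the case where $\Oc$ carries the uniform norm (giving $C=C_0$) and silently redefines $\omega_{\Fc}$ as the residual modulus. Your norm comparison $\|\cdot\|_\Oc\le c_X\|\cdot\|_\infty$ and your choice $\max(\omega_{\Fc},\omega_O)$ make both points explicit and keep the two claims of the statement consistent.
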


\begin{remark}[Choice of norm]
\cref{thm:generalization} is stated in the uniform norm on $C(X,Y)$. Extending the result to other function-space norms requires corresponding compactness assumptions and stability estimates in those norms. In general, compactness properties and operator stability conditions depend sensitively on the chosen topology, which significantly complicates the analysis.

For $L_p$ norms with $1 \le p < \infty$ and $X$ of finite measure, a direct estimate follows from the uniform bound:
\[\|f-g\|_{L_p(X)}\le\lambda(X)^{1/p} \|f-g\|_{\infty},\]
\correction{where $\lambda$ is the Lebesgue measure defined on $\R^{d_X}$.}
Thus, the uniform generalization estimate immediately implies an $L_p$ bound, although potentially with conservative constants. For clarity and generality, we therefore formulate our main result in the uniform norm.
\end{remark}



We provide the proof in \cref{sec:app:thm4_proof}.
This bound decomposes into two components: (a) empirical residual error $\max_{k=1,...,n}\|O(f,a_k)\|_Y$, (b) 
compactness term $\omega_{\Fc}(\delta_n)$, which vanishes as the collocation points become dense.
Thus, provided the hypothesis class $\Hc$ is compact (via verification), and the collocation points are sufficiently concentrated, small residual error implies small solution error.


In practice, residual norms can be certified using formal verification techniques. Given a trained neural approximation $f$, one may compute a certified upper bound
\[\|\mathcal O(f,\cdot)\|_{\mathcal O}\le R_{cert},\]
where the bound is valid over the entire domain $X$. Such certificates can be obtained using bound-propagation methods, interval analysis, or SMT-based verification tools (see Section \ref{sec:related}). 

In this fully verified setting, residual control is no longer restricted to sampled collocation points. Consequently, the sampling density term $\omega_{\Fc}(\delta_n)$ in \eqref{eq:generalization_representative} vanishes, as the residual bound holds uniformly over $X$. The solution error estimate therefore simplifies to
\[\|f-g\|_\infty\le C_0\, R_{cert}.\]
Thus, uncertainty quantification for neural PDE solvers reduces to two equation-dependent ingredients: (a) A stability constant $C_0$ for the underlying PDE operator, (b) A certified uniform bound on the residual norm.

The constants appearing in these bounds depend on structural properties of the specific equation, domain geometry, and norm choice. Appendix \ref{app:case_studies} instantiates this abstract framework for representative elliptic, parabolic, and hyperbolic problems, deriving explicit stability constants and corresponding error estimates.

\section{Numerical examples}\label{sec:numerical}

\begin{table}
    \centering
    \label{your-table-label}
    \resizebox{\linewidth}{!}{
    \begin{tabular}{@{}lllll@{}} 
        \toprule
        PDE &
        \makecell{Boundary\\error} & \makecell{Residual\\error} & \makecell{Generalization\\bound} & \makecell{Reference\\error} \\ 
        \midrule
        2D Poisson & 1.74E-5 & 3.21E-2 & 4.03E-3 & 1.80E-3 \\ 
        (Prop. \ref{prop:elliptic}) & (20.51) & (2705s) & N/A & (1604s) \\ 
        \midrule
        Allen-Cahn & 2.28E-2 & 6.59E-4 & 4.09E-2 & 2.19E-2\\ 
        (Prop. \ref{prop:allen-cahn}) & (2691s) & (320s) & N/A & N/A\\
        \midrule
        Elasticity & 5.08E-3 & 7.22E-4 & 1.55E-2 & 1.06E-2\\ 
        (Prop. \ref{prop:elasticity}) & (4783s) & (195s) & N/A & N/A\\
        \bottomrule
    \end{tabular}}
    \caption{Verification of generalization error for 2D Poisson equation, Allen-Cahn equation, and linear elasticity equation: all errors are verified using autoLiRPA with the generalization error calculated using Propositions \ref{prop:elliptic}, \ref{prop:allen-cahn}, \ref{prop:elasticity}.
    }
    \label{table:verifying_3_examples}
    \vspace{-0.2in}
\end{table}

\begin{table*}[ht]
    \centering
    \begin{tabular}{@{}lllllll@{}} 
        \toprule
        \makecell{$\varepsilon|_{\partial\Omega}$ \\(certified)} & \makecell{$\varepsilon_t|_{\partial\Omega}$ \\(certified)} & \makecell{$\varepsilon(0,\cdot)$ \\(certified)} & \makecell{Residual\\bound} & \makecell{Generalization\\bound (Prop. \ref{prop:parabolic})} & \makecell{Generalization\\bound (Prop. \ref{prop:parabolic2})} & \makecell{Reference\\error} \\ 
        \midrule
        1.83E-4 & 2.24E-4 & 2.90E-4 & 3.16E-3 & 3.45E-3 & 7.45E-3 & 2.66E-3 \\ 
        (10.61s) & (10.32s) & (9.16s) & (2042s) & N/A & N/A & (951s)\\
        \bottomrule
    \end{tabular}
    \caption{Verification of generalization error for 1D heat equation. Let $\varepsilon:=u-u_{true}$ denote the errors corresponding to boundary conditions ($\partial\Omega$) and initial condition ($t=0$). All errors are certified under the uniform norm using autoLiRPA with the generalization error calculated using Propositions \ref{prop:parabolic} (uniform) and \ref{prop:parabolic2} ($L_2$). The reference error is reported in the $L_2$ norm.}
    \label{tab:heat1d}
    \vspace{-0.1in}
\end{table*}

\begin{table*}[ht]
    \centering
    \begin{tabular}{@{}lllllllll@{}} 
        \toprule
        \makecell{$\varepsilon|_{\partial\Omega}$ \\(certified)} & \makecell{$\varepsilon_t|_{\partial\Omega}$ \\(certified)} & \makecell{$\varepsilon_{tt}|_{\partial\Omega}$ \\(certified)} & \makecell{$\varepsilon(0,\cdot)$ \\(certified)} & \makecell{$\nabla\varepsilon(0,\cdot)$ \\(certified)} & \makecell{$\nabla\varepsilon\cdot\hat{n}|_{\partial\Omega}$ \\(certified)} & \makecell{Residual\\bound} & \makecell{Generalization\\bound (Prop. \ref{prop:hyperbolic})} & \makecell{Reference\\error} \\ 
        \midrule
        1.44E-5 & 2.89E-5 & 3.06E-5 & 2.28E-5 & 8.89E-5 & 3.38E-5 & 7.19E-5 & 4.65E-4 & 2.51E-4 \\ 
        (29.40s) & (39.33) & (60.82s) & (21.62s) & (8.40s) & (15.71s) & (2064s) & N/A & (1133s)\\
        \bottomrule
    \end{tabular}
    \caption{Verification of generalization error for 1D wave equation. Let $\varepsilon:=u-u_{true}$ denote the errors corresponding to boundary conditions ($\partial\Omega$) and initial condition ($t=0$). All errors are verified using autoLiRPA with the generalization error calculated using Proposition \ref{prop:hyperbolic}. }
    \label{tab:wave1d}
    \vspace{-0.2in}
\end{table*}

We now describe the common certification pipeline used in all examples below. Each equation is treated as an instantiation of the abstract generalization framework developed in Section \ref{sec:error_estimate}.
For a given equation, the procedure consists of four steps:
\paragraph{1. Training.} 
    We train a neural approximation $f$ using PINNs \citep{raissi2019physics,lagaris1998artificial} or ELMs \citep{huang2006extreme,chen2022bridging,dong2021local} depending on the problem. 
    We note that PINNs are better at solving nonlinear PDEs, and ELMs excel in obtaining high-accuracy solutions for low-dimensional systems. Recent work has also shown its potential for solving high-dimensional PDEs \citep{wang2024extreme}. 
    Training details are provided in Appendix~\ref{sec:app:implementation}.
\paragraph{2. Error Certification.} 
    Using formal verification tools such as dReal \citep{gao2013dreal} and autoLiRPA \citep{xu2020automatic}, we compute a certified upper bound $\|O(f,\cdot)\|_{O} \le R_{cert},$ valid over the entire domain.
    While dReal requires the user to specify an error tolerance threshold and search the domain for counterexamples, autoLiRPA requires the user to specify a subdomain and compute the error bound. We also post the time (in seconds) needed to perform certification for each error term.
\paragraph{3. Generalization Bound.} 
    Appendix~\ref{app:compactness_validation} empirically verifies compactness assumptions in ELMs. The Lipschitz bound $L \le \|\beta\| \|W\|$ remains stable or decreases as the network width increases.
    We apply Theorem \ref{thm:generalization} together with equation-specific stability estimates derived in Appendix \ref{app:case_studies} to obtain a solution-space error bound
    $\|f-g\|_\Gc\le C_0 R_{cert}.$
\paragraph{4. Reference Comparison.} For validation purposes only, we compare the certified solution bound against a reference solution, either computed analytically or via numerical solvers. The certified bound is obtained without access to the true solution.

\subsection{Ordinary Differential Equation}

\begin{example}[Van der Pol equation]
Consider the Van der Pol system
\begin{equation}
\begin{aligned}
\dot{x}_1 &= x_2, \\
\dot{x}_2 &= -x_1 + \mu (1 - x_1^2) x_2, \\
x(0) &= [1.0, 0.0]^T,
\end{aligned}
\end{equation}
where $\mu > 0$ controls the stiffness.
\end{example}

We train a neural approximation on $[0,1]$ using a
two-layer network with 10 neurons per layer.
Full training details are provided in
Appendix~\ref{sec:app:implementation}.
Residual bounds are certified using dReal. Applying Proposition \ref{prop:ivp} (see Remark \ref{rem:validated_sol_ivp}) in Appendix \ref{app:case_studies} yields a certified solution bound over $[0,1]$. 
Certified residual and solution bounds are reported in Table~\ref{table:verifying_van_der_pol}.

\subsection{Elliptic PDE}

\begin{example}[Poisson's equation]
Consider a two-dimensional Poisson's equation 
$\Delta u = f, $
on the domain $D=(0,1)\times (0,1)$ with boundary conditions $u\equiv 0$ on $\partial\Omega$ and the source term $f(x,y) = \sin(\pi x)\sin(\pi y)$.  
This corresponds to the true solution
\[
u_{true} = -\frac{1}{2\pi^2}\sin(\pi x)\sin(\pi y).
\]
\end{example}


We approximated the solution using ELMs and report the verification results in Tables \ref{table:verifying_3_examples}.
Residual error bounds and boundary error bounds are certified using autoLiRPA. Applying Proposition \ref{prop:elliptic} (see remark \ref{rem:est2}) in Appendix \ref{app:case_studies} yields a certified solution bound over $\Omega$. Our results empirically show that the generalization bound is greater than the true error, but does not significantly overestimate it.

\subsection{Parabolic PDE}

\begin{example}[Heat equation]\label{ex:heat}
Consider an $n$-dimensional heat equation defined on $\Omega=[0,1]^n$
\begin{equation}
    \begin{aligned}
        u_{t} - \alpha\Delta u = 0\quad &\text{in }\Omega\times [0,T], \\
        u(x,0) = \prod_{i=1}^{n}\sin(\pi x_i) \quad &\text{on }\Omega,\\
        u \equiv 0 \quad &\text{on }\partial\Omega\times (0,T),
    \end{aligned}
\end{equation}
that has the following analytic solution
$$
u_{true}\correction{(x,t)} = e^{-n\alpha\pi^2t}\prod_{i=1}^{n}\sin(\pi x_i).
$$   
\end{example}

We set $\alpha=0.1$ in our experiments and approximate the solution using ELMs, plotted in Figure \ref{fig:heat_wave1D}. Uniform residual, initial, and boundary bounds are certified using autoLiRPA. Proposition~\ref{prop:parabolic} provides a uniform generalization error, while Proposition~\ref{prop:parabolic2} yields an $L_2$-based bound.
Table~\ref{tab:heat1d} reports certified residual bounds with $n=1$, the resulting certified solution bounds under both propositions, and the reference error computed as the uniform norm difference between the neural and analytic solutions. We post additional results in \cref{tab:heat2}, where we compute $L_2$ errors of each source of error to provide a tighter bound based on \cref{prop:parabolic}.

\subsection{Hyperbolic PDE}

\begin{example}[Wave equation]
Consider a 1-dimensional wave equation defined on $\Omega=[0,1]$
\begin{equation}
    \begin{aligned}
        u_{tt} - \alpha\Delta u + cu = 0\quad &\text{in }\Omega\times [0,T], \\
        u(x,0) = \sin(\pi x), u_t(x,0) = 0 \quad &\text{on }\Omega,\\
        u \equiv 0 \quad &\text{on }\partial\Omega\times (0,T),
    \end{aligned}
\end{equation}
that has the following analytic solution
$$
u_{true}(x,t) = \cos(t\sqrt{\alpha\pi^2+c})\sin(\pi x).
$$   
\end{example}
We set $\alpha=0.1,c=0.2$ in our experiments and approximate the solution using ELMs, plotted in Figure \ref{fig:heat_wave1D}. Uniform residual, initial, and boundary bounds are certified with autoLiRPA. We provide further results with a tighter $L_2$-based estimate using Proposition \ref{prop:hyperbolic} in Table \ref{tab:wave1d_2}.

\subsection{Nonlinear PDE}

\begin{example}[Allen-Cahn equation]
Consider a 1D Allen-Cahn equation defined on $\Omega=[-1,1]$ with periodic boundary conditions
\begin{equation}
    \begin{aligned}
        u_{t} + \rho u(u^2-1) = \nu u_{xx}\quad &\text{in }\Omega\times [0,T], \\
        u(x,0) = (1-x^2)^2\cos(2\pi x) \quad &\text{on }\Omega.
    \end{aligned}
\end{equation}
We set $\rho=1$ and $\nu=0.01$ in our example.
This problem does not admit a closed-form analytic solution. For reference, we compare against a numerical solution computed via Crank-Nicolson \citep{Crank_Nicolson_1947}.
\end{example}
We approximate the solution using PINNs. Uniform residual and initial-condition bounds are certified with autoLiRPA. In the neural network architecture, we modify the input space by mapping $(x,t)$ to $(\sin(\pi x),\cos(\pi x),t)$, which forces periodicity in the solution. Applying Proposition~\ref{prop:allen-cahn} yields a certified solution bound in the $L_2$ norm.
Our results are posted in Table \ref{table:verifying_3_examples}.


\subsection{Multi-Output PDE}

\correction{The proposed framework naturally extends to systems of PDEs. Error certification can be performed component-wise, and verification cost scales linearly with output dimension.}

\begin{example}[Linear Elasticity equation]
Consider a 2D Linear Elasticity equation defined on $\Omega=[0,1]^2$
\begin{equation}
\begin{split}
    -\nabla\cdot\sigma(u)=f\quad \text{in }\Omega,\quad
    u=0\quad &\text{in }\partial\Omega,
\end{split}
\end{equation}
where $f:\Omega\rightarrow\R^n$ is a body force and
\begin{equation*}
\sigma(u)
=
2\mu \varepsilon(u)
+
\lambda \operatorname{tr}(\varepsilon(u))I
\end{equation*}
is the Cauchy stress tensor. Here
\begin{equation*}
\varepsilon(u)
=
\frac12\left(\nabla u+\nabla u^\top\right)
\end{equation*}
denotes the strain tensor, while $\mu>0$ and $\lambda\ge 0$ are the Lamé parameters. The source term $f$ is given by
\[
f(x,y)=\begin{bmatrix}
    2\mu\pi^2s(x,y)-(\lambda+\mu)\pi^2(c(x,y)^2-s(x,y)^2)\\
    2\mu\pi^2s(x,y)-(\lambda+\mu)\pi^2(c(x,y)^2-s(x,y)^2)\\
\end{bmatrix},
\]
where $s(x,y)=\sin(\pi x)\sin(\pi y)$ and $c(x,y)=\cos(\pi x)\cos(\pi y)$. The analytical solution is
\begin{equation*}
    u_{true}(x,y)=\begin{bmatrix}s(x,y)&c(x,y)\end{bmatrix}^T.
\end{equation*}
\end{example}
We set $\lambda=1$ and $\mu=1$ in our example and approximate the solution using ELMs. Residual and boundary error bounds are certified using autoLiRPA. Our results are reported in Table \ref{table:verifying_3_examples}.

\section{Conclusions}

We develop a generalization stability framework for neural differential equation solvers. Under compactness and operator stability assumptions, certified residual bounds can be converted into explicit solution-space 
error guarantees. This provides a principled pathway to generalization bounds on the true solution.
We provide a certification pipeline: residual bounds verified by formal methods can be systematically converted into uniform or $L_2$ solution-space guarantees.
Several directions remain open. A deeper understanding of compactness mechanisms in modern neural architectures, sharper stability constants for specific equation classes, and extensions to operator-learning or stochastic PDE settings would further strengthen the theoretical foundations of certified scientific machine learning.

\newpage 

\bibliography{biblo}

\newpage

\onecolumn

\appendix

\section{Compactness in the uniform norm}\label{sec:unifrom norm}

\begin{theorem}[Arzelà–Ascoli theorem \citep{arzela1895sulle}]\label{thm:AA}
    Suppose that $\mathcal F$ is a set of continuous functions from a compact metric space set $X$ with metric $\dist_X$ to a metric space $Y$ with metric $\dist_Y$. The following are equivalent:
    \begin{enumerate}
        \item $\closure{\mathcal F}$ is compact with respect to the uniform metric.
        \item The following hold:
        \begin{enumerate}
            \item  $\mathcal F$ is pointwise relatively compact. i.e for all $x\in X$ the set $\closure{\setB{f(x) : f \in \mathcal F}}$ is compact.
            \item $\mathcal F$ is uniformly equicontinuous.  i.e for all $\epsilon>0$ there is a $\delta >0$ such that for all $f \in \mathcal F$ and all $x\in X$ we have $f(\B[X]{\delta}{x})\subseteq \B[Y]{\epsilon}{f(x)}$
        \end{enumerate}
        \item The following hold:
        \begin{enumerate}
            \item  $\mathcal F$ is pointwise relatively compact. i.e for all $x\in X$ the set $\closure{\setB{f(x) : f \in \mathcal F}}$ is compact.
            \item $\mathcal F$ is equicontinuous.  i.e for all $\epsilon>0$ and for all $x\in X$  there is a $\delta >0$ such that for all $f \in \mathcal F$ we have $f(\B[X]{\delta}{x})\subseteq \B[Y]{\epsilon}{f(x)}$ 
        \end{enumerate}
    \end{enumerate}
\end{theorem}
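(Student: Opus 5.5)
We prove the cycle of implications (1)$\Rightarrow$(2)$\Rightarrow$(3)$\Rightarrow$(1). Throughout we use that a subset of a metric space has compact closure if and only if it is totally bounded and every sequence in it has a Cauchy subsequence (when the ambient space is complete), or equivalently every sequence has a subsequence converging in the ambient space.

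\textbf{(1)$\Rightarrow$(2).} Pointwise relative compactness comes from the evaluation map $e_x:f\mapsto f(x)$. It is $1$-Lipschitz from the uniform metric to $\dist_Y$. Hence $e_x(\closure{\Fc})$ is compact, and it contains $\setB{f(x):f\in\Fc}$, so the closure of the latter is compact.

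For uniform equicontinuity, fix $\epsilon>0$. By total boundedness of $\closure{\Fc}$, choose $f_1,\dots,f_m\in\Fc$ forming an $\epsilon/3$-net in the uniform metric. Each $f_i$ is uniformly continuous on the compact space $X$ (Heine--Cantor), so there is a common $\delta$ that works for all $i$ at tolerance $\epsilon/3$. A $3\epsilon$-argument, comparing $f$ with the nearest $f_i$, then gives the uniform modulus for every $f\in\Fc$.

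\textbf{(2)$\Rightarrow$(3).} This is immediate, since uniform equicontinuity implies pointwise equicontinuity: take the same $\delta$ for every $x$.

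\textbf{(3)$\Rightarrow$(1).} This is the main obstacle, and I would do it by a diagonal argument.

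First, upgrade equicontinuity to uniform equicontinuity by compactness of $X$. For given $\epsilon$, cover $X$ by balls $\B[X]{\delta_x/2}{x}$ and extract a finite subcover. A Lebesgue-number argument then gives a single $\delta$ with $\dist_Y(f(y),f(z))<2\epsilon$ whenever $\dist_X(y,z)<\delta$, for all $f\in\Fc$.

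Next, $X$ is compact metric, hence separable; let $D=\setB{d_j}$ be a countable dense subset. Take any sequence $(g_n)$ in $\Fc$. Using pointwise relative compactness at $d_1,d_2,\dots$ successively, extract nested subsequences, and take the diagonal subsequence $(h_n)$, so that $h_n(d_j)$ converges for every $j$.

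Then show $(h_n)$ is uniformly Cauchy. Given $\epsilon$, take $\delta$ from uniform equicontinuity and a finite $\delta$-net $d_{j_1},\dots,d_{j_r}\subseteq D$ of $X$. For every $x$, pick $d_{j_s}$ with $\dist_X(x,d_{j_s})<\delta$ and split
\[
\dist_Y(h_n(x),h_m(x))\le \dist_Y(h_n(x),h_n(d_{j_s}))+\dist_Y(h_n(d_{j_s}),h_m(d_{j_s}))+\dist_Y(h_m(d_{j_s}),h_m(x)).
\]
The first and third terms are controlled by equicontinuity, and the middle term is small for large $n,m$ because there are only finitely many net points.

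Finally, identify the limit. Pointwise limits exist: $h_n(x)$ is Cauchy and lies eventually in a neighborhood of the compact set $\closure{\setB{f(x)}}$, so it has a convergent subsequence and therefore converges. The limit is continuous as a uniform limit of continuous functions. Hence every sequence in $\Fc$ has a uniformly convergent subsequence with limit in $C(X,Y)$.

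Sequences in $\closure{\Fc}$ reduce to this case by choosing nearby elements of $\Fc$. This gives sequential compactness, and hence compactness, of $\closure{\Fc}$.

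The delicate points, which I expect to need care, are twofold. One is the pointwise convergence of the Cauchy sequence when $Y$ is not assumed complete, handled via pointwise relative compactness as above. The other is the Lebesgue-number upgrade of equicontinuity.
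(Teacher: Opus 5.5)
The paper gives no proof of this statement. It quotes the classical Arzel\`a--Ascoli theorem with a citation and only uses its consequences later: compact families are uniformly equicontinuous, and equi-Lipschitz families pinned at a point have compact closure.

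Your argument is the standard textbook proof, and it is correct. The following steps all go through:
\begin{itemize}
\item the evaluation-map argument for pointwise relative compactness;
\item the finite-net and Heine--Cantor argument for uniform equicontinuity in (1)$\Rightarrow$(2);
\item the half-radius covering upgrade from pointwise to uniform equicontinuity;
\item the diagonal extraction over a countable dense set;
\item the uniform Cauchy estimate via a finite $\delta$-net drawn from that dense set;
\item the final reduction from sequences in $\closure{\mathcal F}$ to sequences in $\mathcal F$.
\end{itemize}

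A real strength of your route is that it is sequential rather than ``totally bounded plus complete.'' That is exactly what this statement needs, since $Y$, and hence $C(X,Y)$, is not assumed complete. Total boundedness of $\mathcal F$ alone would not give compact closure in $C(X,Y)$ (take $X$ a point and $Y=\mathbb{Q}$). Pointwise relative compactness is what keeps the limit function $Y$-valued.

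One justification needs fixing. When you show that $h_n(x)$ converges, do not appeal to $h_n(x)$ lying ``eventually in a neighborhood of the compact set.'' In a non-complete $Y$, a Cauchy sequence inside a neighborhood of a compact set need not converge. The correct reason is immediate: $h_n\in\mathcal F$, so $h_n(x)$ lies in the compact set $\closure{\{f(x): f\in\mathcal F\}}$ itself. It therefore has a convergent subsequence, and being Cauchy it converges.

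Your opening criterion is also phrased loosely: ``totally bounded'' and ``every sequence has a Cauchy subsequence'' are the same condition. The criterion you actually use is the right one: every sequence in $\mathcal F$ has a subsequence converging in the ambient space.
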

\correction{\Cref{thm:AA} is a classic theorem in analysis. It characterizes the compact subsets of $C(X,Y)$ as the uniformly equicontinuous and pointwise relatively compact sets. }
\correction{
Essentially, uniform equicontinuity of $\mathcal F$ can be thought of as: there is a $L_{\max}$ such that every $f\in \mathcal F$ we have $\rho_X(f(x),f(y))\leq L_{\max}\rho_Y(x,y)$ for every $x,y\in X$. That is, elements of $\mathcal F$ have a uniformly bounded Lipschitz constant; sometimes this is called equi-Lipschitz. 
}

\correction{
When the space $Y$ is $\R^d$ endowed with the Euclidean norm, the sets $\mathcal F$ which are pointwise relatively compact are the pointwise bounded sets, that is: for all $x\in X$ $\sup_{f\in \mathcal F} \norm{f(x)}< \infty$. If the space $X$ is also connected, we can weaken this condition even more to: \emph{for some} $x\in X$  $\sup_{f\in \mathcal F} \norm{f(x)}< \infty$. The following result proves this for an even weaker condition.}

\begin{lemma}[Pined at a point lemma]\label{thm:localGraphConvergece}
    Suppose that $X$ is a connected compact subset of $\R^{d_{X}}$ containing at least two points and let $Y=\R^{d_Y}$.  Let $\mathcal F$ be a set of continuous functions from $X$ to $Y$. 
    Assume that there are compact sets $K\subseteq X$ and $J\subseteq Y$ such that every $f\in \mathcal F$ has $f(K)\cap J\neq \emptyset$. For example, $K=\setB{x}$ and $J=\closure{\B[Y]{r}{y}}$ for some particular points $x\in X$, $y\in Y$ and $r>0$ is a realistic assumption.
    
    If $\mathcal F$ is uniformly equicontinuous then, $\mathcal F$ is uniformly bounded and thus $\closure{\mathcal F}$ is compact. 
\end{lemma}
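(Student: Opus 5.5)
The plan is to show that uniform equicontinuity, compactness of $X$, and connectedness together give a uniform bound on the oscillation $\sup_{x,z\in X}\norm{f(x)-f(z)}_Y$ that does not depend on $f$. The pinning condition $f(K)\cap J\neq\emptyset$ then anchors each $f$ inside a fixed bounded set. Once $\mathcal F$ is uniformly bounded, it is pointwise relatively compact in $Y=\R^{d_Y}$ by Heine–Borel. Together with uniform equicontinuity, \cref{thm:AA} (item 2 $\Rightarrow$ item 1) then gives that $\closure{\mathcal F}$ is compact.

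\textbf{Step 1: uniform modulus.} Take $\epsilon=1$ in the definition of uniform equicontinuity. This gives $\delta>0$ such that $\norm{x-z}<\delta$ implies $\norm{f(x)-f(z)}_Y<1$ for all $f\in\mathcal F$.

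\textbf{Step 2: chaining.} By compactness, cover $X$ by finitely many open balls $\B[X]{\delta/2}{c_1},\dots,\B[X]{\delta/2}{c_m}$ with centers $c_i\in X$. Form the graph on $\{1,\dots,m\}$ with an edge $i\sim j$ whenever the two balls intersect within $X$. This graph is connected. Otherwise, the union of the balls in one component, intersected with $X$, and the union of the rest would be two disjoint nonempty relatively open sets covering $X$, contradicting connectedness of $X$.

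\textbf{Step 3: oscillation bound.} Take any $x,z\in X$. Pick $x\in\B[X]{\delta/2}{c_i}$ and $z\in\B[X]{\delta/2}{c_j}$, and take a path $i=i_0,i_1,\dots,i_\ell=j$ in the graph with $\ell\le m-1$. Choose points $w_k\in X$ lying in consecutive overlapping balls. Then every consecutive pair in the chain $x, c_{i_0}, w_1, c_{i_1},\dots, c_{i_\ell}, z$ is within distance $<\delta$. The chain has at most $2m+1$ steps, so $\norm{f(x)-f(z)}_Y\le 2m+1=:M$ for every $f\in\mathcal F$.

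\textbf{Step 4: anchoring.} For each $f$, choose $k_f\in K$ with $f(k_f)\in J$. Since $J$ is compact, $J\subseteq\closure{\B[Y]{R}{0}}$ for some $R$. Then $\norm{f(x)}_Y\le \norm{f(k_f)}_Y+M\le R+M$ for all $x\in X$ and all $f\in\mathcal F$. This is uniform boundedness. Note that compactness of $K$ is not actually needed; only $K\subseteq X$ is used.

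\textbf{Step 5: conclusion.} Pointwise relative compactness follows since closed bounded sets in $\R^{d_Y}$ are compact. Applying \cref{thm:AA} finishes the proof.

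The main obstacle is Step 2, the connectedness-of-cover argument. One must check that the components give relatively open, disjoint, nonempty subsets of $X$. Disjointness needs care, since balls in different components do not meet inside $X$ by construction, so their traces on $X$ are disjoint. The hypothesis that $X$ contains at least two points plays no essential role; it only excludes trivialities.
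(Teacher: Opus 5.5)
Your proof is correct and takes the same route as the paper. Uniform equicontinuity, together with connectedness and compactness of $X$, gives a chaining bound on the oscillation of every $f\in\mathcal F$ that does not depend on $f$; each $f$ is then anchored at a point of $K$ whose image lies in the bounded set $J$, and your finite-cover overlap-graph construction rigorously supplies the chaining step that the paper only outlines (you also make the final appeal to Arzel\`a--Ascoli explicit).
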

\begin{proof}
    Since  $\mathcal F$ is uniformly equicontinuous we have that 
    \[
    \forall \epsilon >0 \, \exists \delta(\epsilon)=\delta >0 \, \forall x\in X \forall f\in \mathcal F \, \text{ we have } f\round{\B[X]{\delta}{x}} \subseteq \B[Y]{\epsilon}{f(x)}
    \]
    
    Define
    \[
    \omega(\epsilon) = \sup\setB{\norm{f(x)-f(z)}_Y:f \in \mathcal F, x,z\in X, \norm{x-z}_X<\epsilon }.
    \]
        Note that $\omega$ is an non-decreasing function.
    One can argue that $\omega(\epsilon) < \infty$ for all $\epsilon>0$. However this proof is laborious, as an outline one can first show that for all $\eta>0$ there is a $N\in \N$ with for all $x,z\in X$ there is sequence satisfying $\norm{x_n-x_{n+1}} <\eta$ with $x_0 =x$ and $x_{N+1}=z$. Assuming this, one quickly sees for all $\epsilon >0$ we can pick $\eta=\delta(\epsilon)$ and we get, for all $f \in \mathcal F$, $x,z\in X, \norm{x-z}_X<\epsilon$
    \[
    \norm{f(x)-f(z)}\leq \sum_{n=0}^{N} \norm{f(x_n)-f(x_{n+1})} < \epsilon (N+1)
    \]
    and so $\omega(\epsilon) < \infty$ must be the case.
    
    Regardless, $\omega$ should be thought of as an ``inverse" to $\delta(\epsilon)$, for $\omega$ satisfies 
    \[
    \forall \epsilon >0 \, \forall x\in X \, \forall f\in \mathcal F \text{ we have } f\round{\B[X]{\epsilon}{x}} \subseteq \B[Y]{\omega(\epsilon)}{f(x)}
    \]
    or in other symbols
    \[
    \forall \epsilon >0 \, \forall x\in X \, \forall f\in \mathcal F \text{ we have } \norm{x-z}_X <\epsilon \implies \norm{f(x)-f(z)} < \omega(\epsilon).
    \]
    Let $x\in X$, $z\in J$ be arbitrary. Since $J$ is compact we know that it is bounded so let $M_J = \max_{y\in J}\norm{y}_Y$. We have, for all $f\in \mathcal F$
    \[
        \norm{f(x)}_Y \leq \norm{f(z)}_Y + \norm{f(x)-f(z)}_Y \leq M_J+\omega(\operatorname{diam}(X))
    \]
    where $\operatorname{diam}(X)=\sup_{x',z'\in X}\norm{x'-z'}_X<\infty$.
    The RHS is bounded and independent of $f$ and $x$. This shows that $\mathcal F$ is uniformly bounded.
\end{proof}

\correction{The condition with the compact sets $K$, $J$ a geometric interpretation. If we identify every function $f\in \mathcal F$ as its graph, $\operatorname{Gr(f)= \set{(x,f(x)): x\in X}} \subseteq X\times Y$, then the condition means that the graph of every $f\in \mathcal F$ goes through the compact ``rectangle''  $K\times J$.  }

\correction{We now connect this result to \cref{thm:sound equtaion solve} and the uniform compactness term, $\kappa(f)=\max\setB{0,L(f)-L_{\max}}+\norm{f(x_0)-y_0}$. We observe that if $\sup_{f\in \mathcal F} \kappa(f) < r <\infty$ then, (i) we see that $\mathcal F$ is equi-Lipschitz as $L(f)< L_{\max} +r $ for all $f\in \mathcal F$ and (ii) the $K$, $J$ condition is satisfied for $K=\set{x_0}$, $J=\closure{\B[Y]{r}{y_0}}$ since $\norm{f(x_0)-y_0} <r$. Therefore $\sup_{f\in \mathcal F} \kappa(f) <\infty$ implies that $\closure{\mathcal F}$ is compact. 
}

\section{Estimation of Lipschitz Constants}\label{sec:Estimation of Lipschitz Constants}
Throughout this section, vector norms are the \(\ell_\infty\) norms. If
\(A=(a_{ij})\) is a matrix, \(\lVert A\rVert_\infty\) denotes the induced
\(\ell_\infty\) operator norm,
\[
    \lVert A\rVert_\infty=\max_i\sum_j |a_{ij}|.
\]
For a map \(f:X\to\mathbb{R}^m\), write
\[
    \operatorname{Lip}_X(f)
    :=\sup_{x\neq y\in X}
    \frac{\lVert f(x)-f(y)\rVert_\infty}{\lVert x-y\rVert_\infty},
\]
with the convention that this supremum is \(0\) if \(X\) has at most one point.
When the domain is clear, write \(\operatorname{Lip}(f)\). For a scalar map
\(\phi:\mathbb{R}\to\mathbb{R}\), define its componentwise extension by
\[
    \widetilde\phi(z_1,\ldots,z_m):=(\phi(z_1),\ldots,\phi(z_m)).
\]

\begin{lemma}[Basic Lipschitz estimates]\label{lem:basic-lipschitz-estimates}
Let \(X\subset\mathbb{R}^n\) be nonempty, and let \(f,g:X\to\mathbb{R}^m\) be bounded
Lipschitz maps. Then
\[
    \operatorname{Lip}_X(f\odot g)
    \le
    \Bigl(\sup_{x\in X}\lVert f(x)\rVert_\infty\Bigr)\operatorname{Lip}_X(g)
    +
    \Bigl(\sup_{x\in X}\lVert g(x)\rVert_\infty\Bigr)\operatorname{Lip}_X(f).
\]
If \(h:X\to Y\) and \(q:Y\to Z\) are Lipschitz maps between metric spaces, then,
using the corresponding metrics,
\[
    \operatorname{Lip}(q\circ h)\le \operatorname{Lip}(q)\operatorname{Lip}(h).
\]
Finally, if \(f:X\to\mathbb{R}^m\) is Lipschitz and
\(\phi:\mathbb{R}\to\mathbb{R}\) is Lipschitz on an interval containing every
component of every point of \(f(X)\), with Lipschitz constant \(L_\phi\) on that
interval, then
\[
    \operatorname{Lip}_X(\widetilde\phi\circ f)
    \le L_\phi\operatorname{Lip}_X(f).
\]
\end{lemma}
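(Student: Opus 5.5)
We prove the three estimates of \cref{lem:basic-lipschitz-estimates} separately. Each follows by adding and subtracting a cross term, or by chaining the defining inequality of the Lipschitz constant, and then taking the supremum over pairs $x\neq y$. If $X$ has at most one point, every Lipschitz constant is $0$ by convention and all three inequalities are trivial, so we assume $X$ has at least two points.

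For the Hadamard product, fix $x\neq y$ in $X$ and write
\[
f(x)\odot g(x)-f(y)\odot g(y)=f(x)\odot\bigl(g(x)-g(y)\bigr)+\bigl(f(x)-f(y)\bigr)\odot g(y).
\]
In the $\ell_\infty$ norm we have $\lVert a\odot b\rVert_\infty\le\lVert a\rVert_\infty\lVert b\rVert_\infty$, since each entry satisfies $|a_ib_i|\le\lVert a\rVert_\infty\lVert b\rVert_\infty$. Using this together with the triangle inequality, the left-hand side is bounded by
\[
\bigl(\sup_X\lVert f\rVert_\infty\bigr)\lVert g(x)-g(y)\rVert_\infty+\bigl(\sup_X\lVert g\rVert_\infty\bigr)\lVert f(x)-f(y)\rVert_\infty .
\]
Dividing by $\lVert x-y\rVert_\infty$ and taking the supremum gives the claim. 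The suprema are finite because $f$ and $g$ are assumed bounded.

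For the composition, fix $x\neq y$. Then $d_Z(q(h(x)),q(h(y)))\le \operatorname{Lip}(q)\,d_Y(h(x),h(y))\le\operatorname{Lip}(q)\operatorname{Lip}(h)\,d_X(x,y)$. The first step needs no case split when $h(x)=h(y)$, because both sides are then $0$.

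For the componentwise activation, let $I$ be the interval on which $\phi$ is $L_\phi$-Lipschitz. For each component $i$, both $f_i(x)$ and $f_i(y)$ lie in $I$ by hypothesis, so $|\phi(f_i(x))-\phi(f_i(y))|\le L_\phi|f_i(x)-f_i(y)|\le L_\phi\lVert f(x)-f(y)\rVert_\infty$. Taking the maximum over $i$ gives $\lVert\widetilde\phi(f(x))-\widetilde\phi(f(y))\rVert_\infty\le L_\phi\operatorname{Lip}_X(f)\lVert x-y\rVert_\infty$.

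None of these steps is a real obstacle. The one point that needs care is the third estimate: the Lipschitz bound for $\phi$ is only assumed on $I$, so it may be applied only to pairs of points inside $I$. The hypothesis that $I$ contains every component of every point of $f(X)$ is exactly what guarantees this, and the proof should invoke it explicitly.
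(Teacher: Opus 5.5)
Your proposal is correct and follows the paper's proof essentially line for line. It uses the same add-and-subtract decomposition with $\lVert u\odot v\rVert_\infty\le\lVert u\rVert_\infty\lVert v\rVert_\infty$ for the product, direct chaining of the definitions for the composition, and a componentwise use of the interval-restricted Lipschitz bound for $\widetilde\phi\circ f$. Your explicit treatment of the one-point case and of exactly where the interval hypothesis enters are small clarifications the paper leaves implicit.
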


\begin{proof}
For the product estimate, write
\[
    f(x)\odot g(x)-f(y)\odot g(y)
    =f(x)\odot\bigl(g(x)-g(y)\bigr)
    +\bigl(f(x)-f(y)\bigr)\odot g(y).
\]
Since \(\lVert u\odot v\rVert_\infty\le
\lVert u\rVert_\infty\lVert v\rVert_\infty\), division by
\(\lVert x-y\rVert_\infty\) and taking suprema gives the claim. The ordinary
composition estimate follows directly from the definitions. For scalar
componentwise composition,
\[
\begin{aligned}
    \lVert \widetilde\phi(f(x))-\widetilde\phi(f(y))\rVert_\infty
    &=\max_i |\phi(f_i(x))-\phi(f_i(y))|  \\
    &\le L_\phi\max_i |f_i(x)-f_i(y)|
    =L_\phi\lVert f(x)-f(y)\rVert_\infty .
\end{aligned}
\]
Taking suprema over \(x\neq y\) proves the estimate.
\end{proof}

\begin{proposition}[Derivative estimate for componentwise nonlinearities]
\label{prop:componentwise-derivative-estimate}
Let \(U\subset\mathbb{R}^n\) be open, let \(X\subset U\) be nonempty, and let
\(f:U\to\mathbb{R}^m\) be \(C^1\). Fix \(j\in\{1,\ldots,n\}\). Assume that
\(f|_X\) and \((\partial_j f)|_X\) are bounded and Lipschitz. Let
\(\phi:\mathbb{R}\to\mathbb{R}\) be \(C^1\), and assume that \(\phi'\) is
Lipschitz on an interval containing every component of every point of \(f(X)\),
with Lipschitz constant \(L_{\phi'}\) on that interval. Then
\[
    \partial_j(\widetilde\phi\circ f)(x)
    =\bigl(\widetilde{\phi'}(f(x))\bigr)\odot \partial_j f(x),
\]
and
\[
\begin{aligned}
    \operatorname{Lip}_X\bigl(\partial_j(\widetilde\phi\circ f)\bigr)
    \le{}&
    \Bigl(\sup_{x\in X}\lVert \widetilde{\phi'}(f(x))\rVert_\infty\Bigr)
    \operatorname{Lip}_X(\partial_j f) \\
    &+L_{\phi'}
    \Bigl(\sup_{x\in X}\lVert \partial_j f(x)\rVert_\infty\Bigr)
    \operatorname{Lip}_X(f).
\end{aligned}
\]
\end{proposition}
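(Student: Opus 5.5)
The plan is to prove the derivative formula by the ordinary chain rule, and then obtain the Lipschitz estimate from \cref{lem:basic-lipschitz-estimates}.

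\emph{Formula.} Each component of $\widetilde\phi\circ f$ is the scalar map $x\mapsto \phi(f_i(x))$. Both $\phi$ and $f_i$ are $C^1$ on the open set $U$. The one-variable chain rule therefore gives
\[
\partial_j\bigl(\phi(f_i(x))\bigr)=\phi'(f_i(x))\,\partial_j f_i(x)
\]
for every $x\in U$, and in particular on $X$. Collecting the components gives
\[
\partial_j(\widetilde\phi\circ f)=\widetilde{\phi'}(f)\odot\partial_j f .
\]

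\emph{Lipschitz bound.} I apply the product estimate of \cref{lem:basic-lipschitz-estimates} on $X$ with the two factors
\[
F:=\widetilde{\phi'}\circ f, \qquad G:=\partial_j f .
\]
The product estimate yields
\[
\operatorname{Lip}_X(F\odot G)\le \Bigl(\sup_X\lVert F\rVert_\infty\Bigr)\operatorname{Lip}_X(G)+\Bigl(\sup_X\lVert G\rVert_\infty\Bigr)\operatorname{Lip}_X(F).
\]
To bound $\operatorname{Lip}_X(F)$, I use the componentwise-composition part of the same lemma, with $\phi'$ in the role of the scalar map. By hypothesis $\phi'$ is Lipschitz with constant $L_{\phi'}$ on an interval containing all components of points of $f(X)$. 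This gives $\operatorname{Lip}_X(F)\le L_{\phi'}\operatorname{Lip}_X(f)$. Substituting into the product estimate gives exactly the claimed inequality.

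\emph{Checking the lemma's hypotheses.} The lemma requires both factors to be bounded and Lipschitz on $X$.

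\begin{itemize}
\item $G=\partial_j f$ is bounded and Lipschitz on $X$ by assumption.
\item $F$ is Lipschitz on $X$ by the composition step just described.
\item $F$ is bounded on $X$ because $f(X)$ is bounded. Its components therefore lie in a bounded subinterval $I'$ of the given interval. A function that is Lipschitz on $I'$ is bounded there, since $|\phi'(s)|\le|\phi'(s_0)|+L_{\phi'}|s-s_0|$ for any fixed $s_0\in I'$.
\end{itemize}

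This boundedness check is the only point needing care. If the supremum on the right-hand side were infinite, the inequality would hold trivially anyway.
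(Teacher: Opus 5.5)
Your proposal is correct and matches the paper's proof: the componentwise chain rule gives the identity, and the product estimate of Lemma~\ref{lem:basic-lipschitz-estimates} is applied to $\widetilde{\phi'}\circ f$ and $\partial_j f$, with $\operatorname{Lip}_X(\widetilde{\phi'}\circ f)\le L_{\phi'}\operatorname{Lip}_X(f)$ supplied by the componentwise composition estimate. Your explicit check that $\widetilde{\phi'}\circ f$ is bounded on $X$, which the lemma requires, is a hypothesis the paper leaves implicit, so it is a useful addition.
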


\begin{proof}
The displayed identity is the componentwise chain rule. Apply the product
estimate in Lemma~\ref{lem:basic-lipschitz-estimates} to
\(\widetilde{\phi'}\circ f\) and \(\partial_j f\). The only remaining input is
\[
    \operatorname{Lip}_X(\widetilde{\phi'}\circ f)
    \le L_{\phi'}\operatorname{Lip}_X(f),
\]
which is the scalar componentwise composition estimate from
Lemma~\ref{lem:basic-lipschitz-estimates}.
\end{proof}

Let
\[
    \rho(t):=t_+^2=\max\{0,t\}^2 .
\]
Then \(\rho\in C^1(\mathbb{R})\), \(\rho'(t)=2t_+\),
\(\operatorname{Lip}(\rho')=2\), and, for every \(M\ge 0\),
\[
    \operatorname{Lip}(\rho|_{[-M,M]})\le 2M,
    \qquad
    \sup_{|t|\le M}|\rho(t)|\le M^2,
    \qquad
    \sup_{|t|\le M}|\rho'(t)|\le 2M .
\]
Indeed, \(\rho'\) has values in \([0,2M]\) on \([-M,M]\), and
\(|\rho'(s)-\rho'(t)|=2|s_+-t_+|\le 2|s-t|\).

\begin{proposition}[Squared-ReLU after an affine map]
\label{prop:squared-relu-affine-estimates}
Let \(U\subset\mathbb{R}^d\) be open, let \(X\subset U\) be nonempty, and let
\(f:U\to\mathbb{R}^n\) be \(C^1\). Assume that \(f|_X\) and
\((\partial_j f)|_X\) are bounded and Lipschitz for each \(j=1,\ldots,d\). Let
\[
    c(z)=Az+b,
    \qquad A\in\mathbb{R}^{m\times n},
    \qquad b\in\mathbb{R}^m.
\]
Then
\[
    \sup_{x\in X}\lVert c(f(x))\rVert_\infty
    \le
    \lVert A\rVert_\infty\sup_{x\in X}\lVert f(x)\rVert_\infty
    +\lVert b\rVert_\infty,
\]
and
\[
\begin{aligned}
    \operatorname{Lip}_X(\widetilde\rho\circ c\circ f)
    \le{}&
    2\Bigl(\lVert A\rVert_\infty\sup_{x\in X}\lVert f(x)\rVert_\infty
    +\lVert b\rVert_\infty\Bigr)
    \lVert A\rVert_\infty\operatorname{Lip}_X(f).
\end{aligned}
\]
Moreover, for each \(j=1,\ldots,d\),
\[
\begin{aligned}
    \operatorname{Lip}_X\bigl(\partial_j(\widetilde\rho\circ c\circ f)\bigr)
    \le{}&
    2\Bigl(\lVert A\rVert_\infty\sup_{x\in X}\lVert f(x)\rVert_\infty
    +\lVert b\rVert_\infty\Bigr)
    \lVert A\rVert_\infty\operatorname{Lip}_X(\partial_j f) \\
    &+2\lVert A\rVert_\infty^2
    \Bigl(\sup_{x\in X}\lVert\partial_j f(x)\rVert_\infty\Bigr)
    \operatorname{Lip}_X(f).
\end{aligned}
\]
\end{proposition}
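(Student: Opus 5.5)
The plan is to reduce everything to the two facts in Lemma~\ref{lem:basic-lipschitz-estimates} and Proposition~\ref{prop:componentwise-derivative-estimate}, applied to the composition $\widetilde\rho\circ c\circ f$. The intermediate objects are the affine image $g:=c\circ f$ and its partial derivatives $\partial_j g=A\,\partial_j f$. The first claim, the sup bound, is immediate. For every $x\in X$,
\[
\lVert Af(x)+b\rVert_\infty\le\lVert A\rVert_\infty\lVert f(x)\rVert_\infty+\lVert b\rVert_\infty,
\]
by the triangle inequality and the definition of the induced $\ell_\infty$ operator norm. Taking the supremum over $x$ gives it. Set
\[
M:=\lVert A\rVert_\infty\sup_{x\in X}\lVert f(x)\rVert_\infty+\lVert b\rVert_\infty,
\]
so every component of every point of $g(X)$ lies in $[-M,M]$.

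Next come the Lipschitz facts for the affine layer. Since $g(x)-g(y)=A(f(x)-f(y))$, we get $\operatorname{Lip}_X(g)\le\lVert A\rVert_\infty\operatorname{Lip}_X(f)$. This is the composition estimate of Lemma~\ref{lem:basic-lipschitz-estimates}, with $\operatorname{Lip}(c)=\lVert A\rVert_\infty$ in $\ell_\infty$. Likewise $g$ is $C^1$ on $U$ with $\partial_j g=A\,\partial_j f$, so
\[
\sup_X\lVert\partial_j g\rVert_\infty\le\lVert A\rVert_\infty\sup_X\lVert\partial_j f\rVert_\infty,
\qquad
\operatorname{Lip}_X(\partial_j g)\le\lVert A\rVert_\infty\operatorname{Lip}_X(\partial_j f).
\]
In particular $g|_X$ and $(\partial_j g)|_X$ are bounded and Lipschitz, which is exactly what the later steps need.

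For the second claim, apply the scalar componentwise composition part of Lemma~\ref{lem:basic-lipschitz-estimates} with $\phi=\rho$ on the interval $[-M,M]$. There $L_\rho\le 2M$ by the facts listed just before the proposition. This yields $\operatorname{Lip}_X(\widetilde\rho\circ g)\le 2M\lVert A\rVert_\infty\operatorname{Lip}_X(f)$, which is the stated bound.

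For the third claim, apply Proposition~\ref{prop:componentwise-derivative-estimate} to $g$ in place of $f$, with $\phi=\rho$. We have $\rho\in C^1$ and $\rho'$ is Lipschitz on $[-M,M]$ with $L_{\rho'}=2$. We also have $\sup_X\lVert\widetilde{\rho'}(g(x))\rVert_\infty\le 2M$, since $\rho'\in[0,2M]$ on $[-M,M]$. Substituting these and the affine-layer bounds gives
\[
\operatorname{Lip}_X(\partial_j(\widetilde\rho\circ g))
\le 2M\lVert A\rVert_\infty\operatorname{Lip}_X(\partial_j f)
+2\lVert A\rVert_\infty\sup_X\lVert\partial_j f\rVert_\infty\cdot\lVert A\rVert_\infty\operatorname{Lip}_X(f),
\]
which matches the claim.

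The only point needing care is the hypotheses of Proposition~\ref{prop:componentwise-derivative-estimate}. The required Lipschitz constants of $\rho$ and $\rho'$ must hold on an interval containing all components of $g(X)$, which the first claim guarantees via $[-M,M]$. We also need $g$ to be $C^1$ on the open set $U$, which holds because $c$ is affine. I expect no real obstacle beyond this bookkeeping.
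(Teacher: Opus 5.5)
Your proposal is correct and follows the paper's proof essentially step by step. Like the paper, you bound $c\circ f$ into $[-M,M]^m$, use that $\rho$ is $2M$-Lipschitz there together with $\operatorname{Lip}_X(c\circ f)\le\lVert A\rVert_\infty\operatorname{Lip}_X(f)$, and then apply Proposition~\ref{prop:componentwise-derivative-estimate} to $c\circ f$ with $\sup\lVert\widetilde{\rho'}\circ c\circ f\rVert_\infty\le 2M$, $L_{\rho'}=2$ and $\partial_j(c\circ f)=A\,\partial_j f$.
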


\begin{proof}
The supremum bound follows from
\[
    \lVert A f(x)+b\rVert_\infty
    \le \lVert A\rVert_\infty\lVert f(x)\rVert_\infty+\lVert b\rVert_\infty.
\]
Thus \(c\circ f\) takes values in the cube \([-M,M]^m\), where
\[
    M:=\lVert A\rVert_\infty\sup_{x\in X}\lVert f(x)\rVert_\infty
    +\lVert b\rVert_\infty .
\]
Since \(\rho\) is \(2M\)-Lipschitz on \([-M,M]\), and since
\[
    \operatorname{Lip}_X(c\circ f)
    \le \lVert A\rVert_\infty\operatorname{Lip}_X(f),
\]
the stated bound for \(\operatorname{Lip}_X(\widetilde\rho\circ c\circ f)\)
follows.

For the derivative bound, apply
Proposition~\ref{prop:componentwise-derivative-estimate} to \(c\circ f\). We
use
\[
    \sup_{x\in X}\lVert \widetilde{\rho'}(c(f(x)))\rVert_\infty\le 2M,
    \qquad
    \operatorname{Lip}_X(c\circ f)
    \le \lVert A\rVert_\infty\operatorname{Lip}_X(f),
\]
\[
    \partial_j(c\circ f)=A\partial_j f,
    \qquad
    \sup_{x\in X}\lVert A\partial_j f(x)\rVert_\infty
    \le \lVert A\rVert_\infty
    \sup_{x\in X}\lVert\partial_j f(x)\rVert_\infty,
\]
and
\[
    \operatorname{Lip}_X(A\partial_j f)
    \le \lVert A\rVert_\infty\operatorname{Lip}_X(\partial_j f).
\]
Since \(\operatorname{Lip}(\rho')=2\), substitution into
Proposition~\ref{prop:componentwise-derivative-estimate} gives the displayed
estimate.
\end{proof}

\begin{theorem}[Recursive bound for a squared-ReLU network]
\label{thm:squared-relu-network-lipschitz-gradient-bound}
Let \(X\subset\mathbb{R}^{d_X}\) be nonempty and bounded, and set \(d_0=d_X\).
For \(k=1,\ldots,K\), let
\[
    c_k(z)=A_kz+b_k,
    \qquad
    A_k\in\mathbb{R}^{d_k\times d_{k-1}},
    \qquad
    b_k\in\mathbb{R}^{d_k}.
\]
Define \(f_0:\mathbb{R}^{d_X}\to\mathbb{R}^{d_X}\) and
\(f_k:\mathbb{R}^{d_X}\to\mathbb{R}^{d_k}\) recursively by
\[
    f_0(x)=x,
    \qquad
    f_k(x)=\widetilde\rho\bigl(c_k(f_{k-1}(x))\bigr),
    \qquad k=1,\ldots,K.
\]
Define
\[
    R_0:=\sup_{x\in X}\lVert x\rVert_\infty,
    \qquad
    L_0:=1,
    \qquad
    S_0:=0,
\]
and, for \(k=1,\ldots,K\),
\[
    R_k:=\bigl(\lVert A_k\rVert_\infty R_{k-1}+\lVert b_k\rVert_\infty\bigr)^2,
\]
\[
    L_k:=2\bigl(\lVert A_k\rVert_\infty R_{k-1}+\lVert b_k\rVert_\infty\bigr)
    \lVert A_k\rVert_\infty L_{k-1},
\]
and
\[
    S_k:=2\bigl(\lVert A_k\rVert_\infty R_{k-1}+\lVert b_k\rVert_\infty\bigr)
    \lVert A_k\rVert_\infty S_{k-1}
    +2\lVert A_k\rVert_\infty^2L_{k-1}^2.
\]
Then, for every \(k=0,\ldots,K\),
\[
    \sup_{x\in X}\lVert f_k(x)\rVert_\infty\le R_k,
    \qquad
    \operatorname{Lip}_X(f_k)\le L_k.
\]
Moreover, for every input coordinate \(j=1,\ldots,d_X\),
\[
    \sup_{x\in X}\left\lVert\frac{\partial f_k}{\partial x_j}(x)\right\rVert_\infty
    \le L_k,
    \qquad
    \operatorname{Lip}_X\left(\frac{\partial f_k}{\partial x_j}\right)
    \le S_k.
\]
View \(\nabla f_K(x)\) as the \(d_X\)-tuple of its columns
\(\partial f_K/\partial x_j\), and equip
\(\mathbb{R}^{d_K}\times(\mathbb{R}^{d_K})^{d_X}\) with the product
\(\ell_\infty\) norm. Then
\[
    \operatorname{Lip}_X\bigl(x\mapsto (f_K(x),\nabla f_K(x))\bigr)
    \le \max\{L_K,S_K\}.
\]
\end{theorem}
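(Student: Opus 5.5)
The proof will be an induction on $k$ that applies Proposition~\ref{prop:squared-relu-affine-estimates} once per layer, with $f=f_{k-1}$, $A=A_k$, $b=b_k$ and $X$ as given. Since $X$ is bounded we may take $U=\R^{d_X}$, and every $f_k$ is $C^1$ because $\rho\in C^1$.

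For the base case $k=0$ we have $f_0(x)=x$. Then $\sup\norm{f_0}_\infty=R_0$, and $\operatorname{Lip}(f_0)=1=L_0$ in the $\ell_\infty$ norm (it is $0\le 1$ if $X$ is a single point). Also $\partial_j f_0=e_j$ is constant, so $\sup\norm{\partial_j f_0}_\infty=1=L_0$ and $\operatorname{Lip}(\partial_j f_0)=0=S_0$. These four quantities are exactly the hypotheses needed to invoke the proposition at the next layer: $f_{k-1}$ and $\partial_j f_{k-1}$ bounded and Lipschitz with the stated constants.

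For the inductive step, write $M_k:=\norm{A_k}_\infty R_{k-1}+\norm{b_k}_\infty$.

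\textbf{Bound on $f_k$.} The proposition's supremum bound gives $\norm{c_k(f_{k-1}(x))}_\infty\le M_k$. Since $\abs{\rho(t)}\le t^2\le M_k^2$ on $[-M_k,M_k]$, we get $\sup\norm{f_k}_\infty\le M_k^2=R_k$.

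\textbf{Lipschitz constant of $f_k$.} The proposition's Lipschitz bound, combined with the monotonicity of the right-hand side in $\sup\norm{f_{k-1}}$ and $\operatorname{Lip}(f_{k-1})$, gives $\operatorname{Lip}(f_k)\le 2M_k\norm{A_k}_\infty L_{k-1}=L_k$.

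\textbf{Derivative bound.} The chain rule gives $\partial_j f_k=\widetilde{\rho'}(c_k(f_{k-1}))\odot A_k\partial_j f_{k-1}$. Using $\abs{\rho'}\le 2M_k$ on $[-M_k,M_k]$ and the inductive bound $\norm{\partial_j f_{k-1}}_\infty\le L_{k-1}$, this yields $\sup\norm{\partial_j f_k}_\infty\le 2M_k\norm{A_k}_\infty L_{k-1}=L_k$. (Equivalently, a sup of a derivative is dominated by the Lipschitz constant on a convex hull. That route is not valid on a nonconvex $X$, which is why the direct chain-rule bound is used.)

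\textbf{Lipschitz constant of $\partial_j f_k$.} The proposition's last inequality gives
\[
\operatorname{Lip}(\partial_j f_k)\le 2M_k\norm{A_k}_\infty S_{k-1}+2\norm{A_k}_\infty^2 L_{k-1}\cdot L_{k-1},
\]
after substituting $\sup\norm{\partial_j f_{k-1}}_\infty\le L_{k-1}$ and $\operatorname{Lip}(f_{k-1})\le L_{k-1}$. The right-hand side is $S_k$.

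Finally, in the product $\ell_\infty$ norm,
\[
\norm{(f_K(x),\nabla f_K(x))-(f_K(y),\nabla f_K(y))}
=\max\Bigl\{\norm{f_K(x)-f_K(y)}_\infty,\ \max_j\norm{\partial_jf_K(x)-\partial_jf_K(y)}_\infty\Bigr\}.
\]
This is at most $\max\{L_K,S_K\}\norm{x-y}_\infty$, which is the claimed bound.

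The step that needs the most care is carrying the \emph{sup of the derivative} bound $\sup\norm{\partial_j f_{k}}_\infty\le L_{k}$ through the induction. The proposition does not provide it directly, yet it is required as an input to the $S_k$ recursion, so it has to be proved separately as above. A second point to check is that the recursion uses upper bounds rather than exact values of the sup and Lipschitz constants; this is harmless because every estimate in the proposition is nondecreasing in those quantities.
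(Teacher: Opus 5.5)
Your proof is correct and follows essentially the same route as the paper's. Both use induction on $k$, a separate chain-rule argument for $\sup_{x\in X}\lVert\partial_j f_k(x)\rVert_\infty\le L_k$, the product-plus-composition estimate for $\operatorname{Lip}_X(\partial_j f_k)\le S_k$, and the product-norm step at the end; the only difference is that you cite Proposition~\ref{prop:squared-relu-affine-estimates} once per layer with a monotonicity remark, whereas the paper re-derives those same per-layer estimates inline from Lemma~\ref{lem:basic-lipschitz-estimates}.
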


\begin{proof}
The proof is by induction. For \(k=0\), \(f_0(x)=x\), so
\(\sup_{x\in X}\lVert f_0(x)\rVert_\infty=R_0\),
\(\operatorname{Lip}_X(f_0)\le 1=L_0\),
\(\sup_{x\in X}\lVert \partial f_0/\partial x_j\rVert_\infty=1=L_0\), and
\(\operatorname{Lip}_X(\partial f_0/\partial x_j)=0=S_0\).

Assume the stated estimates hold up to layer \(k-1\). Then
\[
    \sup_{x\in X}\lVert c_k(f_{k-1}(x))\rVert_\infty
    \le \lVert A_k\rVert_\infty R_{k-1}+\lVert b_k\rVert_\infty.
\]
Since \(|\rho(t)|\le
(\lVert A_k\rVert_\infty R_{k-1}+\lVert b_k\rVert_\infty)^2\) whenever
\(|t|\le \lVert A_k\rVert_\infty R_{k-1}+\lVert b_k\rVert_\infty\), this gives
\[
    \sup_{x\in X}\lVert f_k(x)\rVert_\infty\le R_k.
\]
On the interval
\([ -\lVert A_k\rVert_\infty R_{k-1}-\lVert b_k\rVert_\infty,
\lVert A_k\rVert_\infty R_{k-1}+\lVert b_k\rVert_\infty]\), the function
\(\rho\) is Lipschitz with constant at most
\(2(\lVert A_k\rVert_\infty R_{k-1}+\lVert b_k\rVert_\infty)\). Therefore
\[
\begin{aligned}
    \operatorname{Lip}_X(f_k)
    &\le
    2\bigl(\lVert A_k\rVert_\infty R_{k-1}+\lVert b_k\rVert_\infty\bigr)
    \operatorname{Lip}_X(c_k\circ f_{k-1}) \\
    &\le
    2\bigl(\lVert A_k\rVert_\infty R_{k-1}+\lVert b_k\rVert_\infty\bigr)
    \lVert A_k\rVert_\infty\operatorname{Lip}_X(f_{k-1})
    \le L_k.
\end{aligned}
\]

It remains to control derivatives. Fix \(j\in\{1,\ldots,d_X\}\), and write
\(D_{k,j}:=\partial f_k/\partial x_j\). The chain rule gives
\[
    D_{k,j}(x)
    =\bigl(\widetilde{\rho'}(c_k(f_{k-1}(x)))\bigr)
    \odot A_kD_{k-1,j}(x).
\]
The preceding bound on \(c_k\circ f_{k-1}\), together with
\(\rho'(t)=2t_+\), gives
\[
    \sup_{x\in X}\lVert \widetilde{\rho'}(c_k(f_{k-1}(x)))\rVert_\infty
    \le
    2\bigl(\lVert A_k\rVert_\infty R_{k-1}+\lVert b_k\rVert_\infty\bigr).
\]
Thus
\[
\begin{aligned}
    \sup_{x\in X}\lVert D_{k,j}(x)\rVert_\infty
    &\le
    2\bigl(\lVert A_k\rVert_\infty R_{k-1}+\lVert b_k\rVert_\infty\bigr)
    \lVert A_k\rVert_\infty
    \sup_{x\in X}\lVert D_{k-1,j}(x)\rVert_\infty \\
    &\le L_k.
\end{aligned}
\]
For the Lipschitz estimate on \(D_{k,j}\), use the product estimate from
Lemma~\ref{lem:basic-lipschitz-estimates}. We have
\[
    \operatorname{Lip}_X(A_kD_{k-1,j})
    \le \lVert A_k\rVert_\infty S_{k-1},
\]
and
\[
    \operatorname{Lip}_X\bigl(\widetilde{\rho'}\circ c_k\circ f_{k-1}\bigr)
    \le 2\lVert A_k\rVert_\infty L_{k-1}.
\]
Also,
\[
    \sup_{x\in X}\lVert A_kD_{k-1,j}(x)\rVert_\infty
    \le \lVert A_k\rVert_\infty L_{k-1}.
\]
Consequently,
\[
\begin{aligned}
    \operatorname{Lip}_X(D_{k,j})
    &\le
    2\bigl(\lVert A_k\rVert_\infty R_{k-1}+\lVert b_k\rVert_\infty\bigr)
    \lVert A_k\rVert_\infty S_{k-1} \\
    &\qquad
    +\lVert A_k\rVert_\infty L_{k-1}\cdot
    2\lVert A_k\rVert_\infty L_{k-1}
    =S_k.
\end{aligned}
\]
This completes the induction.

Finally, under the stated product norm,
\[
\begin{aligned}
    &\left\lVert (f_K(x),\nabla f_K(x))-(f_K(y),\nabla f_K(y))\right\rVert_\infty \\
    &\qquad=
    \max\left\{
    \lVert f_K(x)-f_K(y)\rVert_\infty,
    \max_{1\le j\le d_X}\lVert D_{K,j}(x)-D_{K,j}(y)\rVert_\infty
    \right\}.
\end{aligned}
\]
Dividing by \(\lVert x-y\rVert_\infty\), taking the supremum over \(x\neq y\),
and applying the estimates above yields
\[
    \operatorname{Lip}_X\bigl(x\mapsto (f_K(x),\nabla f_K(x))\bigr)
    \le \max\{L_K,S_K\}.
\]
\end{proof}

\subsection{Compactness Validation}
\label{app:compactness_validation}

Section \ref{sec:training_error} establishes that compactness of the hypothesis sequence is a key structural assumption underlying our convergence analysis. In particular, by Lemma \ref{thm:localGraphConvergece}, relative compactness in $C(X,Y)$ follows if the sequence shares a uniform modulus of continuity and is bounded at a point.

We empirically examine this assumption for the heat equation in Example \ref{ex:heat}. We consider ELM-based neural approximations of the form \begin{equation}\label{eq:elm} f(x) = \beta^T \sigma(Wx+b), \end{equation} where hidden-layer parameters are randomized and output weights are trained via least squares (see Appendix~\ref{sec:app:implementation} for details).

For such networks, a computable Lipschitz
over-approximation is given by
$L \le \|\beta\| \|W\|.$
Table~\ref{table:heat_lip_constant} reports these Lipschitz estimates for varying network widths and training configurations.

Empirically, although $\|W\|$ increases with network width, the product $\|\beta\|\|W\|$ remains stable and often decreases. This behavior is consistent with the compactness assumption required by our theoretical framework. A rigorous characterization of this phenomenon for random feature networks remains an interesting direction for future work.

\begin{table}[ht]
    \centering
    \label{your-table-label}
    \begin{tabular}{@{}llll@{}} 
        \toprule
        \makecell{Width\\($m$)} & \makecell{Training\\points ($N$)} & \makecell{Lipschitz constant \\ ($L=\norm{\beta}\norm{W}$)} & \makecell{Test\\error} \\ 
        \midrule
    800 & 3,000 & 5,357 & 3.53E-7 \\ 800 & 30,000 & 3,901 & 1.02E-7 \\      
    1,600 & 3,000 & 720 & 6.87E-9 \\
    1,600 & 30,000 & 582 & 4.39E-8 \\
    3,200 & 3,000 & 321 & 2.17E-9 \\
    3,200 & 30,000 & 304 & 1.22E-8 \\
    6,400 & 3,000 & 204 & 1.19E-9 \\
    6,400 & 30,000 & 198  & 1.51E-8 \\  
\bottomrule
    \end{tabular}
    \caption{Estimations of the Lipschitz constants of ELMs to the heat equation in Example \ref{ex:heat} with $n=3$, for different numbers of collocation (training) points ($N$) and numbers of hidden units ($m$), i.e., network width. We use $L=\norm{\beta}\norm{W}$ as an over-approximation of the true Lipschitz constants. The test error is the maximum true error computed at $2N$ test points.}
    \label{table:heat_lip_constant}
\end{table}

\section{Proofs on Convergence Results}
\label{app:convergence}

\textbf{Proposition \ref{thm:no training compact}.}
    Let $O:\mathcal G \times X \to \R^{d_O}$ be an equation, with $E_{\mathcal O}(f)=\norm{O(f,\cdot)}_{\mathcal O}$ continuous. 
    If $O$ has a unique solution $g\in \mathcal G$ then,
    \[
    \closure{\setB{f_n:n\in \N}} \text{ is compact and } \lim_{n\to \infty} E_{\mathcal O}(f_n) =0 \iff \lim_{n\to \infty} f_n= g \text{ in } \mathcal G. 
    \]
\begin{proof}
    Suppose that $\closure{\setB{f_n:n\in \N}}$ is compact and $\lim_{n\to \infty} E_{\mathcal O}(f_n) =0$.
    Let $f_{n_k}$ be a convergent subsequence of $f_n$ with limit $f \in \mathcal G$. 
    Since $E_{\mathcal O}(f_n) \to 0$, so does $E_{\mathcal O}(f_{n_k}) \to 0$. By continuity of $E_{\mathcal O}$ we also know that $E_{\mathcal O}(f_{n_k}) \to E_{\mathcal O}(f)$. By uniqueness of limits in $\R$ we have that $E_{\mathcal O}(f)=0$. By definition $f$ is a solution; but $O$ has a unique solution. Therefore, $f=g$ in $\mathcal G$. Since $\closure{\setB{f_n: n\in  \N}}$ is compact and every convergent subsequence of $\setB{f_n}_{n\in \N}$ converges to $g$, so must the sequence itself. 
    
    The converse is trivial, since the closure of the set consisting of the points of a convergent sequence in normed vector space (or metric space) is compact. And by continuity and definitions, we have $\lim_{n\to \infty} E_{\mathcal O}(f_n) =E_{\mathcal O}(g)=0$ whenever $\lim_{n\to \infty} f_n= g$.
\end{proof}

\textbf{Theorem \ref{thm:sound equtaion solve}} (Uniform-uniform convergence).
    Let $\mathcal G= C(X,Y)$ with the uniform norm, where $X$ is connected, compact and $\setB{a_n: n\in \N}$ is a sequence dense on $X$ ($\closure{\setB{a_n: n\in \N}}=X$).
    Suppose that $O:C(X,Y) \times X \to \R^d$ is a continuous function, we endow the space $\mathcal O =\setB{O(f,\cdot):f\in \mathcal G}$ with the uniform norm and let $\norm{\cdot}_{\R^d}$ be some fixed norm on $\R^d$.
    
    Assume that $O$ has a unique solution. Further, suppose that, $\mathcal F$ is the set of immediately representable functions of $\mathcal G$. Let $L: \mathcal F \to [0,\infty)$ be a function which satisfies
    \[
    \norm{f(x)-f(z)}_{Y} \leq  L(f)\norm{x-z}_X
    \]
    for all $x,z\in X$ and each $f\in \mathcal F$, where $\norm{\cdot}_X$ and $\norm{\cdot}_Y$ are some fixed norms on $X$ and $Y$ respectively. That is, $L(f)$ is a Lipschitz constant of $f\in \mathcal F$. Let $L_{\max}>0$ be some fixed number and let $x_0 \in X$, $y_0\in Y$ be some points.
    
    Then, 
    \begin{enumerate}
        \item Define the training error, 
        $E_{T,n}^{\infty}:\mathcal F \to \R$ by 
        \[
        E_{T,n}^{\infty}(f)=\max_{k=1,\dots, n}\setB{\norm{O(f,a_k)}_{\R^d}}+\max\setB{0,L(f)-L_{\max}}+\norm{f(x_0)-y_0}_Y
        \]
        for $n\in \N$. Then, $(\mathcal F,E_{T,n}^{\infty})$ soundly approximates the solution to $O$. i.e. Condition \eqref{def:sound appox} is satisfied. \label{thm:sound equtaion solve.max}
        \item  Define the training error, $E_{T,n}^{p}:\mathcal F \to \R$ for $p>0$ by 
        \[
        E_{T,n}^{p}(f)=\sum_{k=1}^n{\norm{O(f,a_k)}_{\R^d}^p}
        +\max\setB{0,L(f)-L_{\max}}+\norm{f(x_0)-y_0}_Y
        \]
        for $n\in \N$. Then, $(\mathcal F,E_{T,n}^{p})$ soundly approximates the solution to $O$. \label{thm:sound equtaion solve.psum} 
    \end{enumerate}

\begin{proof}
    We first prove \cref{thm:sound equtaion solve.max}, our proof strategy is to use \cref{thm:no training compact}. Suppose that $\setB{f_n}_{n\in \N}$ is a sequence of $\mathcal F$ with $E_{T,n}^{\infty}(f_n) \to 0$. For $r = \sup\setB{E_{T,n}^{\infty}(f_n) : n \in \N}$ we have that
    \[
    \setB{f\in \mathcal F: E_{T,n}^{\infty}(f) <r}\subseteq \setB{f\in \mathcal F: \max\setB{0,L(f)-L_{\max}}+\norm{f(x_0)-y_0}_Y < r} =:\mathcal F_{r}
    \]
    and $\setB{f_n :n\in \N}  \subseteq  \mathcal F_{r}$. Every function in $\mathcal F_r$ has Lipschitz constant less than $L_{\max} +r$, hence $\mathcal F_r$ is a uniformly equicontinuous set.
    By \cref{thm:localGraphConvergece} (in \cref{sec:unifrom norm}) we know that $\closure{\mathcal F_{r}}$ is compact in $C(X,Y)$. Therefore, $O:\closure{\mathcal F_r}\times X \to \R$ is uniformly continuous by the Heine-Cantor theorem.
    
    We now show that $E_{\mathcal O}(f_n)\to 0$. Pick $\epsilon >0$, by compactness of $X$ and continuity of $O$ there is a $x^*_n$ with  $\norm{O(f_n,x^*_n)}_{\R^d} =E_{\mathcal O}(f_n)=\sup\setB{\norm{O(f_n,x)}_{\R^d}: x\in X}$. 
    Since, $O$ is uniformly continuous on $\mathcal F_r\times X$ there is a $\delta >0$ such that for all $f \in \mathcal F_r$ and every $x,z\in X$ we have 
    \[
    \norm{x-z}_X <\delta \implies \norm{O(f,x)-O(f,z)}_{\R^d}  <\frac \epsilon 2.
    \]
    For this $\delta$, we can pick a $N_1\in \N$ so that for all $n\geq N_1$ we have $X\subseteq \union_{k=1}^n \B[X]{\delta}{a_k}$ ($\B[X]{\delta}{a_k}$ denotes the open ball of radius $\delta>0$ centered at $a_k$), since $\setB{a_n:n\in \N}$ is dense on $X$. Since, $E_{T,n}^{\infty}(f_n) \to 0$ there is a $N_2 \in \N$ with for all $n\geq N_2$ we have $E_{T,n}^{\infty}(f_n) < \frac \epsilon 2$.
    
    For all $n \geq \max\setB{N_1,N_2}$, let $a_{k^*_n}$ have $\norm{x^*_n-a_{k^*_n}}_X<\delta$ where $k^*_n=1,\dots, n$ and consider
    \begin{align*}
        \norm{O(f_n,x^*_n)}_{\R^d}=&E_{\mathcal O}(f_n) \\
        \norm{O(f_n,x^*_n)}_{\R^d}\leq & \norm{O(f_n,x^*_n)-O(f_n,a_{k^*_n})}_{\R^d} + \norm{O(f,a_{k^*_n})}_{\R^d} \\
        \norm{O(f_n,x^*_n)}_{\R^d}<& \frac \epsilon 2 +\max_{k=1,\dots, n}\setB{\norm{O(f,a_k)}_{\R^d}} \\
        E_{\mathcal O}(f_n) <& \frac \epsilon 2 + E_{T,n}^{\infty}(f_n) < \epsilon.
    \end{align*}
    Non-negativity of $	E_{\mathcal O}$ with the above means that 	$E_{\mathcal O}(f_n)\to 0$. It can be shown that $	E_{\mathcal O}$ is continuous, since $O$ is. By \cref{thm:no training compact} we have that $f_n$ converges to the unique solution of $O$. 
    
    The proof of \cref{thm:sound equtaion solve.psum} is nearly identical. The only divergence in the proofs happens when we bound $\norm{O(f,a_{k^*_n})}_{\R^d}$ by $ \sqrt[p]{\sum_{k=1}^n{\norm{O(f,a_k)}_{\R^d}^p}}$ and this by $\sqrt[p]{E_{T,n}^{p}(f_n)}$.
\end{proof}

\correction{%
\begin{remark}[On uniqueness of solutions to $O$]
    \Cref{thm:sound equtaion solve} assumes solutions to $\norm{O(g,\cdot)}_{\mathcal O}=0$ are unique. 
    This assumption cannot be weakened while maintaining sound approximation. 
    The reason for this is fundamental. \\
    Suppose that $g,h$ are distinct $O$-solutions and we choose $\mathcal F=\set{g,h}$. Since $\mathcal F$ is finite it is compact, so compactness terms are not necessary to add to the training error. Also suppose we have direct access to $\norm{O(f,\cdot)}_{\mathcal O}$, so we will pick $E_{T,n}(f)=\norm{O(f,\cdot)}_{\mathcal O}$. \\
    Now define a sequence $f_n$ by $f_{2k}=g$ and $f_{2k+1}=h$ for $k\in \N$. 
    We can see that $\lim_{n\to \infty}E_{T,n}(f_n)=0$ and but $\lim_{n\to \infty} f_n$ does not exist. \\
\end{remark}
}

\textbf{Theorem \ref{thm:MSE sound equtaion solve}} (Uniform-$L^2$ convergence).
    Let $\mathcal G= C(X,Y)$ with the uniform norm, where $X$ is connected, compact and $\setB{a_n: n\in \N}$ is a sequence dense on $X$ with the additional requirement that for all $n\in \N$
    \[
    \delta_n= \inf\setB{\delta >0: X\subseteq \union_{k=1}^n\B[X]{\delta}{a_k}}
    \]
    and
    \[
    C=\sup\setB{ n\mu(\B[X]{\delta_n}{a_1}) : n\in \N} <\infty
    \]
    where $\mu$ is the Lebesgue measure and $\B[X]{\delta}{x}$ is the open ball of radius $\delta >0$ centered at $x\in X$ with respect to some fixed norm $\norm{\cdot}_X$. 
    Suppose that $O:C(X,Y) \times X \to \R^d$ is a continuous function, we endow the space $\mathcal O =\setB{O(f,\cdot):f\in \mathcal G}$ with the $L^2$ norm. 
    
    Assume that $O$ has a unique solution. Further, suppose that, $\mathcal F$ is the set of immediately representable functions of $\mathcal G$. Let $L: \mathcal F \to [0,\infty)$ be a function which satisfies
    \[
    \norm{f(x)-f(z)}_{Y} \leq  L(f)\norm{x-z}_X
    \]
    for all $x,z\in X$ and each $f\in \mathcal F$, where $\norm{\cdot}_Y$ is some fixed norm on  $Y$. That is, $L(f)$ is a Lipschitz constant of $f\in \mathcal F$. Let $L_{\max}>0$ be some fixed number and let $x_0 \in X$, $y_0\in Y$ be some points.
    
    Then, the training error, $E_{T,n}^{MSE}:\mathcal F \to \R$ defined by
    \[
    E_{T,n}^{MSE}(f)=\frac{1}{n} \sum_{k=1}^n{\norm{O(f,a_k)}_{2}^2}
    +\max\setB{0,L(f)-L_{\max}}+\norm{f(x_0)-y_0}_Y
    \]
    for $n\in \N$ and $\norm{\cdot}_2$ is the $2$ norm on $\R^d$. Then, $(\mathcal F,E_{T,n}^{MSE})$ soundly approximates the solution to $O$, i.e. Condition \ref{def:sound appox} is satisfied.
\begin{proof}
    Again, our proof strategy is to use \cref{thm:no training compact}. Suppose that $\setB{f_n}_{n\in \N}$ is a sequence of $\mathcal F$ with $E_{T,n}^{MSE}(f_n) \to 0$. For $r = \sup\setB{E_{T,n}^{MSE}(f_n) : n \in \N}$ then we have
    \[
    \setB{f\in \mathcal F: E_{T,n}^{MSE}(f) <r}\subseteq \setB{f\in \mathcal F: \max\setB{0,L(f)-L_{\max}}+\norm{f(x_0)-y_0}_Y < r} =:\mathcal F_{r}
    \]
    and $\setB{f_n :n\in \N}  \subseteq  \mathcal F_{r}$. Every function in $\mathcal F_r$ has Lipschitz constant less than $L_{\max} +r$, hence $\mathcal F_r$ is a uniformly equicontinuous set.
    By \cref{thm:localGraphConvergece} we now that $\closure{\mathcal F_{r}}$ is compact in $C(X,Y)$. Therefore, $O:\closure{\mathcal F_r}\times X \to \R$ is uniformly continuous by the Heine-Cantor theorem.
    
    We now show that $E_{\mathcal O}(f_n)\to 0$. 
    Pick $\epsilon >0$. 
    Since, $O$ is uniformly continuous on $\mathcal F_r\times X$ there is a $\delta >0$ such that for all $f \in \mathcal F_r$ and every $x,z\in X$ we have 
    \[
    \norm{x-z}_X <\delta \implies \norm{O(f,x)-O(f,z)}_{p}  <\min\setB{\sqrt{\frac{\epsilon}3}, \frac \epsilon {6M_{\mathcal F_r}}}
    \]
    where $M_{\mathcal F_r}$ is a uniform upper bound on $\norm{O(f,x)}_2$ for $f\in \closure{\mathcal F_r}$ and $x\in X$ (this bound exists by compactness of  $\closure{\mathcal F_r}\times X$ and continuity of $O$).
    
    For this $\delta$, we can pick a $N_1\in \N$ so that $\delta_{N_1}<\delta$ it follows that for all $n\geq N_1$ we have $X\subseteq \union_{k=1}^n \closure{\B[X]{\delta_n}{a_k}}$ and $\delta_n <\delta$.  Since, $E_{T,n}^{MSE}(f_n) \to 0$ there is a $N_2 \in \N$ with for all $n\geq N_2$ we have $E_{T,n}^{MSE}(f_n) < \frac \epsilon {3}$.

    Then, for all $n\geq \max\setB{N_1,N_2}$
    \begin{align*}
        {E_{\mathcal O}(f_n)}^2&=\int_{x\in X}\norm{O(f_n,x)}_2^2dx \\&\leq \int_{x\in \union_{k=1}^n\B[X]{\delta_n}{a_k}}\norm{O(f_n,x)}_2^2dx \\
        &\leq \sum_{k=1}^n \int_{x\in\B[X]{\delta_n}{a_k}}\norm{O(f_n,x)}_2^2dx \\
        &\leq \sum_{k=1}^n \norm{O(f_n,x_k^*)}_2^2\mu\round{\B[X]{\delta_n}{a_k}}
    \end{align*}
    where $x_k^* \in \B[\R^d]{\delta_n}{a_k}$ is the maximizer of  $\norm{O(f_n,x)}$ in $\closure{\B[X]{\delta_n}{a_k}}$. Let $\mu\round{\B[\R^d]{\delta_n}{a_k}} =\mu_n$; as the measure is translation invariant $\mu_k$ does not depend on $k$. 
    Then, by uniform continuity of $O$ we see
    \begin{align*}
        &\sum_{k=1}^n \norm{O(f_n,x_k^*)}_2^2\mu_n\\
        &\leq \mu_n \sum_{k=1}^n \round{\norm{O(f_n,x_k^*)-O(f_n,a_k)}_2 +\norm{O(f_n,a_k)}_2}^2 \\
        &< \mu_n \sum_{k=1}^n\round{ \min\setB{\sqrt{\frac{\epsilon}3}, \frac \epsilon {6M_{\mathcal F_r}}} + \norm{O(f_n,a_k)}_2}^2 \\
        &= \mu_n \sum_{k=1}^n \round{\min\setB{\sqrt{\frac{\epsilon}3}, \frac \epsilon {6M_{\mathcal F_r}}}^2 +2\min\setB{\sqrt{\frac{\epsilon}3}, \frac \epsilon {6M_{\mathcal F_r}}}\norm{O(f_n,a_k)}_2 +\norm{O(f_n,a_k)}_2^2} \\
        &\leq \mu_n \sum_{k=1}^n \round{\frac \epsilon 3 + \frac \epsilon 3 \frac {\norm{O(f_n,a_k)}} {M_{\mathcal F_r}} +\norm{O(f_n,a_k)}_2^2} \\
        &\leq \mu_n \sum_{k=1}^n \round{\frac \epsilon {3}+\frac \epsilon 3} +\mu_n \sum_{k=1}^n  \norm{O(f_n,a_k)}_2^2 \\
        &= \epsilon \frac {2n\mu_n} {3} +\mu_n n E_{T,n}^{MSE}(f_n)\\
        &< \epsilon {n\mu_n} \leq \epsilon C.
    \end{align*}
    Where $C$ is a constant given from the statement of this theorem. This shows that $E_{\mathcal O}(f_n) \to 0$. Again, it is possible to show that $E_{\mathcal O}$ is continuous. Therefore, by \cref{thm:no training compact} the sequence $f_n$ converges to unique solution of $O$, as required.
\end{proof}

\section{Compactness terms}\label{sec:compactness_terms}

In this section we further discus terms which can be added to traditional loss functions, which cause the approximating function sequence, $\setB{f_n}_{n\in \N}$, to have compact closure. 
\begin{definition}\label{def:compactness_terms}
    A function $\kappa_{\mathcal G}: \mathcal G \to [0,\infty)\cup \setB{\infty}$ is called a $\mathcal G$ compactness term if: 
    The sublevel sets of $\kappa_{\mathcal G}$ are nonempty and have compact closure. That is for all $r \in [0,\infty)$ the set $\closure{\setB{g\in \mathcal G: \kappa_{\mathcal G}(g) < r}}$ is nonempty and compact in $\mathcal G$.

    The function $\kappa_{\mathcal G}$ is called an eventual $\mathcal G$ compactness term if: There is a $R\in [0,\infty)$ such that $r \in [0,R]$ the set $\closure{\setB{g\in \mathcal G: \kappa_{\mathcal G}(g) < r}}$ is nonempty and compact in $\mathcal G$. 

    We say a function $\gamma:\mathcal F \to [0,\infty)\cup \setB{\infty}$, where $\mathcal F\subseteq \mathcal G$, is called a (eventual) $\mathcal G$ compactness term on extension if: The function $\hat \gamma: \mathcal G \to [0,\infty)\cup \setB{\infty}$, defined by $\hat \gamma(g)=\gamma(g)$ for $g\in \mathcal F$ and $\hat \gamma(g)=\infty$ otherwise, is a (eventual) $\mathcal G$ compactness term.
\end{definition}

It is not hard to see that a $\mathcal G$ compactness term is an eventual $\mathcal G$ compactness term. One can also see the following proposition holds.
\begin{proposition}
    Let $\setB{f_n}_{n\in \N}$ be a sequence of functions in $\mathcal G$ and $\kappa_{\mathcal G}$ be an eventual $\mathcal G$ compactness term. 
    If $\lim_{n\to \infty} \kappa_{\mathcal G}(f_n)=0$ then, $\closure{\setB{f_n:m\in \N}}$ is compact in $\mathcal G$. 
    Moreover, any $g\in \mathcal G$ which is the limit of a convergent subsequence of $\setB{f_n}_{n\in \N}$, has $g\in \bigcap_{r>0}\closure{\setB{h\in \mathcal G: \kappa_{\mathcal G}(h)\leq r}}$.
\end{proposition}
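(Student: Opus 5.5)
The statement has two parts, and I will handle them in turn. For the first part, let $R$ be the threshold from the definition of an eventual $\mathcal G$ compactness term. Fix any $r\in(0,R]$. Since $\kappa_{\mathcal G}(f_n)\to 0$, there is an $N$ with $\kappa_{\mathcal G}(f_n)<r$ for all $n\geq N$. Hence the tail $\setB{f_n:n\geq N}$ lies in the sublevel set $S_r=\setB{h\in\mathcal G:\kappa_{\mathcal G}(h)<r}$, whose closure $\closure{S_r}$ is compact by assumption. The closure of the full sequence is then
\[
\closure{\setB{f_n:n\in\N}}=\setB{f_1,\dots,f_{N-1}}\cup\closure{\setB{f_n:n\geq N}},
\]
because closure commutes with finite unions and a finite set is closed. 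The second piece is a closed subset of the compact set $\closure{S_r}$, hence compact. A finite union of compact sets is compact, which gives the first claim.

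For the second part, let $f_{n_k}\to g$ in $\mathcal G$ and fix $r>0$. Since $\kappa_{\mathcal G}(f_{n_k})\to 0$, there is a $K$ such that $\kappa_{\mathcal G}(f_{n_k})\leq r$ for all $k\geq K$. So the tail of the subsequence lies in $\setB{h:\kappa_{\mathcal G}(h)\leq r}$. Its limit $g$ therefore lies in the closure of that set, since closures in a normed space are sequentially closed. Because $r>0$ was arbitrary, $g$ lies in the intersection over all $r>0$, as claimed.

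No step here is a real obstacle. The only points needing care are bookkeeping ones. First, the definition controls strict sublevel sets $\kappa<r$ only for $r\leq R$, which is why the first part uses a tail argument together with the finitely many leading terms. Second, the second part uses non-strict sublevel sets $\kappa\leq r$, so membership there follows directly from $\kappa_{\mathcal G}(f_{n_k})\to 0$, with no compactness needed. I would also note that the index in the statement's set $\setB{f_n:m\in\N}$ is meant to be $n$.
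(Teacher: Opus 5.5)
Your proof is correct. The paper gives no proof of this proposition; it only says ``One can also see the following proposition holds''. Your argument is the routine verification being left to the reader: the tail of the sequence lies in a strict sublevel set with compact closure, and the finitely many leading terms are added as a finite, hence closed and compact, set. For the second claim, the tail of any convergent subsequence eventually lies in $\{h:\kappa_{\mathcal G}(h)\le r\}$, so its limit lies in the closure of that set.

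One assumption should be made explicit. Choosing $r\in(0,R]$ requires $R>0$. As literally written, the definition quantifies over $r\in[0,R]$. At $r=0$ the strict sublevel set is empty, because $\kappa_{\mathcal G}\ge 0$, so it can never be ``nonempty''. If one instead drops $r=0$ but still allows $R=0$, the condition becomes vacuous. Then $\kappa_{\mathcal G}\equiv 0$ qualifies, and your first claim fails in any infinite-dimensional $\mathcal G$: take a sequence of unit vectors with pairwise distances bounded below, which exists by Riesz's lemma. So the reading you adopt, $R>0$ with $r\in(0,R]$, is the one under which the statement is both meaningful and true, and it is worth stating this up front.
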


We present some examples of compactness terms.  
\begin{proposition}[Some uniform compactness terms]\label{thm:some_compact_terms}
    Let $X$ be a connected compact set. Let $\mathcal G=C(X,Y)$ endowed with the uniform norm. Suppose that $x_0\in X$ and $y_0\in Y$ are some points. 
    The following functions are $\mathcal G$ compactness terms on extension:
    \begin{enumerate}
        \item  $\displaystyle \gamma_{L}(f)= \max\set{0,L(f)-L_{\max}} + \norm{f(x_0)-y_0}_Y$. 
        Where $L_{\max}>0$ is a constant and $\mathcal F$ is a set of Lipchitz functions, with a function $L:\mathcal F\to [0,\infty)$ such that $L(f)$ is a upper bound on the Lipchitz constant of $f\in \mathcal F$. \label{thm:some_compact_terms.Lipchitz}
        \item $\displaystyle \gamma_{\alpha,L}(f)= \max\set{0,L_{\max}-L(f)} +\max\set{0,\alpha_{\min}-\alpha(f)}+ \norm{f(x_0)-y_0}_Y$. 
        Where  $L_{\max},\alpha_{\min} >0$ are constants, $\mathcal F$ is a set of $\alpha$ Holder continuous functions, with functions $L:\mathcal F\to [0,\infty)$ and $\alpha:\mathcal F\to [0,1]$ such that
        \[
          \norm{f(x)-f(z)}_{Y} \leq  L(f)\norm{x-z}_X^{\alpha(f)}
    \]
    for all $x,z\in X$ and each $f\in \mathcal F$. \label{thm:some_compact_terms.Holder}
        \item  $\displaystyle \gamma_{\omega}(f)= \max\set*{0,\limsup_{\delta\to 0^+}\ln\round{\frac{\omega_f(\delta)}{\omega(\delta)}}}  + \norm{f(x_0)-y_0}_Y$. 
        Where, for  $f\in \mathcal F$, $\omega,\omega_f:[0,\infty) \to [0,\infty)\cup \set{\infty}$ are functions \correction{that} are zero at zero and continuous at zero. Moreover for each $f\in \mathcal F$ we have
        \[
            \norm{f(x)-f(z)}_{Y} \leq  \omega_f(\norm{x-z}_X).
        \] \label{thm:some_compact_terms.impractical}
    \end{enumerate} 
\end{proposition}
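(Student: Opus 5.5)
The plan is to handle all three items with one template. Fix $r\ge 0$ and let $\hat\gamma$ be the extension of $\gamma$ by $\infty$ off $\mathcal F$, as in \cref{def:compactness_terms}. The sublevel set $S_r=\setB{g\in\mathcal G:\hat\gamma(g)<r}$ is then a subset of $\mathcal F$. To show $\closure{S_r}$ is compact, I will produce one modulus of continuity valid for every $f\in S_r$ and one compact set $J=\closure{\B[Y]{r}{y_0}}$ with $f(x_0)\in J$. \Cref{thm:localGraphConvergece}, with $K=\setB{x_0}$, then gives uniform boundedness, and \cref{thm:AA} gives compactness of $\closure{S_r}$. This is exactly the route the paper already uses for $\kappa$ right after \cref{thm:localGraphConvergece}.

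For nonemptiness I will exhibit an element of $S_r$ for every $r>0$; when $r=0$ the set $S_0$ is empty for any nonnegative $\gamma$, so this condition can only be met for $r>0$. The natural witness is the constant function $y_0$. It has $\gamma_L=0$, and it has $\gamma_\omega=0$ once $\omega_f\equiv 0$ is admitted. I will therefore assume, as the paper tacitly does, that $\mathcal F$ contains such a function, or more generally some $f$ with $f(x_0)$ close to $y_0$ and a small regularity penalty.

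For item~\ref{thm:some_compact_terms.Lipchitz}, every $f\in S_r$ satisfies $L(f)<L_{\max}+r$ and $\norm{f(x_0)-y_0}_Y<r$. The first bound gives equi-Lipschitz continuity, hence uniform equicontinuity, and the template above applies directly.

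For item~\ref{thm:some_compact_terms.Holder}, the penalty $\max\set{0,\alpha_{\min}-\alpha(f)}<r$ gives $\alpha(f)>\alpha_{\min}-r$. On the bounded set $X$, a positive lower bound on the exponent yields the bound $\norm{x-z}^{\alpha(f)}\le \max\set{1,\operatorname{diam}X}\,\norm{x-z}^{\alpha_{\min}-r}$ whenever $\norm{x-z}\le 1$. To turn this into a common modulus I also need $L(f)$ bounded on $S_r$, and the first penalty as printed, $\max\set{0,L_{\max}-L(f)}$, has to supply that bound. I will check at this step whether the printed term controls $L(f)$ from above. If it does not, then the sublevel set contains, for example, functions with arbitrarily large Lipschitz constants, and the item cannot hold as written. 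In that case I will record this obstruction explicitly in the proof rather than silently change the term.

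For item~\ref{thm:some_compact_terms.impractical}, the condition $\gamma_\omega(f)<r$ gives $\omega_f(\delta)\le e^{r}\omega(\delta)$ for all $\delta<\delta_f$, where the threshold $\delta_f$ depends on $f$. This is the main obstacle, because equicontinuity needs a threshold that works for all of $S_r$ at once. I will first try connectedness: given $x,z$ with $\norm{x-z}$ small, chain points of $X$ in steps shorter than the minimal step $\delta_f$, as in the chaining argument inside the proof of \cref{thm:localGraphConvergece}. This does not by itself remove the dependence on $f$, since chaining only controls the number of steps, not the size of $\delta_f$. If that attempt fails, I will test the claim on a family of functions with $\omega_f=\omega$ on ever smaller intervals $[0,\delta_f)$ but with large oscillation at scale $\delta_f$. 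If such a family lies in $S_r$ and is not equicontinuous, this step is where the statement as worded would break.
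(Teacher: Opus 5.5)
Your argument for item 1 is correct and is the paper's own reasoning, given in the remark after \cref{thm:localGraphConvergece}. The bound $L(f)<L_{\max}+r$ makes the family equi-Lipschitz, $f(x_0)$ is pinned in the closed $r$-ball about $y_0$, and the lemma plus Arzel\`a--Ascoli finish. You are also right that the sublevel set at $r=0$ is always empty and that nonemptiness for $r>0$ needs a hypothesis on $\mathcal F$; the paper addresses neither. You additionally need $Y$ closed, e.g.\ $Y=\R^{d_Y}$, for the closed ball to be compact and the lemma to apply. The real gap is elsewhere: items 2 and 3 are false as stated. Your proposal leaves them as open checks and gets part of the diagnosis wrong.

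\textbf{Item 2.} There is nothing to check, since $\max\{0,L_{\max}-L(f)\}<r$ only says $L(f)>L_{\max}-r$. Take $X=[0,1]$, $x_0=y_0=0$, and $f_n(x)=\min\{1,nx\}$ with $\alpha(f_n)=1$ and $L(f_n)=n$. Then $\gamma_{\alpha,L}(f_n)=0$ for $n\ge L_{\max}$ (if $\alpha_{\min}\le 1$). But the $f_n$ converge pointwise to a discontinuous limit, so no subsequence converges uniformly.

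More importantly, you miss an obstruction that survives correcting the sign to $\max\{0,L(f)-L_{\max}\}$. Your bound $\|x-z\|^{\alpha(f)}\le\|x-z\|^{\alpha_{\min}-r}$ is useless once $r\ge\alpha_{\min}$, because then the exponents may tend to $0$. With $L_{\max}\ge 1$, consider $h_\alpha(x)=x^\alpha$ for $0<\alpha<\alpha_{\min}$. These satisfy $|h_\alpha(x)-h_\alpha(z)|\le|x-z|^\alpha$ and $h_\alpha(0)=0$, so they lie in the sublevel set for every $r\ge\alpha_{\min}$. Yet $h_\alpha(\delta)\to 1$ as $\alpha\to 0$ for each fixed $\delta>0$, so they are not equicontinuous at $0$. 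Even the corrected term is therefore only an \emph{eventual} compactness term, with $R<\alpha_{\min}$.

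\textbf{Item 3.} You locate the failure correctly: the threshold $\delta_f$ below which $\omega_f\le e^r\omega$ depends on $f$. This is exactly the unjustified step in the paper's sketch, which derives equicontinuity ``from continuity of $\omega$ at zero'' even though its $\Delta$ depends on $g$.

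However, you dismiss chaining too quickly, and your proposed test family cannot exist in the most natural case. Suppose $\omega(\delta)=c\delta$ and $X$ is convex. Subdividing the segment from $x$ to $z$ into steps shorter than $\delta_f$ gives $\|f(x)-f(z)\|\le e^r c\sum_i\|x_i-x_{i+1}\|=e^r c\|x-z\|$, uniformly in $f$. So item 3 is true in that case.

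The statement breaks only for sublinear $\omega$. Take $\omega(\delta)=\sqrt{\delta}$ and the same $f_n$ with $\omega_{f_n}(\delta)=\min\{n\delta,1\}$. Then $\omega_{f_n}(\delta)/\omega(\delta)=n\sqrt{\delta}\to 0$, so $\gamma_\omega(f_n)=0$ for every $n$, while $f_n(1/n)-f_n(0)=1$.

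With these counterexamples and the restriction $r<\alpha_{\min}$, your plan would become a complete account of what the proposition can and cannot assert. As written, it stops at suspicions.
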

\begin{proof}[Sketch of proof]
    \Cref{thm:some_compact_terms.impractical} follows from the fact that if $\hat \gamma_{\omega}(g)\leq r$, $r>0$ then, $g\in \mathcal F$ and there is a $\Delta >0$ such that for all $\delta \in (0,\Delta]$ we have that 
    \[
        \omega_g(\delta) \leq e^{2r} \omega(\delta).
    \]
    Equicontinuity of $\set{g\in \mathcal G: \gamma_{\omega}(g)\leq r}$ follows from continuity of $\omega$ at zero. Thus the set $\set{g\in \mathcal G: \gamma_{\omega}(g)\leq r}$ is equicontinuous and pointwise bounded and so by \cref{thm:localGraphConvergece} we have that is closure is compact. 
\end{proof}

Inspection of the proofs of \cref{thm:sound equtaion solve} and \cref{thm:MSE sound equtaion solve} reveals that the compactness term of \cref{thm:some_compact_terms.Lipchitz} of \cref{thm:some_compact_terms} can be replaced with any other compactness term in \cref{thm:some_compact_terms}. 

\cref{thm:some_compact_terms.Lipchitz} of \cref{thm:some_compact_terms} gives a theoretically widely applicable compactness term; it is defined on all of $C(X,Y)$. 
However, it is very unclear how one can ever compute this compactness term. 
Therefore, if one really wants to use a compactness term in practice then, they should use another actually \correction{computable} compactness term. 
Note that it might be more appropriate to design other compactness terms that are more specific to the underlying neural network architecture, rather than the broadly defined ones given in \cref{thm:some_compact_terms}.

One may wonder about the topological assumptions on the set $X$. Why must $X$ be connected and compact? The compactness assumption is necessary to work in a normed space. 
The connectedness assumption allows us to use \cref{thm:localGraphConvergece}. 
If the space $X$ is only assumed to be compact then, there is a $N\in \N$ with $X=\union_{n=0}^N X_n$, where each $X_n$ is connected and compact. 
Further, if $(x_n,y_n) \in X_n\times Y$ we can replace the data point terms $\norm{f(x_0)-y_0}_Y$ of \cref{thm:some_compact_terms} with $\sum_{n=0}^N\norm{f(x_n)-y_n}_Y$ then, these modified $\gamma$'s will be compactness terms for the disconnected compact space $X$.



\section{Proof of Random Collocation Convergence}\label{sec:app:test_error}

We write \(Y_n \xrightarrow{p} Y\) (or \(\operatorname*{plim}_{n\to\infty} Y_n = Y\)) to denote convergence in probability, i.e.,
\[
\forall \varepsilon>0:\quad
\Pb(|Y_n-Y|>\varepsilon)\to 0 \quad \text{as } n\to\infty.
\]

Let $(X,\mathcal B,\mu)$ be a probability space and let $(x_k)_{k\in\mathbb N}$ be i.i.d.\ $X$-valued random variables with distribution $\mu$ defined on a probability space $(\Omega,\mathcal F,\Pb)$.

Let $O:\mathcal G\times X\to\mathbb R^d$ and for $p\geq 1$ define
\[
E_{\mathcal O}(f)
=\left(\int_X \|O(f,x)\|_{\mathbb R^d}^p d\mu(x)\right)^{1/p}.
\]
For $n\in\mathbb N$ define the empirical residual
\[
E_{\mathrm{test},p,n}(f,\omega)
=\frac{1}{n}\sum_{k=1}^n \|O(f,x_k(\omega))\|_{\mathbb R^d}^p .
\]

\begin{theorem}[Uniform law of large numbers on a compact hypothesis set]\label{thm:ulln_compact}
Let $X$ be compact and let $\mu$ be a probability measure on $X$. Let $x_1,x_2,\dots \stackrel{i.i.d.}{\sim}\mu$. Fix $p\ge 1$ and let $\mathcal G$ be a metric space. Let $\mathcal F\subset \mathcal G$ be compact.

Assume $O:\mathcal F\times X\to\R^d$ is continuous. Then
\[
\sup_{f\in\mathcal F}\left|
E_{\mathrm{test},p,n}(f,\omega)-E_{\mathcal O}(f)^p
\right|\xrightarrow{a.s.}0.
\]
Equivalently, $\{h_f: f\in\mathcal F\}$ is $\mu$-Glivenko-Cantelli \citep{talagrand1987glivenko}.
\end{theorem}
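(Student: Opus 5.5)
We prove the uniform law of large numbers by the standard bracketing (finite $\varepsilon$-net) argument, with compactness of $\mathcal F\times X$ supplying both boundedness and uniform continuity. Write $h_f(x)=\|O(f,x)\|_{\R^d}^p$, so that $E_{\mathrm{test},p,n}(f,\omega)=\frac1n\sum_{k=1}^n h_f(x_k(\omega))$ and $E_{\mathcal O}(f)^p=\int_X h_f\,d\mu$.

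\textbf{Step 1: boundedness and equicontinuity in $f$.} Since $\mathcal F\times X$ is compact (product of compact metric spaces) and $(f,x)\mapsto h_f(x)$ is continuous, $h$ is bounded by some $M<\infty$. By the Heine--Cantor theorem it is also uniformly continuous on $\mathcal F\times X$. In particular, for every $\varepsilon>0$ there is $\eta>0$ such that $\dist_{\mathcal G}(f,f')<\eta$ implies
\[
\sup_{x\in X}|h_f(x)-h_{f'}(x)|<\varepsilon .
\]
Measurability of each $h_f$ is automatic by continuity in $x$.

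\textbf{Step 2: finite net.} Cover the compact set $\mathcal F$ by finitely many balls of radius $\eta$, with centers $f_1,\dots,f_N$. For each fixed $j$, the strong law of large numbers applies to the bounded i.i.d.\ variables $h_{f_j}(x_k)$. It gives an event $\Omega_j^\varepsilon$ of probability one on which
\[
\frac1n\sum_{k=1}^n h_{f_j}(x_k)\to\int h_{f_j}\,d\mu .
\]
Intersecting over the finitely many $j$ still gives a full-measure event.

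\textbf{Step 3: transfer from the net to all of $\mathcal F$.} For arbitrary $f\in\mathcal F$, choose $j$ with $\dist_{\mathcal G}(f,f_j)<\eta$. By the triangle inequality,
\[
\Bigl|\tfrac1n\textstyle\sum_k h_f(x_k)-\int h_f\,d\mu\Bigr|\le \Bigl|\tfrac1n\textstyle\sum_k h_{f_j}(x_k)-\int h_{f_j}\,d\mu\Bigr| + 2\varepsilon ,
\]
because the empirical average and the integral each move by at most $\varepsilon$ under the uniform bound from Step 1. Taking the supremum over $f$, we get $\limsup_n\sup_f|\cdot|\le 2\varepsilon$ on the full-measure event. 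Now take $\varepsilon=1/m$ and intersect the countably many resulting events over $m\in\N$. This yields almost sure convergence of the supremum to $0$.

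\textbf{Remaining issue: measurability of the supremum.} The one delicate point is that the supremum is a measurable random variable. Because $f\mapsto h_f(x)$ is continuous and a compact metric space is separable, the supremum can be taken over a countable dense subset of $\mathcal F$, so it is measurable. The Glivenko--Cantelli phrasing is then just a restatement of the conclusion. I expect the net/equicontinuity step to be the crux. It is exactly where compactness of $\mathcal F$, not just pointwise convergence, is used; without it the counterexample-style oscillating $f_n$ of Example~\ref{ex:not sound} would break uniformity.
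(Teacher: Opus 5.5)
Your proof is correct and follows the paper's argument: uniform continuity of $h$ on the compact set $\mathcal F\times X$, a finite $\eta$-net of $\mathcal F$, and a $2\varepsilon$ triangle-inequality transfer from the net centers to all of $\mathcal F$. The only differences are that you apply the strong law at the finitely many centers where the paper uses Hoeffding plus a union bound and Borel--Cantelli, and that you spell out the countable intersection over $\varepsilon=1/m$ and the measurability of the supremum, both of which the paper leaves implicit.
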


\begin{proof}
Define $h_f(x)=\|O(f,x)\|^p$.
Since $\mathcal F\times X$ is compact and $O$ is continuous, $h_f(x)$ is continuous on $\mathcal F\times X$ and hence bounded: there exists $M<\infty$ such that $0\le h_f(x)\le M$ for all $(f,x)\in\mathcal F\times X$.

Moreover, continuity on the compact set implies uniform continuity by the Heine-Cantor theorem \citep[Proposition 5.8.2]{sutherland2009introduction}: for every $\varepsilon>0$ there exists $\delta>0$ such that
\[
\|f-g\|_\Gc<\delta \implies \|h_f(x)-h_g(x)\|_\infty<\varepsilon.
\]
By compactness of $\mathcal F$, for all $\delta>0$, there exists a set $\{f^1,\dots,f^N\}\subset\mathcal F$ so that for every $f\in\mathcal F$ there is an index $i$ such that $\|f-f^i\|_\Gc<\delta$.
Fix $f$ and $f^i$. Then for every sample sequence,
\begin{align*}
    |E_{\mathrm{test},p,n}(f,\omega)-E_O(f)|
    &\leq|E_{\mathrm{test},p,n}(f,\omega)-E_{\mathrm{test},p,n}(f^i,\omega)|+|E_{\mathrm{test},p,n}(f^i,\omega)-E_O^p(f^i)|+|E_O^p(f^i)-E_O^p(f)|\\
    &=|\frac1n\sum_{k=1}^{n}(h_f(x_k)-h_{f^i}(x_k))|+|E_{\mathrm{test},p,n}(f^i,\omega)-E_O(f^i)|+|(h_{f^i}-h_{f})d\mu|\\
    &\leq 2\varepsilon+\max_{1\le i\le N}|E_{\mathrm{test},p,n}(f^i,\omega)-E_O(f^i)|.
\end{align*}
Since each $h_{f^i}(x_k)$ is i.i.d.\ bounded in $[0,M]$, Hoeffding's inequality \citep{serfling1974probability} gives
\[
\Pr\left(|E_{\mathrm{test},p,n}(f^i,\omega)-E_O(f^i)|>t\right)
\le 2\exp\left(-\frac{2n t^2}{M^2}\right).
\]
By a union bound over $i=1,\dots,N$,
\[
\Pr\left(\max_{1\le i\le N}|E_{\mathrm{test},p,n}(f^i,\omega)-E_O(f^i)|>t\right)
\le 2N\exp\left(-\frac{2n t^2}{M^2}\right).
\]
The right-hand side is summable in $n$, so by Borel-Cantelli \citep{durrett2019probability},
\[
\max_{1\le i\le N}|E_{\mathrm{test},p,n}(f^i,\omega)-E_O(f^i)|\xrightarrow{a.s.}0.
\]
Since $\varepsilon>0$ was arbitrary, we arrive at our desired conclusion.
\[
\sup_{f\in\mathcal F}\left|
E_{\mathrm{test},p,n}(f,\omega)-E_{\mathcal O}(f)^p
\right|\xrightarrow{a.s.}0.
\]
\end{proof}

\begin{theorem}[Random collocation convergence via ULLN]\label{thm:rand_coll_ulln}
Assume the hypotheses of Theorem~\ref{thm:ulln_compact}.
Let $(f_n)_{n\ge 1}$ be any (possibly data-dependent) sequence with $f_n\in\mathcal F$.
If
\[
E_{\mathrm{test},p,n}(f_n)=\frac1n\sum_{k=1}^n \|O(f_n,x_k)\|^p \xrightarrow{p} 0,
\]
then
\[
E_{\mathcal O,p}(f_n)^p=\int_X \|O(f_n,x)\|^p d\mu(x)\longrightarrow 0.
\]
Consequently, if $O$ has a unique solution $g\in\mathcal G$ characterized by
$E_{\mathcal O,p}(f)=0\Rightarrow f=g$, then $f_n\to g$ in $\mathcal G$.
\end{theorem}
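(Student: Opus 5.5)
The plan is to combine the uniform law of large numbers in Theorem~\ref{thm:ulln_compact} with the deterministic compactness argument of Proposition~\ref{thm:no training compact}. Throughout, write $h_f(x)=\|O(f,x)\|^p$ and $\Delta_n(\omega)=\sup_{f\in\mathcal F}|E_{\mathrm{test},p,n}(f,\omega)-E_{\mathcal O}(f)^p|$.

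\textbf{Step 1 (reduce to a deterministic quantity).} The key observation is that $E_{\mathcal O}(f_n)^p$ can be compared with the empirical residual through $\Delta_n$. Since $f_n\in\mathcal F$ for every $\omega$, even if $f_n$ depends on the sample, we have pointwise in $\omega$
\[
E_{\mathcal O}(f_n)^p\le E_{\mathrm{test},p,n}(f_n,\omega)+\Delta_n(\omega).
\]
This is the step where data dependence of $f_n$ must be handled. The supremum over $\mathcal F$ is exactly what makes the bound insensitive to how $f_n$ was chosen. For measurability I would either note that $\mathcal F$ is separable (it is compact), so the supremum can be taken over a countable dense subset using continuity of $f\mapsto h_f$, or simply work with outer probability.

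\textbf{Step 2 (pass to the limit).} Theorem~\ref{thm:ulln_compact} gives $\Delta_n\to0$ almost surely, hence in probability. The hypothesis gives $E_{\mathrm{test},p,n}(f_n)\to0$ in probability. Therefore the right-hand side of the Step 1 inequality tends to $0$ in probability.

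I then need to convert this into deterministic convergence of the left-hand side. If $f_n$ is deterministic, then $a_n:=E_{\mathcal O}(f_n)^p$ is a deterministic sequence. Suppose $a_n\ge\varepsilon$ along a subsequence. Along that subsequence the event $\{\text{RHS}\ge\varepsilon\}$ has probability $1$, which contradicts convergence in probability to $0$. Hence $a_n\to0$.

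In the data-dependent case the same argument yields only $E_{\mathcal O}(f_n)^p\to0$ in probability. I would state the conclusion in that sense, or else assume $f_n$ is deterministic in order to obtain the literal limit in the theorem. This interpretive point is the main place I expect friction.

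\textbf{Step 3 (identify the limit).} For the consequence, use compactness of $\mathcal F$, so that $\closure{\{f_n\}}\subseteq\mathcal F$ is compact. Next, $E_{\mathcal O,p}$ is continuous on $\mathcal F$. This follows because $h_f$ is uniformly continuous on $\mathcal F\times X$, as in the proof of Theorem~\ref{thm:ulln_compact}, and $\mu$ is a probability measure.

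Proposition~\ref{thm:no training compact} then applies. Take any convergent subsequence $f_{n_k}\to f$. Continuity gives $E_{\mathcal O,p}(f)=0$, and uniqueness gives $f=g$. Since every subsequence has a further subsequence converging to $g$, the whole sequence satisfies $f_n\to g$.

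In the in-probability version, the same subsequence argument applied along almost-surely convergent subsequences gives $\|f_n-g\|_{\mathcal G}\to0$ in probability.
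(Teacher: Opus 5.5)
Your proposal is correct and follows the paper's own route: the same pointwise bound $E_{\mathcal O,p}(f_n)^p\le E_{\mathrm{test},p,n}(f_n)+\sup_{f\in\mathcal F}\bigl|E_{\mathrm{test},p,n}(f)-E_{\mathcal O,p}(f)^p\bigr|$ driven by Theorem~\ref{thm:ulln_compact}, followed by the compactness-plus-uniqueness subsequence argument of Proposition~\ref{thm:no training compact}. Your extra distinction is well founded: the paper passes directly from ``$\xrightarrow{p}0$'' to ``$\to0$'', which holds literally only for deterministic $f_n$, whereas for data-dependent $f_n$ both $E_{\mathcal O,p}(f_n)\to0$ and $f_n\to g$ hold only in probability (almost-sure convergence can fail), exactly as you state.
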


\begin{proof}
By Theorem~\ref{thm:ulln_compact},
\[
\sup_{f\in\mathcal F}\left|E_{\mathrm{test},p,n}(f)-E_{\mathcal O,p}(f)^p\right|\xrightarrow{a.s.}0,
\]
hence also in probability. Therefore
\[
E_{\mathcal O,p}(f_n)^p
\le E_{\mathrm{test},p,n}(f_n) + \sup_{f\in\mathcal F}\left|E_{\mathrm{test},p,n}(f)-E_{\mathcal O,p}(f)^p\right|
\xrightarrow{p} 0,
\]
so $E_{\mathcal O,p}(f_n)\to 0$.
If the solution is unique in the sense stated, every convergent subsequence of $(f_n)$
must converge to $g$, and compactness of $\mathcal F$ forces $f_n\to g$.
\end{proof}

\section{Case studies and a posteriori error estimates}\label{app:case_studies}

In this section, we present a few case studies to illustrate the applicability of the theoretical analysis presented in the previous sections. We present initial value problems for ODEs and consider two typical classes of PDEs, namely elliptic and parabolic equations. We show that the assumptions of the theoretical results in the previous sections can be easily verified. There are two main assumptions: 1) \(O:C(X,Y) \times X \to \mathbb{R}^d\) is a continuous function; 2) \(O\) has a unique solution. We shall verify these for each case study.

Furthermore, the convergence results from the previous section are of theoretical interest because they characterize what occurs in the limit of the training process. In practice, the training process is finite; this includes the number of points used in \(E_{T,n}\) and the number of steps taken by optimization algorithms before converging to the true minimizers. To this end, for each case study, we provide an a posteriori error analysis for neural network approximations of solutions to differential equations. More specifically, let \(\phi\) be a given neural network approximation of the solution \(\varphi\) to equation \(O\). The main objective of a posteriori error analysis is to compute upper bounds for the generalization error \(E_g(\phi) = \norm{\phi-\varphi}_{\mathcal{G}}\), based on estimates of the equation error \(E_{\mathcal{O}}(\phi)\). Note that, unlike in convergence analysis, obtaining sharper estimates might require breaking down \(E_{\mathcal{O}}(\phi)\) into components, such as initial error, boundary error, and ODE/PDE residual errors.

\begin{lemma}\label{lem:harmonic_extension}
Let $\Omega\subset\R^n$ be a bounded convex Lipschitz domain.
Fix $T>0$. Assume the boundary condition
$\varepsilon_4:\partial\Omega\times[0,T]\to \mathbb{R}$
satisfies
$\varepsilon_4 \in C^2\big([0,T]; H^{1/2}(\partial\Omega)\big),$
i.e. the map $t\to \varepsilon_4(\cdot,t)$ is twice continuously differentiable as a function with values in $H^{1/2}(\partial\Omega)$.

For each $t\in[0,T]$, let $r(\cdot,t)\in H^1(\Omega)$ be the harmonic lifting of $\varepsilon_4(\cdot,t)$, i.e. $r(\cdot,t)$ is the unique solution of
\begin{equation}\label{eq:harmonic_lift}
\Delta r(\cdot,t)=0 \ \text{in }\Omega, \qquad 
r(\cdot,t)=\varepsilon_4(\cdot,t)\ \text{on }\partial\Omega.
\end{equation}
Then the map $t\to r(\cdot,t)$ belongs to $C^2\big([0,T]; H^1(\Omega)\big)$.
In particular, $r_t(\cdot,t)$ and $r_{tt}(\cdot,t)$ exist in $H^1(\Omega)$ for every $t\in[0,T]$ and depend continuously on $t$ in the $H^1(\Omega)$ norm.
\end{lemma}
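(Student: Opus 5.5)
The plan is to show that the harmonic lifting is a bounded linear operator from $H^{1/2}(\partial\Omega)$ into $H^1(\Omega)$. Once that is established, the time regularity of $\varepsilon_4$ transfers directly to $r$, because composing a $C^2$ Banach-valued map with a bounded linear operator gives a $C^2$ map.

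\textbf{Step 1: the lifting operator is well defined and bounded.} Let $\gamma:H^1(\Omega)\to H^{1/2}(\partial\Omega)$ denote the trace operator. Since $\Omega$ is a bounded Lipschitz domain (convexity is not needed here), $\gamma$ is bounded and surjective. It also admits a bounded right inverse $E:H^{1/2}(\partial\Omega)\to H^1(\Omega)$, which is Gagliardo's extension theorem.

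Given $\psi\in H^{1/2}(\partial\Omega)$, I write the lifting as $r=E\psi+w$, where $w\in H^1_0(\Omega)$ solves
\[
\int_\Omega \nabla w\cdot\nabla v\,dx=-\int_\Omega \nabla (E\psi)\cdot\nabla v\,dx\quad\text{for all } v\in H^1_0(\Omega).
\]
By the Poincaré inequality, the form $\int_\Omega\nabla w\cdot\nabla v\,dx$ is coercive on $H^1_0(\Omega)$, so Lax--Milgram gives a unique $w$ with $\|w\|_{H^1}\le C\|\nabla E\psi\|_{L^2}$.

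Uniqueness of $r$ follows because the difference of two liftings lies in $H^1_0(\Omega)$ and is weakly harmonic, hence is zero. In particular $r$ does not depend on the choice of $E$. Define $\mathcal H\psi:=r$. Then $\mathcal H$ is linear, and
\[
\|\mathcal H\psi\|_{H^1}\le C_\Omega\|\psi\|_{H^{1/2}(\partial\Omega)}.
\]

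\textbf{Step 2: transfer of regularity in time.} By definition $r(\cdot,t)=\mathcal H\varepsilon_4(\cdot,t)$. Since $\mathcal H$ is bounded and linear, it commutes with difference quotients:
\[
\frac{r(\cdot,t+h)-r(\cdot,t)}{h}-\mathcal H\partial_t\varepsilon_4(\cdot,t)=\mathcal H\Big(\frac{\varepsilon_4(\cdot,t+h)-\varepsilon_4(\cdot,t)}{h}-\partial_t\varepsilon_4(\cdot,t)\Big).
\]
The $H^1$ norm of the right-hand side is at most $C_\Omega$ times the corresponding $H^{1/2}(\partial\Omega)$ norm, which tends to $0$ as $h\to0$. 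Hence $r_t=\mathcal H\partial_t\varepsilon_4$ in $H^1(\Omega)$.

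Repeating the same argument gives $r_{tt}=\mathcal H\partial_{tt}\varepsilon_4$. Continuity of $t\mapsto r_{tt}(\cdot,t)$ in $H^1$ then follows from continuity of $\partial_{tt}\varepsilon_4$ in $H^{1/2}(\partial\Omega)$ together with the bound $\|\mathcal H\psi\|_{H^1}\le C_\Omega\|\psi\|_{H^{1/2}}$. The same reasoning gives continuity of $r$ and $r_t$, and one-sided derivatives at $t=0,T$ are handled identically. This proves $r\in C^2([0,T];H^1(\Omega))$.

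\textbf{Main obstacle.} The only step requiring care is Step 1: the bounded right inverse of the trace on Lipschitz domains and the uniform constant $C_\Omega$. I would cite the standard trace theorem for this rather than reprove it. Step 2 is purely formal once $\mathcal H$ is known to be bounded.
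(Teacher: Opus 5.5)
Your proposal is correct and follows the same route as the paper. Both arguments write $r(\cdot,t)=\mathcal H\varepsilon_4(\cdot,t)$ with $\mathcal H:H^{1/2}(\partial\Omega)\to H^1(\Omega)$ the bounded linear harmonic-extension operator, and transfer the $C^2$ time regularity through difference quotients. The only difference is that you prove the boundedness of $\mathcal H$ yourself, via a trace right-inverse and Lax--Milgram (which also shows convexity is not needed), whereas the paper cites it from Gilbarg--Trudinger.
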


\begin{proof}
For each $g\in H^{1/2}(\partial\Omega)$, denote by $\xi(g)\in H^1(\Omega)$ the unique weak solution of
\[
\Delta (\xi(g))=0 \ \text{in }\Omega,\qquad \xi(g)|_{\partial\Omega}=g,
\]
i.e. the harmonic extension of $g$ to $\Omega$. The following are true by \cite{gilbarg2001elliptic}:
\begin{itemize}
\item (\emph{Linearity}) $\xi$ is linear.
\item (\emph{Boundedness}) There exists a constant $C_\Omega>0$ depending on the domain $\Omega$ \citep{acosta2004optimal,payne1960optimal} such that
\begin{equation}\label{eq:P_bound}
\|\xi(g)\|_{H^1(\Omega)} \le C_\Omega \|g\|_{H^{1/2}(\partial\Omega)}\qquad \forall g\in H^{1/2}(\partial\Omega).
\end{equation}
\item (\emph{Uniqueness}) $\xi(g)$ is uniquely determined by $g$.
\end{itemize}
Thus $\xi:H^{1/2}(\partial\Omega)\to H^1(\Omega)$ is a bounded linear operator.
By definition of the harmonic lift \eqref{eq:harmonic_lift}, for each $t\in[0,T]$ we have
\[
r(\cdot,t) = \xi\big(\varepsilon_4(\cdot,t)\big).
\]
Hence $t\to r(\cdot,t)$ is the composition of the map $t\to \varepsilon_4(\cdot,t)$ with the bounded linear operator $\xi$.

Next, we fix $t\in[0,T]$ and consider the derivative in the sense of $H^1(\Omega)$:
\[
\frac{r(\cdot,t+h)-r(\cdot,t)}{h}
=
\xi\left(\frac{\varepsilon_4(\cdot,t+h)-\varepsilon_4(\cdot,t)}{h}\right).
\]
Using \eqref{eq:P_bound} and the fact that $\varepsilon_4\in C^1([0,T];H^{1/2}(\partial\Omega))$, the difference quotient
\[
\frac{\varepsilon_4(\cdot,t+h)-\varepsilon_4(\cdot,t)}{h}
\to \partial_t \varepsilon_4(\cdot,t)
\quad \text{in } H^{1/2}(\partial\Omega) \quad \text{as } h\to 0.
\]
Applying the bounded linear operator $\xi$ yields
\[
\frac{r(\cdot,t+h)-r(\cdot,t)}{h}
\to
\xi\big(\partial_t \varepsilon_4(\cdot,t)\big)
\quad \text{in } H^1(\Omega).
\]
Therefore $r$ is differentiable in time with
\[
\partial_t r(\cdot,t) = \xi\big(\partial_t \varepsilon_4(\cdot,t)\big)\in H^1(\Omega).
\]
Moreover, since $t\to \partial_t \varepsilon_4(\cdot,t)$ is continuous in $H^{1/2}(\partial\Omega)$ and $\xi$ is bounded, it follows that $t\to r_t(\cdot,t)$ is continuous in $H^1(\Omega)$; hence
\[
r \in C^1\big([0,T];H^1(\Omega)\big).
\]

Repeating the same argument with $r_t$ in place of $r$, we write
\[
\frac{r_t(\cdot,t+h)-r_t(\cdot,t)}{h}
=
\xi\left(\frac{\partial_t\varepsilon_4(\cdot,t+h)-\partial_t\varepsilon_4(\cdot,t)}{h}\right).
\]
Since $\varepsilon_4\in C^2([0,T];H^{1/2}(\partial\Omega))$, we have
\[
\frac{\partial_t\varepsilon_4(\cdot,t+h)-\partial_t\varepsilon_4(\cdot,t)}{h}
\to \partial_{tt}\varepsilon_4(\cdot,t)
\quad \text{in } H^{1/2}(\partial\Omega),
\]
and applying $\xi$ gives convergence in $H^1(\Omega)$:
\[
\frac{r_t(\cdot,t+h)-r_t(\cdot,t)}{h}
\to \xi\big(\partial_{tt}\varepsilon_4(\cdot,t)\big).
\]
Thus $r_t$ is time-differentiable and
\[
\partial_{tt} r(\cdot,t) = \xi\big(\partial_{tt}\varepsilon_4(\cdot,t)\big)\in H^1(\Omega).
\]
Continuity of $t\to w_{tt}(\cdot,t)$ in $H^1(\Omega)$ again follows from continuity of
$t\to \partial_{tt}\varepsilon_4(\cdot,t)$ in $H^{1/2}(\partial\Omega)$ and boundedness of $P$.
Therefore $r\in C^2([0,T];H^1(\Omega))$.
\end{proof}

\subsection{Proof of Theorem~\ref{thm:generalization}}\label{sec:app:thm4_proof}

We first record two standard facts.

\begin{lemma}[Modulus of continuity for compact families]\label{lem:modulus_compact}
Let $X$ be compact and let $\Fc\subset C(X,Y)$ be compact under $\|\cdot\|_\infty$. Then the function
\[
\omega_{\Fc}(r):=\sup\{\|f(x)-f(y)\|_Y: f\in\Fc,\, x,y\in X,\, \|x-y\|_X\le r\}
\]
satisfies $\omega_{\Fc}(r)<\infty$ for all $r$ and $\omega_{\Fc}(r)\downarrow 0$ as $r\downarrow 0$.
\end{lemma}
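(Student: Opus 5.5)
The plan is to prove the two claims separately: finiteness of $\omega_{\Fc}(r)$ comes from uniform boundedness, and the limit $\omega_{\Fc}(r)\downarrow 0$ comes from uniform equicontinuity. Both properties are supplied by the Arzelà–Ascoli theorem (\cref{thm:AA}).

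\emph{Finiteness.} Since $\Fc$ is compact in $C(X,Y)$, it is bounded in $\|\cdot\|_\infty$, so $M:=\sup_{f\in\Fc}\|f\|_\infty<\infty$. The triangle inequality then gives
\[
\|f(x)-f(y)\|_Y\le 2M
\]
for every $f\in\Fc$ and $x,y\in X$, hence $\omega_{\Fc}(r)\le 2M$ for all $r$. Monotonicity is also immediate: the set over which the supremum is taken grows with $r$, so $\omega_{\Fc}$ is non-decreasing. Therefore the limit $\lim_{r\downarrow0}\omega_{\Fc}(r)$ exists and is nonnegative.

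\emph{Limit zero.} Because $\closure{\Fc}=\Fc$ is compact, the implication (1)$\Rightarrow$(2b) of \cref{thm:AA} says that $\Fc$ is uniformly equicontinuous. So for every $\epsilon>0$ there is $\delta>0$ such that $\|x-y\|_X<\delta$ implies $\|f(x)-f(y)\|_Y<\epsilon$ for all $f\in\Fc$. For any $r<\delta$ this gives $\omega_{\Fc}(r)\le\epsilon$. Since $\epsilon$ was arbitrary and $\omega_{\Fc}$ is monotone, $\omega_{\Fc}(r)\downarrow 0$.

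\emph{Main obstacle and fallback.} The only subtle point is that we rely on the uniform (not merely pointwise) equicontinuity clause of \cref{thm:AA}. If one prefers not to lean on that clause, a self-contained route is a finite-net argument. Cover $\Fc$ by finitely many $\epsilon/3$-balls with centers $f^1,\dots,f^N$. Each $f^i$ is uniformly continuous on the compact set $X$ by Heine–Cantor, so take $\delta$ to be the minimum of the $N$ individual moduli at level $\epsilon/3$. For any $f\in\Fc$, choose $f^i$ with $\|f-f^i\|_\infty<\epsilon/3$. A three-term triangle inequality,
\[
\|f(x)-f(y)\|_Y\le \|f(x)-f^i(x)\|_Y+\|f^i(x)-f^i(y)\|_Y+\|f^i(y)-f(y)\|_Y,
\]
then bounds $\|f(x)-f(y)\|_Y$ by $\epsilon$ whenever $\|x-y\|_X<\delta$. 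This is the step to write out carefully; everything else is routine.
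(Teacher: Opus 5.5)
Your proposal is correct and follows the paper's proof. Finiteness comes from the uniform boundedness of the compact family $\Fc$, and $\omega_{\Fc}(r)\to 0$ comes from the uniform equicontinuity supplied by the Arzel\`a--Ascoli theorem (\cref{thm:AA}). You are slightly more careful than the paper in stating the monotonicity of $\omega_{\Fc}$ explicitly, and your finite-net fallback just reproves the compact-implies-uniformly-equicontinuous direction of that theorem, so it is sound but not needed.
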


\begin{proof}
Since $\Fc$ is compact in $(C(X,Y),\|\cdot\|_\infty)$, it is uniformly equicontinuous by \cref{thm:AA}, hence for every $\varepsilon>0$ there exists $\delta>0$ such that $\|x-y\|_X\le \delta$ implies $\sup_{f\in\Fc}\|f(x)-f(y)\|_Y\le \varepsilon$.
This is exactly $\omega_{\Fc}(\delta)\le \varepsilon$, proving $\omega_{\Fc}(r)\to 0$ as $r\to 0$.
Finiteness for each $r$ follows from boundedness of $\Fc$ and compactness of $X$.
\end{proof}

\begin{lemma}[Fill distance implies pointwise approximation]\label{lem:fill_distance}
Let $h\in C(X,Y)$ and let $\{a_k\}_{k=1}^n\subset X$ have fill distance
$\delta_n=\sup_{x\in X}\min_k\|x-a_k\|_X$.
Then
\[
\sup_{x\in X}\|h(x)\|_Y\le \max_{1\le k\le n}\|h(a_k)\|_X + \omega_{\{h\}}(\delta_n),
\]
where $\omega_{\{h\}}(r):=\sup\{\|h(x)-h(y)\|_Y:\ \|x-y\|_X\le r\}$ is the modulus of continuity of $h$.
\end{lemma}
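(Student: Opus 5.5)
The plan is a one-point triangle inequality argument. Fix an arbitrary $x\in X$. By the definition of the fill distance, $\min_{k}\|x-a_k\|_X\le\delta_n$. Since $\{a_k\}_{k=1}^n$ is finite, the minimum is attained, so there is an index $k^*=k^*(x)\in\{1,\dots,n\}$ with $\|x-a_{k^*}\|_X\le\delta_n$. This is the only place where the sampling set enters.

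Next we split
\[
\|h(x)\|_Y\le \|h(a_{k^*})\|_Y+\|h(x)-h(a_{k^*})\|_Y .
\]
The first term is at most $\max_{1\le k\le n}\|h(a_k)\|_Y$. The second term is at most $\omega_{\{h\}}(\delta_n)$, directly from the definition of the modulus $\omega_{\{h\}}(r)=\sup\{\|h(x)-h(y)\|_Y:\|x-y\|_X\le r\}$, because the pair $(x,a_{k^*})$ is admissible for $r=\delta_n$. The right-hand side no longer depends on $x$, so taking the supremum over $x\in X$ gives the claim.

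Two small points need care. First, $\omega_{\{h\}}(\delta_n)$ should be finite. This holds because $h$ is continuous on the compact set $X$, hence bounded, so $\omega_{\{h\}}(r)\le 2\sup_X\|h\|_Y<\infty$. Second, the norm in the statement's $\max_k\|h(a_k)\|_X$ is a typo for $\|\cdot\|_Y$, and I will read it that way.

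There is no real obstacle; the only subtlety is the attainment of the minimum in the fill distance. If one preferred a $\sup/\inf$ formulation, one could instead take $\|x-a_{k^*}\|_X\le\delta_n+\eta$ and let $\eta\downarrow0$. That route would need right-continuity of $\omega_{\{h\}}$, which is why using finiteness of the sampling set is the cleaner choice.
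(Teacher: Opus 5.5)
Your proposal is correct and follows the paper's own proof: pick a nearest sample point $a_{k(x)}$ (the minimum is attained because the sampling set is finite), apply the triangle inequality, bound the two terms by the sample maximum and by $\omega_{\{h\}}(\delta_n)$, and take the supremum over $x$. Reading $\|h(a_k)\|_X$ as a typo for $\|h(a_k)\|_Y$ is right, and your remark that $\omega_{\{h\}}$ is finite because $X$ is compact is a sound addition.
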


\begin{proof}
Fix $x\in X$ and choose $k(x)\in\arg\min_{1\le k\le n}\|x-a_k\|$, so that $\|x-a_{k(x)}\|\le \delta_n$.
Then
\[
\|h(x)\|
\le \|h(a_{k(x)})\| + \|h(x)-h(a_{k(x)})\|
\le \max_{1\le k\le n}\|h(a_k)\| + \omega_{\{h\}}(\delta_n).
\]
Taking the supremum over $x\in X$ yields the claim.
\end{proof}

\begin{proof}[Proof of Theorem~\ref{thm:generalization}]
Let $f\in\Fc$ be arbitrary and define the residual function $h_f(x):=O(f,x)$. By the assumption
\[
\|f-g\|_\infty \le C_0\|h_f\|_{\mathcal O}.
\]
If $\mathcal O=C(X,Y)$ with $\|\cdot\|_{\mathcal O}=\|\cdot\|_\infty$, then we can bound $\|h_f\|_\infty$ using the sampling set $\{a_k\}$. Indeed, Lemma~\ref{lem:fill_distance} gives
\[
\|h_f\|_\infty \le \max_{1\le k\le n}\|h_f(a_k)\|_Y + \omega_{\{h_f\}}(\delta_n).
\]
Since $f$ ranges in the compact set $\Fc$ and $O$ is continuous on $\Fc\times X$,
the residual family $O(\Fc)=\{h_f:\ f\in\Fc\}$ is compact in $C(X,Y)$
(as a continuous image of a compact set), hence uniformly equicontinuous.
By Lemma~\ref{lem:modulus_compact}, there exists a common modulus of continuity
\[
\omega_{\Fc}(r):=\sup_{h\in O(\Fc)}\sup_{\|x-y\|\le r}\|h(x)-h(y)\|
\quad\text{with}\quad
\omega_{\Fc}(r)\to 0\ \text{as } r\to 0,
\]
and in particular $\omega_{\{h_f\}}(r)\le \omega_{\Fc}(r)$ for each $f\in\Fc$.
Therefore,
\[
\|h_f\|_\infty
\le \max_{1\le k\le n}\|O(f,a_k)\| + \omega_{\Fc}(\delta_n).
\]
Combining with stability yields
\[
\|f-g\|_\infty
\le C_0\left(\max_{1\le k\le n}\|O(f,a_k)\| + \omega_{\Fc}(\delta_n)\right),
\]
which is exactly \cref{eq:generalization_representative} in the uniform-norm residual case.

\end{proof}

\subsection{Initial value problems for ODEs}
We first consider the IVP for ODEs. 

\begin{proposition}\label{prop:ivp}Consider the IVP
    \[
    \dot \phi(t) = F\circ \phi(t), \quad \phi(0)=\phi_0,
    \]
    on $[0,T]$, where $T>0$. 
        Let $\norm{\cdot}_{\R^{2d}}$ be the euclidean norm. Endow the space $C^1([0,T],\R^d)$ with the norm $\norm{g}_{C^1}=\max\setB{\norm{g}_{\infty},\norm{\dot g}_{\infty}}$ where, 
        $\norm{g}_{\infty}=\sup_{t\in [0,T]}\norm{g(t)}_{\R^{d}}$.
        Suppose that $F:\, \R^d\rightarrow\R^d$ is Lipschitz continuous. Define $O$ to be
    \[
    O(\phi,t)= 
    \begin{pmatrix}
        \dot \phi(t) -F\circ \phi(t) \\
        \phi(0)-\phi_0
    \end{pmatrix}.
    \]
Then $O: C^1([0,T],\R^d) \times [0,T] \to \R^{2d}$ is continuous. Furthermore, let $\phi$ be an approximate solution and $\varphi$ be any true solution. Then 
        \begin{equation}
        \begin{aligned}
    E_{\varphi}(\phi)= \norm{\phi - \varphi}_{\infty} & \leq E^I_{\mathcal{O}}(\phi)e^{LT} + E^R_{\mathcal{O}}(\phi)\frac{e^{LT}-1}{L} \\
         &\le E_{\mathcal{O}}(\phi) \left(e^{LT} + \frac{e^{LT}-1}{L} \right),
    \end{aligned}\label{eq:ivp_est}
        \end{equation}
    where $E^I_{\mathcal{O}}(\phi)=\norm{\phi(0)-\phi_0}$ denotes the initial error, $E^R_{\mathcal{O}}(\phi)=\sup_{t\in [0,T]}\norm{\dot \phi(t) -F\circ \phi(t)}$ is the ODE residual error, and $E_{\mathcal O}(\phi)=\norm{O(\phi,\cdot)}_{\infty} = \sup\setB{\norm{O(\phi,t)}_{\R^{2d}} :t\in [0,T]}$, and $L$ is the Lipschitz constant of $F$. Clearly, we have $\max(E^I_{\mathcal{O}}(\phi), E^R_{\mathcal{O}}(\phi))\le E_{\mathcal O}(\phi)$. In particular, the estimate (\ref{eq:ivp_est}) shows that \(O\) has a unique solution. 
\end{proposition}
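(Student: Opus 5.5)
The proof has two parts: continuity of $O$, and a Grönwall estimate for the error. For continuity, I will show that $(\phi,t)\mapsto O(\phi,t)$ is jointly continuous on $C^1([0,T],\R^d)\times[0,T]$. Fix $(\phi,t)$ and take $(\psi,s)$ nearby. For the derivative component I split
\[
\norm{\dot\psi(s)-\dot\phi(t)}\le \norm{\dot\psi-\dot\phi}_\infty+\norm{\dot\phi(s)-\dot\phi(t)},
\]
where the first term is controlled by $\norm{\psi-\phi}_{C^1}$ and the second by uniform continuity of $\dot\phi$ on $[0,T]$. The term $F\circ\psi(s)-F\circ\phi(t)$ is bounded the same way, using the Lipschitz constant $L$ of $F$ together with continuity of $\phi$. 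The initial component $\psi(0)-\phi_0$ depends continuously on $\psi$ through $\norm{\psi-\phi}_\infty$.

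For the estimate, let $\varphi$ be any true solution, so $\dot\varphi=F\circ\varphi$ and $\varphi(0)=\phi_0$. Set $e(t)=\phi(t)-\varphi(t)$ and $r(t)=\dot\phi(t)-F(\phi(t))$, so that $\sup_t\norm{r(t)}=E^R_{\mathcal O}(\phi)$. Then
\[
e(t)=e(0)+\int_0^t\big(F(\phi(s))-F(\varphi(s))\big)\,ds+\int_0^t r(s)\,ds,
\]
which gives
\[
\norm{e(t)}\le E^I_{\mathcal O}(\phi)+E^R_{\mathcal O}(\phi)\,t+L\int_0^t\norm{e(s)}\,ds .
\]
I apply the integral Grönwall inequality with the nondecreasing forcing $\alpha(t)=a+bt$, where $a=E^I_{\mathcal O}(\phi)$ and $b=E^R_{\mathcal O}(\phi)$. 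To obtain exactly the stated constants, I instead compare with the solution of $u'=Lu+b$, $u(0)=a$, namely
\[
u(t)=a e^{Lt}+\tfrac{b}{L}\big(e^{Lt}-1\big).
\]
Concretely, let $v(t)$ denote the right-hand side of the integral inequality. Then $v'=b+L\norm{e}\le b+Lv$, so $(e^{-Lt}v)'\le be^{-Lt}$. Integrating gives $v(t)\le u(t)$. Since $u$ is increasing, taking the supremum over $t\in[0,T]$ yields the first inequality in \eqref{eq:ivp_est}.

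The second inequality follows from $\max(E^I_{\mathcal O},E^R_{\mathcal O})\le E_{\mathcal O}$. This holds because the Euclidean norm on $\R^{2d}$ dominates the norm of each block: for every $t$, $\norm{O(\phi,t)}_{\R^{2d}}$ is at least both $\norm{\phi(0)-\phi_0}$ and $\norm{r(t)}$.

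Uniqueness then follows by applying the estimate with $\phi$ equal to another exact solution $\tilde\varphi$. In that case $E_{\mathcal O}(\tilde\varphi)=0$, so $\norm{\tilde\varphi-\varphi}_\infty=0$. I also need $\norm{\dot{\tilde\varphi}-\dot\varphi}_\infty=0$ to get uniqueness in the $C^1$ norm. This holds since $\dot{\tilde\varphi}-\dot\varphi=F\circ\tilde\varphi-F\circ\varphi$, which vanishes once $\tilde\varphi=\varphi$. Existence is given by Picard–Lindelöf, since $F$ is globally Lipschitz.

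The main obstacle is a minor bookkeeping point. The statement measures the error only in $\norm{\cdot}_\infty$ while the space $\mathcal G$ carries the $C^1$ norm, and uniqueness must be argued in $\mathcal G$; I handle this through the derivative identity above. The Grönwall step is routine, provided I use the differential comparison rather than the crude bound $\alpha(T)e^{LT}$, which would not give the stated constants.
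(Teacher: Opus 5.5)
Your proposal is correct and follows essentially the same route as the paper: the same integral inequality $\norm{e(t)}\le E^I_{\mathcal O}(\phi)+E^R_{\mathcal O}(\phi)\,t+L\int_0^t\norm{e(s)}\,ds$ closed by Gr\"onwall. The paper uses the full integral form $\alpha(t)+\int_0^t L\alpha(s)e^{L(t-s)}\,ds$ and integrates by parts, which gives exactly the constants your differential comparison produces. Your explicit continuity check for $O$ and your appeal to Picard--Lindel\"of for existence fill in points the paper only asserts. The extra derivative step in your uniqueness argument is harmless but unnecessary, since $\norm{\tilde\varphi-\varphi}_\infty=0$ already makes the two functions identical.
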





\begin{proof}
    For brevity, let 
    $$
    \varepsilon \coloneqq 
    E^R_{\mathcal{O}}(\phi)=\sup_{t\in [0,T]}\norm{\dot \phi(t) -F\circ \phi(t)}. 
    $$
    We have
    \begin{equation}
        \varphi(t) = \phi_0 + \int_0^t F\circ \varphi(s)ds,\quad t\in[0,
        T], 
    \end{equation}
    and
    \begin{align*}
        \phi(t) & = \phi(0) + \int_0^t \dot{\phi}(s)ds = \int_0^t [F\circ \phi(s) + (\dot{\phi}(s) - F\circ \phi(s))] ds, \quad t\in[0,
        T]. 
    \end{align*}
    Combining the equations above leads to 
    \begin{align*}
        \norm{\phi(t) - \varphi(t)} &\le  \norm{\phi(0) - \phi_0} + \int_0^t 
        \left(L \norm{\phi(s) - \varphi(s)} + \varepsilon \right)ds \\
        &  = \norm{\phi(0) - \phi_0} + \varepsilon t + L \int_0^t 
        \norm{\phi(s) - \varphi(s)} ds. 
    \end{align*}
    By Gronwall's inequality, with $\alpha(t)=\norm{\phi(0) - \phi_0} + \varepsilon t$ and $\beta(t)= L$, and integration by parts, we obtain  
    \begin{align*}
        \norm{\phi(t) - \varphi(t)} &\le   \alpha(t) + \int_0^t L\alpha(s) e^{L(t-s)}ds \\
        &  = \alpha(t) - \alpha(t) e^{L(t-s)} \vert^{s=t}_{s=0} + \varepsilon \int_0^t e^{L(t-s)}ds \\
        & = \norm{\phi(0) - \phi_0} e^{L t}  -  \frac{\varepsilon}{L} e^{L(t-s)} \vert^{s=t}_{s=0} \\
        & = \norm{\phi(0) - \phi_0} e^{L t} +  \frac{\varepsilon}{L}(e^{Lt} - 1),
    \end{align*}
    for all $t\in [0,T]$, which proves (\ref{eq:ivp_est}). Clearly, (\ref{eq:ivp_est}) also implies that \(O\) has a unique solution. 
\end{proof}

\begin{remark}\label{rem:estimate}
After training a neural network approximate solution, we can empirically estimate the true solution error using (\ref{eq:ivp_est}) as follows. We can simply evaluate $E^I_{\mathcal{O}}(\phi)=\norm{\phi(0)-\phi_0}$. Furthermore, we can estimate $E^R_{\mathcal{O}}(\phi)$ by 
    \begin{equation}
    E^R_{\mathcal{O}}(\phi)\le \max_{n=1,\dots,N}\norm{\dot \phi(t_n) -F\circ \phi(t_n)}+L_{R}\Delta t,\label{eq:inv_est_test}		    
    \end{equation}
where $L_{R}$ is a Lipschitz constant of $\dot \phi(\cdot) -F\circ \phi(\cdot)$ on $[0,T]$, and $\Delta t$ is the largest difference between the adjacent $t_n$'s, with $\{t_n\}$ being the test points at which the ODE residual $\dot \phi(t_n) -F\circ \phi(t_n)$ is evaluated.  Inequality (\ref{eq:inv_est_test}) follows from the triangle inequality and the Lipschitz continuity of $\dot \phi(\cdot) -F\circ \phi(\cdot)$ on $[0,T]$. Note that the right-hand side of (\ref{eq:inv_est_test}) represents an upper bound on the true ODE residual error $E^R_{\mathcal O}(\phi)= \sup_{t\in [0,T]}\norm{\dot \phi(t) -F\circ \phi(t)}$. In (\ref{eq:inv_est_test}), we over-approximate $E^R_{\mathcal O}(\phi)$ using the maximum ``observed" ODE residual error, enhanced by an error bound employing the Lipschitz constant $L_R$. The Lipschitz constant $L_R$ can be further over-approximated as $L_{\dot \phi} + L L_\phi$, where $L_{\dot \phi}$ and $L_{\phi}$ are the Lipschitz constants for $\dot\phi$ and $\phi$, respectively. Generally, determining the Lipschitz constants for a neural network is challenging, particularly for multi-layered networks. Section \ref{sec:Estimation of Lipschitz Constants} provides analytical estimates for function composition useful for this task. In practice, we can also directly compute upper bounds of the true residual error $E^R_{\mathcal O}(\phi)$, e.g., using a satisfiability modulo theories (SMT) solver.
\end{remark}

\begin{remark}\label{rem:validated_sol_ivp}
As can be seen from its proof, Proposition \ref{prop:ivp} relies on Gronwall's inequality to estimate the error between the true and approximate solutions to the IVP. In practice, Gronwall's inequality, using a uniform Lipschitz constant, cannot offer useful estimates unless the time duration \([0, T]\) is very small. To illustrate, consider a moderately sized Lipschitz constant of 5 with a time duration \([0, 2]\). Then the term \(e^{2\times 5} = e^{10}\) in the error estimate would render the bound practically useless. One potential approach to overcome this is to develop validated bounds of neural network solutions that: 1) use a Lipschitz constant matrix instead of a constant; 2) estimate this Lipschitz constant matrix on a smaller domain that is computed as a guaranteed a priori enclosure of the solution on a smaller time interval; 3) continue this process until a given time interval is reached. This can be a potential area of future investigation. We refer the readers to \cite[Chapter 4.3]{li2022formal} for validated computation of solutions to IVPs. The numerical example in Section \ref{sec:numerical} was implemented using such an approach.
\end{remark}




\subsection{Elliptic equations}

In this subsection, we consider a class of elliptic equations that are slightly more general than the Laplace equation. 

\begin{proposition}\label{prop:elliptic}
    Consider a neural approximation $\phi$ to the solution of the elliptic equation
    \begin{equation}
        \label{eq:elliptic}
        \Delta \phi + c\phi = F\quad \text{in }\Omega, 
    \end{equation}
    where \( \Omega \) is a bounded domain in \( \R^n \), \( c \) and \( F \) are continuous on \( \bar\Omega \) with \( c \le 0 \). Let \( h \) be a continuous function on \( \partial \Omega \) defining the Dirichlet boundary condition
    \begin{equation}
        \label{eq:elliptic:boundary}
        \phi = h\quad \text{on }\partial\Omega. 
    \end{equation}
    Pick $O:\,C^2(\Omega)\cap C^1(\bar\Omega)\times \Omega\rightarrow\R^2$ to be
    \[
    O(\phi,x)= 
    \begin{pmatrix}
    \Delta \phi(x) + c(x)\phi(x) -F(x)\\
        {\phi(p)-h(p)}
        \end{pmatrix}
    \]
        where $x\in \Omega$ and $p=proj_{\partial \Omega}(x)$ is the projection of $x$ onto the boundary $\partial\Omega$.
        Then $O$ is continuous. Suppose that $\varphi$ is a  true solution. Assume that $\phi,\varphi \in C^2(\Omega)\cap C(\bar\Omega)$. Then 
    \begin{equation}\label{eq:poisson_est}
    E_{\varphi}(\phi)=\sup_{\bar\Omega}\abs{\phi - \varphi} \le E^B_{\mathcal{O}}(\phi) + C E^R_{\mathcal{O}}(\phi),
    \end{equation}
    where $E^B_{\mathcal{O}}(\phi)=\sup_{\partial\Omega}\abs{\phi - h}$ and $E^R_{\mathcal{O}}(\phi)=\sup_{x\in\Omega}\abs{\Delta \phi(x) + c(x)\phi(x) -F(x)}$, and $C>0$ only depends on $n$ and the diameter of $\Omega$. For example, if $\Omega\subset \mathcal{B}_R(x_0)$ for some $x_0\in\R^n$ and $R>0$, we can take $C=\frac{R^2}{2n}$. 
In particular, the estimate (\ref{eq:poisson_est}) also shows that \(O\) has a unique solution. 
\end{proposition}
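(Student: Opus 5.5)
The plan is the classical weak maximum principle argument (Gilbarg--Trudinger, Theorem 3.7), applied to the error $w:=\phi-\varphi$. Since both $\phi$ and $\varphi$ lie in $C^2(\Omega)\cap C(\bar\Omega)$ and $\varphi$ solves \eqref{eq:elliptic}--\eqref{eq:elliptic:boundary}, the error satisfies
\[
Lw:=\Delta w + c w = \Delta\phi + c\phi - F =: \rho \quad\text{in }\Omega,\qquad w=\phi-h \text{ on }\partial\Omega,
\]
with $\sup_\Omega|\rho| = E^R_{\mathcal O}(\phi)$ and $\sup_{\partial\Omega}|w| = E^B_{\mathcal O}(\phi)$. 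The goal therefore reduces to the a priori estimate $\sup_{\bar\Omega}|w|\le \sup_{\partial\Omega}|w| + C\sup_\Omega|\rho|$ for the operator $L$ with $c\le 0$.

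The key step is to build a barrier. Assume $\Omega\subset\mathcal B_R(x_0)$ and set
\[
v(x) := \frac{R^2-|x-x_0|^2}{2n}\ \ge 0 \quad\text{on }\bar\Omega,
\]
so that $\Delta v = -1$ and $0\le v\le R^2/(2n)$. Put $B:=E^B_{\mathcal O}(\phi)$ and $N:=E^R_{\mathcal O}(\phi)$, and consider $z^\pm := \pm w - B - N v$. Then
\[
L z^\pm = \pm\rho - cB + N - cNv \;\ge\; \pm\rho + N \;\ge\; 0,
\]
where we used $c\le 0$, $B\ge 0$ and $v\ge 0$. On $\partial\Omega$ we have $z^\pm \le |w|-B\le 0$.

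Next apply the weak maximum principle for $c\le 0$: if $Lz\ge 0$ in $\Omega$, then $\sup_\Omega z\le \sup_{\partial\Omega} z^+$. This gives $z^\pm\le 0$, and hence
\[
|w|\le B + Nv \le B + \frac{R^2}{2n}N .
\]
That is \eqref{eq:poisson_est} with $C=R^2/(2n)$. Since one may take $R=\operatorname{diam}\Omega$, $C$ depends only on $n$ and the diameter. Uniqueness then follows at once: if $\phi$ is also a true solution, then $B=N=0$, so $\phi=\varphi$.

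The main obstacle is the maximum principle step. When $c\le0$ is merely continuous and $z$ is only $C^2(\Omega)\cap C(\bar\Omega)$, one cannot directly use $\Delta z\le 0$ at an interior maximum, because the $cz$ term interferes. I would first restrict to the open set $\Omega^+=\{z>0\}$. There $cz\le 0$, so $\Delta z\ge -cz\ge 0$, and the weak maximum principle for subharmonic functions applies. That principle is proved by the usual perturbation $z+\epsilon e^{\gamma x_1}$, which makes the inequality strict so that an interior maximum is impossible, followed by letting $\epsilon\to0$. It yields $\sup_{\Omega^+} z\le \sup_{\partial\Omega^+} z$. On $\partial\Omega^+\cap\Omega$ we have $z=0$, so the supremum is attained on $\partial\Omega$, which gives the claim.

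Continuity of $O$ should be routine: the residual component is continuous in $(\phi,x)$ with respect to the $C^2$ topology, and the boundary component $\phi(p(x))-h(p(x))$ is continuous wherever the projection $p$ is continuous. I would treat this part briefly, noting that continuity of the projection (for instance for convex $\Omega$) is implicitly assumed.
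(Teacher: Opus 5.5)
Your proposal is correct and follows the paper's route: form the error $u=\phi-\varphi$, note that it solves $\Delta u+cu=\Delta\phi+c\phi-F$ in $\Omega$ with $u=\phi-h$ on $\partial\Omega$, and apply the maximum-principle a priori estimate for $c\le 0$. The paper simply cites Han's Theorem 4.3.12 for that estimate, whereas you write out its standard proof, with the barrier $(R^2-|x-x_0|^2)/(2n)$ supplying the explicit constant $C=R^2/(2n)$.
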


\begin{proof}
    The proof directly uses an a priori estimate for this class of elliptic PDEs \cite{han2011basic}. Let $u = \phi - \varphi$. Then $u$ satisfies 
    $$
    \Delta u + c u = \Delta \phi + c\phi - F - \Delta \varphi - c\varphi + F = O(\phi,\cdot)\quad  \text{in }\Omega, 
    $$
    and
    $$
    u = \phi - h \quad  \text{on }\partial \Omega.
    $$
    Applying a standard a priori estimate for this PDE \cite[Theorem 4.3.12]{han2011basic} gives the conclusion. 
\end{proof}

\begin{remark}\label{rem:est2}
    Similar to Remark \ref{rem:estimate}, one can use the Lipschitz constants of $\phi$, $h$, and the residual $\Delta \phi(x) + c(x)\phi(x) - F(x)$ along with test errors evaluated at test points obtained from uniform gridding to over-approximate the right-hand side of (\ref{eq:poisson_est}). One may also use an SMT solver to directly obtain an upper bound of the right-hand side of (\ref{eq:poisson_est}).
\end{remark}

\subsection{Parabolic equations}

A similar result can be obtained for a class of parabolic equations. 

\begin{proposition}\label{prop:parabolic}
    Let $\Omega$ be a bounded domain in \( \R^n \) and $T$ be a positive constant. Define 
    $$
    \Omega_T = \Omega \times [0,T] = \setB{(x,t):\,x\in\Omega, \;t\in [0,T]}. 
    $$
    Let $c$ and $F$ be continuous on \( \bar\Omega_T \) with $c\ge -c_0$ for some nonnegative constant $c_0$. Let \( h \) be a continuous function on \( \partial \Omega \times [0,T] \) and $\phi_0$ be a continuous function on \( \bar\Omega \). 
    
    Consider a neural approximation $\phi$ to the solution of the following initial/boundary value problem 
    \begin{equation}
        \label{eq:parabolic}
        \begin{aligned}
            \phi_t - \Delta \phi + c\phi & = F\quad \text{in }\Omega_T, \\
            \phi(\cdot,0) & = \phi_0 \quad \text{on }\Omega,\\
            \phi & = h \quad \text{on }\partial\Omega\times [0,T]. 
        \end{aligned}
    \end{equation}
    Pick $O:\,C^{2,1}(\Omega_T)\times\bar\Omega_T\rightarrow\R^3$ to be
    \[
    O(\phi,\correction{(x,t)})= 
        \begin{pmatrix}
            \phi_t\correction{(x,t)} - \Delta \phi\correction{(x,t)} + c\correction{(x,t)}\phi\correction{(x,t)} -F\correction{(x,t)}\\
            \phi_t(0,x) - \phi_0\correction{(x,t)}\\
            \phi_t(y) - \phi_0(y)
        \end{pmatrix},
    \]
          where $y=proj_{[0,T]\times\partial\Omega}(\correction{(x,t)})$. Then $O$ is continuous. 
    Suppose that $\varphi$ is the true solution. Assume that $\phi,\varphi \in C^{2,1}(\Omega_T)\cap C(\bar\Omega_T)$. Then 
    \begin{equation}
    E_{\varphi}(\phi)=\sup_{\bar\Omega}\abs{\phi - \varphi} \le e^{c_0T}\max\left(E^B_{\mathcal{O}}(\phi),\, E^I_{\mathcal{O}}(\phi)\right)+ C E^R_{\mathcal{O}}(\phi),		\label{eq:heat_est}    
    \end{equation}
    where 
$E^B_{\mathcal{O}}(\phi)=\sup_{\partial \Omega \times [0,T]} \abs{\phi - h}$, $E^I_{\mathcal{O}}(\phi)=\sup_{\bar\Omega}\abs{\phi(\cdot,0)-\phi_0}$, 
and $E^R_{\mathcal{O}}(\phi)=\sup_{\bar \Omega_T}\abs{\phi_t - \Delta \phi + c\phi -F}$, and   
$C>0$ can be taken as $\frac{e^{c_0T}-1}{c_0}$ if $c_0>0$ or $T$ if $c_0=0$.
\end{proposition}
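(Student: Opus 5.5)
Set $u=\phi-\varphi$ on $\bar\Omega_T$ and prove the estimate with a maximum-principle comparison argument, the parabolic analogue of the a priori estimate used for Proposition~\ref{prop:elliptic}. Subtracting the equations satisfied by $\phi$ and $\varphi$ gives
\[
u_t-\Delta u+cu=R:=\phi_t-\Delta\phi+c\phi-F\ \text{in }\Omega_T,\qquad |u|\le E^B_{\mathcal O}(\phi)\ \text{on }\partial\Omega\times[0,T],\qquad |u(\cdot,0)|\le E^I_{\mathcal O}(\phi)\ \text{on }\bar\Omega,
\]
with $|R|\le E^R_{\mathcal O}(\phi)$. Write $M=\max(E^B_{\mathcal O}(\phi),E^I_{\mathcal O}(\phi))$ and $\varepsilon=E^R_{\mathcal O}(\phi)$. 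Continuity of $O$ and uniqueness come out exactly as in the earlier case studies; continuity is routine composition of continuous maps, and uniqueness follows from the estimate itself once $M=\varepsilon=0$.

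\textbf{Step 1 (removing the lower bound on $c$).} Put $v=e^{-c_0t}u$. Then
\[
v_t-\Delta v+(c+c_0)v=e^{-c_0t}R,
\]
where now $c+c_0\ge 0$ and $|e^{-c_0t}R|\le e^{-c_0t}\varepsilon$. The boundary and initial data of $v$ are bounded by $M$.

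\textbf{Step 2 (barrier).} Take
\[
w(t)=M+\varepsilon\int_0^t e^{-c_0s}\,ds,
\]
so that $w_t=\varepsilon e^{-c_0t}$, $w\ge 0$, and $w\ge M$ on the parabolic boundary. Since $(c+c_0)w\ge 0$, we get $\mathcal L w:=w_t-\Delta w+(c+c_0)w\ge \varepsilon e^{-c_0t}\ge \pm\mathcal L v$. Hence $w\mp v$ is a supersolution that is nonnegative on the parabolic boundary.

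\textbf{Step 3 (conclusion).} The weak maximum principle for $\mathcal L$ with nonnegative zeroth-order coefficient gives $|v|\le w$ on $\bar\Omega_T$; I would cite \cite{han2011basic}, as in Proposition~\ref{prop:elliptic}. Multiplying back by $e^{c_0t}$,
\[
|u(x,t)|\le e^{c_0t}M+\varepsilon\int_0^t e^{c_0(t-s)}\,ds\le e^{c_0T}M+\varepsilon\,\frac{e^{c_0T}-1}{c_0},
\]
and the integral equals $T$ when $c_0=0$. This is exactly the constant $C$ in the statement.

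\textbf{Main obstacle.} The delicate step is the maximum principle itself, since the hypotheses only give $\phi,\varphi\in C^{2,1}(\Omega_T)\cap C(\bar\Omega_T)$. I would argue by contradiction at an interior positive maximum of $z=v-w$, or of $z-\delta t$ for small $\delta>0$ to make the inequality strict. At a positive maximum with $t>0$ we have $z_t\ge 0$, $\Delta z\le 0$ and $(c+c_0)z\ge 0$, so $\mathcal L(z-\delta t)\ge -\delta$. This contradicts $\mathcal L(z-\delta t)\le -\delta<0$ once $z$ is arranged to have strictly negative $\mathcal L$; I would secure that by adding $\delta t$ to $w$ in Step 2. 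Letting $\delta\to 0$ then finishes the argument. Only continuity up to the parabolic boundary is used, which is available.
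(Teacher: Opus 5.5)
Your argument is correct and is essentially the paper's: the paper compares $\pm u$ directly with the barrier $e^{c_0t}B+\frac{e^{c_0t}-1}{c_0}E^R_{\mathcal O}(\phi)$ via the weak maximum principle for $c\ge -c_0$ \cite[Corollary 5.3.4]{han2011basic}. That barrier is exactly $e^{c_0t}w(t)$, so your conjugation by $e^{-c_0t}$ only makes explicit the reduction to a nonnegative zeroth-order coefficient, and you also correctly supply the $E^R_{\mathcal O}(\phi)$ that the paper's proof mislabels as $F$. (One slip in your maximum-principle sketch: at a positive maximum of $z-\delta t$ you get $\mathcal L(z-\delta t)\ge 0$, not $\ge -\delta$, and it is this that contradicts $\mathcal L(z-\delta t)\le -\delta$.)
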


\begin{proof}
    The proof again relies on an a priori estimate for this class of parabolic PDEs \cite{han2011basic}. The choice of the constant $C$ is slightly optimized compared with \cite[Theorem 5.3.8]{han2011basic}; hence for completeness we provide a direct proof based on the weak maximum principle for this parabolic PDE. The argument follows closely that in \cite[Theorem 5.3.8]{han2011basic}. 
    
    Let $u = \phi - \varphi$. Then $u$ satisfies 
    $$
    Lu \coloneqq u_t - \Delta u + c u = (\phi_t - \Delta \phi + c\phi - F) - (\varphi_t - \Delta \varphi + c\varphi - F) = O(\phi,\cdot)\quad  \text{in }\Omega_T, 
    $$
    $$
    u(\cdot,0) = \phi(\cdot,\correction{0}) - \phi_0 \quad  \text{on } \Omega,
    $$
    and
    $$
    u = \phi - \varphi = \phi - h \quad  \text{on }\partial \Omega. 
    $$
    Let 
    $$
    B = \max\left(E^B_{\mathcal{O}}(\phi),\, E^I_{\mathcal{O}}(\phi)\right),\quad E = E^R_{\mathcal{O}}(\phi). 
    $$
    It follows that $L(\pm u)\le E$ in $\Omega_T$ and $\pm u\le B$ on $(\bar\Omega \times \setB{0}) \cup (\partial \Omega \times [0,T])$. 
    
    Consider the auxiliary function 
    $$
    v(x,t) = \begin{cases}
        e^{c_0t}B + \frac{e^{c_0t}-1}{c_0} F,\quad c_0>0,\\
        B +  F t,\quad c_0=0.
    \end{cases}
    $$
    We have
    \begin{align*}
        Lv &= c_0 e^{c_0t} B + e^{c_0 t} F + c \left(e^{c_0t}B + \frac{e^{c_0t}-1}{c_0} F\right)\\
        & = (c_0 + c)e^{c_0t} B  + e^{c_0 t} F  + c\frac{e^{c_0t}-1}{c_0} F \\
        & \ge e^{c_0 t} F + F - e^{c_0 t} F = F,
    \end{align*}
    where simply used the fact that $c\ge -c_0$ (and $c_0+c\ge 0$). The same holds trivially when $c_0=0$. Furthermore, $v\ge B$ on $(\bar\Omega \times \setB{0}) \cup (\partial \Omega \times [0,T])$. It follows that $L(\pm u)\le Lv$ in $\Omega_T$ and $\pm u \le v$ on $(\bar\Omega \times \setB{0}) \cup (\partial \Omega \times [0,T])$. By the maximum principle \cite[Corollary 5.3.4]{han2011basic}, we have $\abs{u}\le v$ in $\Omega_T$ and the conclusion follows. 
\end{proof}

\begin{remark}
    Similar to Remarks \ref{rem:estimate} and \ref{rem:est2}, we can estimate the right-hand side of (\ref{eq:heat_est}) after training the neural network to obtain an upper bound of the true solution error. 
\end{remark}

\subsubsection{A Generalization Bound in an $L_2$ Setting}

\begin{proposition}\label{prop:parabolic2}
    Let $\Omega$ be a bounded domain in \( \R^n \) and $T$ be a positive constant. Define 
    $$
    \Omega_T = \Omega \times [0,T] = \setB{(x,t):\,x\in\Omega, \;t\in [0,T]}. 
    $$
    Let $c,\alpha$ and $F$ be continuous on \( \bar\Omega_T \) with $c\ge -c_0$ for some nonnegative constant $c_0$ and $\alpha\ge \alpha_0$ for some constant $\alpha_0>0$. Let \( h \) be a continuous function on \( \partial \Omega \times [0,T] \) and $\phi_0$ be a continuous function on \( \bar\Omega \). 
    
    Let $\varphi$ be the true solution to the following initial/boundary value problem 
    \begin{equation}
        \begin{aligned}
            \varphi_t - \alpha\Delta \varphi + c\varphi & = F\quad \text{in }\Omega_T, \\
            \varphi(\cdot,0) & = \varphi_0 \quad \text{on }\Omega,\\
            \varphi & = h \quad \text{on }\partial\Omega\times [0,T]. 
        \end{aligned}
    \end{equation}
    Let $\phi$ be a neural approximation to the solution of the initial/boundary value problem, satisfying
    \begin{equation}
        \begin{aligned}
            \phi_{t} - \alpha\Delta \phi + c\phi = F+\varepsilon_1\quad &\text{in }\Omega_T, \\
            \phi(\cdot,0) = \correction{\varphi_0}+\varepsilon_2 \quad &\text{on }\Omega,\\
            \phi = h+\varepsilon_3 \quad &\text{on }\partial\Omega\times [0,T].
        \end{aligned}
    \end{equation}
    Assume the error functions satisfy 
    \[\varepsilon_1(x,t)\in C^1(\Omega_T)\cap L_2(\Omega_T),\quad\varepsilon_2(x)\in C^1(\Omega)\cap L_2(\Omega),\quad\varepsilon_3(x,t) \in C^2\big([0,T]; H^{1/2}(\partial\Omega)\cap L_\infty(\partial\Omega)\big)\]
    
    Let $p\in\Omega_T$ and let $O:\,C^{2,1}(\Omega_T)\times\bar\Omega_T\rightarrow\R^4$ be
    \[O(\phi,p)=\begin{pmatrix}
        \epsilon_1(p)
        \\\epsilon_2(\mathrm{proj}_\Omega(p))
        \\\epsilon_3(\mathrm{proj}_{\partial\Omega\times [0,T]}(p))
        \\(\epsilon_3)_t(\mathrm{proj}_{\partial\Omega\times [0,T]}(p))
    \end{pmatrix}\]
    Due to continuity of the error terms and the projection operator, $O$ is continuous.    
    Assume that $\phi,\varphi \in C^{2,1}(\Omega_T)\cap C(\bar\Omega_T)$. Then 
    \begin{equation}
    \begin{aligned}
        \|u(\cdot,t)\|_{L_2(\Omega)}^2
        \leq &~2e^{(2c_0+1)t}\left(\|\varepsilon_2\|_{L_2(\Omega)}^2+2|\Omega|\|\varepsilon_3(\cdot,0)\|_{L_\infty(\partial\Omega)}^2\right)\\
        &+2e^{(2c_0+1)t}\left(\int_0^te^{-(2c_0+1)s}[\|\varepsilon_1(\cdot,s)\|_{L_2(\Omega)}^2+|\Omega|\|(\varepsilon_3)_t(\cdot,s)\|_{L_\infty(\partial\Omega)}^2+|\Omega|\|\varepsilon_3(\cdot,s)\|_{L_\infty(\partial\Omega)}^2]ds\right).
    \end{aligned}
    \end{equation}
\end{proposition}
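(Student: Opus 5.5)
The plan is to reduce to a problem with homogeneous boundary data by lifting the boundary error, and then run a standard $L_2$ energy estimate closed by Gronwall's inequality. Let $u:=\phi-\varphi$. Subtracting the two problems gives $u_t-\alpha\Delta u+cu=\varepsilon_1$ in $\Omega_T$, $u(\cdot,0)=\varepsilon_2$, and $u=\varepsilon_3$ on $\partial\Omega\times[0,T]$.

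\textbf{Step 1 (lifting).} Let $r(\cdot,t)$ be the harmonic lifting of $\varepsilon_3(\cdot,t)$ from \cref{lem:harmonic_extension}, applied with $\varepsilon_4=\varepsilon_3$. That lemma gives $r\in C^2([0,T];H^1(\Omega))$, so $r_t$ exists and is itself the harmonic extension of $(\varepsilon_3)_t$. Since $\varepsilon_3(\cdot,t)\in L_\infty(\partial\Omega)$, I will invoke the weak maximum principle for harmonic functions. This gives $\|r(\cdot,t)\|_{L_\infty(\Omega)}\le\|\varepsilon_3(\cdot,t)\|_{L_\infty(\partial\Omega)}$ and the analogue for $r_t$. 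Consequently
\[
\|r(\cdot,t)\|_{L_2(\Omega)}^2\le|\Omega|\,\|\varepsilon_3(\cdot,t)\|_{L_\infty(\partial\Omega)}^2,\qquad
\|r_t(\cdot,t)\|_{L_2(\Omega)}^2\le|\Omega|\,\|(\varepsilon_3)_t(\cdot,t)\|_{L_\infty(\partial\Omega)}^2 .
\]
These are exactly the $|\Omega|\,\|\cdot\|_{L_\infty(\partial\Omega)}^2$ terms in the claim.

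\textbf{Step 2 (energy estimate).} Set $w:=u-r$. Then $w=0$ on $\partial\Omega\times[0,T]$ and $w(\cdot,0)=\varepsilon_2-r(\cdot,0)$. Because $\Delta r=0$, $w$ satisfies $w_t-\alpha\Delta w+cw=G$ with $G:=\varepsilon_1-r_t-cr$. I will multiply by $w$, integrate over $\Omega$, and integrate by parts using $w|_{\partial\Omega}=0$. This gives
\[
\tfrac12\tfrac{d}{dt}\|w\|_{L_2}^2+\int_\Omega\alpha|\nabla w|^2+\int_\Omega cw^2=\int_\Omega Gw ,
\]
when $\alpha$ is constant in $x$, as in the experiments. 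Dropping the nonnegative gradient term and using $c\ge-c_0$ together with Young's inequality $\int Gw\le\tfrac12\|G\|^2+\tfrac12\|w\|^2$, I get
\[
\tfrac{d}{dt}\|w\|^2\le(2c_0+1)\|w\|^2+\|G\|^2 .
\]
Gronwall then yields
\[
\|w(t)\|^2\le e^{(2c_0+1)t}\Bigl(\|w(0)\|^2+\int_0^t e^{-(2c_0+1)s}\|G(s)\|^2\,ds\Bigr).
\]

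\textbf{Step 3 (assembly).} I will use $\|u\|^2\le2\|w\|^2+2\|r\|^2$ and $\|w(0)\|^2\le2\|\varepsilon_2\|^2+2|\Omega|\,\|\varepsilon_3(\cdot,0)\|_\infty^2$. I will bound $\|G\|^2$ by the $\varepsilon_1$, $r_t$ and $r$ contributions, each controlled via Step 1. The stand-alone $2\|r(t)\|^2$ term is absorbed into the time integral or into the $e^{(2c_0+1)t}$ prefactor, possibly at the cost of adjusting constants.

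\textbf{Main obstacle.} I expect the main difficulty to be the constant bookkeeping in Step 3. Expanding $\|\varepsilon_1-r_t-cr\|^2$ naively produces a factor $3$ and a factor $\sup|c|^2$, whereas the statement shows unit coefficients. To match it, I would instead split $\int Gw$ into three separate Young inequalities, each with weight $\tfrac13$ on $\|w\|^2$, so that the total $\|w\|^2$ coefficient stays at $1$. This still leaves the $c$-dependence of the $r$ term, which may have to be absorbed into the constants rather than matched exactly.

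A secondary issue is a genuinely $x$-dependent $\alpha$. In that case integration by parts produces a cross term $\int w\nabla\alpha\cdot\nabla w$. I would handle it either by assuming $\alpha$ depends only on $t$, or by absorbing it into $\alpha_0\|\nabla w\|^2$ via Young's inequality, which changes the exponential rate.

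Finally, justifying the energy identity needs $w\in C^1([0,T];L_2)\cap L_2(0,T;H_0^1)$. This follows from the assumed regularity of $\phi,\varphi$ and from \cref{lem:harmonic_extension}.
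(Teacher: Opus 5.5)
Your strategy is the paper's own: a harmonic lifting $r$ of $\varepsilon_3$ via Lemma~\ref{lem:harmonic_extension}, an $L_2$ energy estimate for $w=u-r$ closed by Young and Gronwall, and the maximum principle to bound $r$ and $r_t$. The gap is Step~3, and the repair you propose does not close it. Once the gradient term is dropped, Young's inequality with weights $\theta_i>0$ gives $\frac{d}{dt}\|w\|^2\le(2c_0+\sum_i\theta_i)\|w\|^2+\sum_i\theta_i^{-1}\|a_i\|^2$, where $a_1=\varepsilon_1$, $a_2=r_t$, $a_3=cr$. Keeping the rate $2c_0+1$ forces $\sum_i\theta_i\le 1$, hence $\max_i\theta_i^{-1}\ge 3$. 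Your choice $\theta_i=\tfrac13$ reproduces exactly the naive factor $3$. The factor $\|c\|_{L_\infty}^2$ on the $r$-term is intrinsic to this route. Moreover, the honest passage $\|u\|^2\le 2\|w\|^2+2\|r\|^2$ doubles the $\|\varepsilon_2\|^2$ coefficient to $4e^{(2c_0+1)t}$. It also adds the stand-alone term $2|\Omega|\|\varepsilon_3(\cdot,t)\|_{L_\infty(\partial\Omega)}^2$, which can be absorbed only with further constants. So, as written, you prove an estimate of the same shape with larger, $c$-dependent constants, not the displayed one.

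Do not try to reverse-engineer the stated constants from the paper's proof. It reaches them only by three moves. First, it writes $\|\varepsilon_1-r_t-cr\|^2\le 2\|\varepsilon_1\|^2+2\|r_t\|^2+2\|cr\|^2$, which is false in general (the constant must be $3$). Second, it then drops $c$. Third, it states its Gronwall bound for $v=u-r$ as a bound on $\|u(\cdot,t)\|^2$ without ever accounting for $r(\cdot,t)$. The two obstacles you flagged are exactly where that argument breaks.

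The stated bound can be recovered by keeping the boundary data out of the energy estimate and handling it with the parabolic maximum principle. This requires $\alpha$ independent of $x$, which is your first and correct option; absorbing $\int w\nabla\alpha\cdot\nabla w$ would need $\nabla\alpha$ bounded, which is not assumed.

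Let $f(s):=\|\varepsilon_3(\cdot,s)\|_{L_\infty(\partial\Omega)}$ and $\lambda:=2c_0+1$. Continuity of $u$ on $\bar\Omega_T$ gives $\varepsilon_2=\varepsilon_3(\cdot,0)$ on $\partial\Omega$. Hence the clamp $\rho:=\max\{-f(0),\min\{f(0),\varepsilon_2\}\}$ equals $\varepsilon_3(\cdot,0)$ there, with $|\rho|\le f(0)$ and $|\varepsilon_2-\rho|\le|\varepsilon_2|$. Let $u_b$ solve the homogeneous equation with $u_b(\cdot,0)=\rho$ and lateral data $\varepsilon_3$, and set $u_a:=u-u_b$.

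Your energy argument, now with zero lateral data and no $r_t$ or $cr$ terms, gives $\|u_a(\cdot,t)\|^2\le e^{\lambda t}\|\varepsilon_2\|^2+\int_0^t e^{\lambda(t-s)}\|\varepsilon_1(\cdot,s)\|^2\,ds$. The weak maximum principle applied to $e^{-c_0t}u_b$, whose zeroth-order coefficient $c+c_0$ is nonnegative, gives $|u_b(\cdot,t)|\le\sup_{s\le t}e^{c_0(t-s)}f(s)$. Since $f$ is Lipschitz with $|f'(s)|\le\|(\varepsilon_3)_t(\cdot,s)\|_{L_\infty(\partial\Omega)}$ and $\lambda\ge 1$, one has $\frac{d}{ds}\bigl(e^{-\lambda s}f^2\bigr)\le e^{-\lambda s}(f')^2$. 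Combined with $e^{2c_0(t-s)}\le e^{\lambda(t-s)}$, this yields $\|u_b(\cdot,t)\|^2\le|\Omega|e^{\lambda t}\bigl(f(0)^2+\int_0^t e^{-\lambda s}\|(\varepsilon_3)_t(\cdot,s)\|_{L_\infty(\partial\Omega)}^2\,ds\bigr)$. Then $\|u\|^2\le 2\|u_a\|^2+2\|u_b\|^2$ is dominated term by term by the stated right-hand side, and $c$ enters only through $c_0$.

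The price is classical solvability of the auxiliary problem for $u_b$. That needs more regularity of $\partial\Omega$ and of the coefficients than ``bounded domain'', comparable to what the lifting lemma already requires.
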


\begin{proof}
    Let $r(\cdot,t)=\xi(\varepsilon_3(\cdot,t))$, where $\xi(\cdot)$ is the harmonic extension operator. As proved in Lemma \ref{lem:harmonic_extension}, $r(x,t)$ is twice differentiable with respect to $t$.
    
    Let 
    \[v:=u-r,\quad f:=\varepsilon_1-r_{t}+\underbrace{\Delta r}_{=0}-cr,\quad\tilde\varepsilon_2:=\varepsilon_2-r.\]
    $v$ solves the following IVBVP:
    \begin{equation}
        \begin{aligned}
            v_{tt} - \alpha\Delta v + cv = f\quad &\text{in }\Omega_T, \\
            v(\cdot,0) = \tilde\varepsilon_2 \quad &\text{on }\Omega,\\
            v = 0 \quad &\text{on }\partial\Omega\times [0,T]. 
        \end{aligned}
    \end{equation}
    Multiply both sides of the PDE by $v$ and integrate over $\Omega$.
    \[
    \frac{1}{2}\frac{d}{dt}\|v\|_{L_2(\Omega)}^2+\|\alpha^{1/2}\nabla v\|_{L_2(\Omega)}^2+\int_{\Omega}cv^2=\int_{\Omega}fv.
    \]
    Using $c\geq-c_0$, and using Young's inequality to bound $\langle f,v\rangle_{L_2(\Omega)}$ gives the following upper bound
    \[
    \frac{1}{2}\frac{d}{dt}\|v\|_{L_2(\Omega)}^2+\|\alpha^{1/2}\nabla v\|_{L_2(\Omega)}^2\leq c_0\|v\|_{L_2(\Omega)}^2+\frac{1}{2}\|f\|_{L_2(\Omega)}^2+\frac{1}{2}\|v\|_{L_2(\Omega)}^2.
    \]
    Finally, we can drop the $\|\nabla v\|_{L_2(\Omega)}^2$ term to obtain a differential inequality
    \[
    \frac{d}{dt}\|v\|_{L_2(\Omega)}^2\leq (2c_0+1)\|v\|_{L_2(\Omega)}^2+\|f\|_{L_2(\Omega)}^2.
    \]
    By Gronwall,
    \begin{equation}
        \|v(\cdot,t)\|_{L_2(\Omega)}^2\leq e^{(2c_0+1)t}\left(\|v(\cdot,0)\|_{L_2(\Omega)}^2+\int_0^te^{-(2c_0+1)s}\|f(\cdot,s)\|_{L_2(\Omega)}^2ds\right).
    \end{equation}
    Expanding each term gives
    \begin{align*}
        \|v(\cdot,0)\|_{L_2(\Omega)}^2
        &=\|\tilde\varepsilon_2\|_{L_2(\Omega)}^2\\
        &=\|\varepsilon_2-r(\cdot,0)\|_{L_2(\Omega)}^2\\
        &\leq 2\|\varepsilon_2\|_{L_2(\Omega)}^2+2\|r(\cdot,0)\|_{L_2(\Omega)}^2\\
        &\leq 2\|\varepsilon_2\|_{L_2(\Omega)}^2+2|\Omega|\|\varepsilon_3(\cdot,0)\|_{L_\infty(\partial\Omega)}^2.
    \end{align*}
    \begin{align*}
        \|f(\cdot,t)\|_{L_2(\Omega)}^2
        &=\|\varepsilon_1-r_{t}-cr\|_{L_2(\Omega)}^2\\
        &\leq 2\|\varepsilon_1\|_{L_2(\Omega)}^2+2\|r_t\|_{L_2(\Omega)}^2+2\|cr\|_{L_\infty(\partial\Omega)}^2\\
        &\leq 2\|\varepsilon_1\|_{L_2(\Omega)}^2+2|\Omega|(\|(\varepsilon_3)_t(\cdot,t)\|_{L_\infty(\partial\Omega)}^2+\|\varepsilon_3(\cdot,t)\|_{L_\infty(\partial\Omega)}^2).
    \end{align*}
    This gives us the following bound
    \begin{equation}
    \begin{aligned}
        \|u(\cdot,t)\|_{L_2(\Omega)}^2
        \leq &~2e^{(2c_0+1)t}\left(\|\varepsilon_2\|_{L_2(\Omega)}^2+2|\Omega|\|\varepsilon_3(\cdot,0)\|_{L_\infty(\partial\Omega)}^2\right)\\
        &+2e^{(2c_0+1)t}\left(\int_0^te^{-(2c_0+1)s}[\|\varepsilon_1(\cdot,s)\|_{L_2(\Omega)}^2+|\Omega|\|(\varepsilon_3)_t(\cdot,s)\|_{L_\infty(\partial\Omega)}^2+|\Omega|\|\varepsilon_3(\cdot,s)\|_{L_\infty(\partial\Omega)}^2]ds\right).
    \end{aligned}
    \end{equation}
\end{proof}

Using \cref{prop:parabolic2}, we compute tighter bounds than presented in Table \ref{tab:heat1d} by computing the $L_2$ norms (or integrated) norms of some error terms instead of their uniform norms across the whole domain. We refer to \cref{sec:verification} for details on computing the error terms, and present the results below.
\begin{table*}[ht]
    \centering
    \begin{tabular}{@{}lllllll@{}} 
        \toprule
        \makecell{$\int e^{-s}\|\varepsilon_3\|^2_{L_\infty(\partial\Omega)}$} & \makecell{$\|\varepsilon_3(\cdot,0)\|_{L_\infty(\partial\Omega)}$} & \makecell{$\int e^{-s}\|(\varepsilon_3)_t\|^2_{L_\infty(\partial\Omega)}$} & \makecell{$\|\varepsilon_2\|_{L_2(\Omega)}$} & \makecell{$\int e^{-s}\|\varepsilon_1\|^2_{L_2(\Omega)}$} & \makecell{Gen. bound\\(Prop. \ref{prop:parabolic2})} & \makecell{Ref.\\error} \\ 
        \midrule
        1.81E-8 & 2.13E-6 & 3.73E-8 & 2.16E-4 & 2.26E-6 & 3.63E-3 & 2.66E-3 \\ 
        (51.03s) & (1.62s) & (58.10s) & (34.01s) & (3947s) & N/A & (951s)\\
        \bottomrule
    \end{tabular}
    \caption{Verification of generalization error for 1D heat equation: all $L_2$ errors are verified using autoLiRPA \cite{xu2020automatic}.}
    \label{tab:heat2}
\end{table*}

\begin{remark}
    Assume further that all the error terms are bounded in the $L_{\infty}(\Omega_T)$ or $L_{\infty}(\partial\Omega\times[0,T])$ norms, and let $O$ be equipped with the respective $L_{\infty}$ norms componentwise.
    \begin{equation}
    \begin{aligned}
        \|u(\cdot,t)\|_{L_2(\Omega_T)}^2
        \leq &~2e^{(2c_0+1)t}\left(\|\varepsilon_2\|_{L_2(\Omega)}^2+2|\Omega|\|\varepsilon_3(\cdot,0)\|_{L_\infty(\partial\Omega)}^2\right)\\
        &+2e^{(2c_0+1)t}\left(\|\varepsilon_1\|_{L_2(\Omega_T)}^2+|\Omega|\|(\varepsilon_3)_t\|_{L_\infty(\partial\Omega\times[0,T])}^2+|\Omega|\|\varepsilon_3\|_{L_\infty(\partial\Omega\times[0,T])}^2\right)\left(\int_0^te^{-(2c_0+1)s}ds\right)\\
        \leq &~2e^{(2c_0+1)t}|\Omega|\left(\|\varepsilon_2\|_{L_\infty(\Omega)}^2+2\|\varepsilon_3\|_{L_\infty(\partial\Omega\times[0,T])}^2\right)\\
        &+\frac{2e^{(2c_0+1)t}}{2c_0+1}|\Omega|\left(\|\varepsilon_1\|_{L_\infty(\Omega_T)}^2+|\|(\varepsilon_3)_t\|_{L_\infty(\partial\Omega\times[0,T])}^2+\|\varepsilon_3\|_{L_\infty(\partial\Omega\times[0,T])}^2\right).
    \end{aligned}
    \end{equation}
    Additionally, this shows that
    \[
    \|u(\cdot,t)\|_{L_2(\Omega_T)}^2\leq 2e^{(2c_0+1)t}\max\left(2,\frac{1}{2c_0+1}\right)|\Omega|\|O(u,\cdot)\|_O^2,
    \]
    thus verifying the assumptions of Theorem \ref{thm:generalization}.
\end{remark}

\subsection{Hyperbolic equations}

\begin{proposition}\label{prop:hyperbolic}
    Let $\Omega$ be a bounded domain in \( \R^n \) and $T$ be a positive constant. Define 
    $$
    \Omega_T = \Omega \times [0,T] = \setB{(x,t):\,x\in\Omega, \;t\in [0,T]}. 
    $$
    Let $F$ be continuous on \( \bar\Omega_T \). Let $\alpha>0$ and $c\geq 0$ be constants. Let \( h \) be a continuous function on \( \partial \Omega \times [0,T] \) and $\varphi_0,\psi_0$ be continuous functions on \( \bar\Omega \). 
    
    Let $\varphi$ be the true solution to the following initial/boundary value problem
    \begin{equation}
        \label{eq:hyperbolic}
        \begin{aligned}
            \varphi_{tt} - \alpha\Delta \varphi + c\varphi = F\quad &\text{in }\Omega_T, \\
            \varphi(\cdot,0) = \varphi_0 \quad &\text{on }\Omega,\\
            \varphi_t(\cdot,0) = \psi_0 \quad &\text{on }\Omega,\\
            \varphi = h \quad &\text{on }\partial\Omega\times [0,T]. 
        \end{aligned}
    \end{equation}
    Let $\phi$ be a neural approximation to the solution of the initial/boundary value problem, satisfying
    \begin{equation}
        \begin{aligned}
            \phi_{tt} - \alpha\Delta \phi + c\phi = F+\varepsilon_1\quad &\text{in }\Omega_T, \\
            \phi(\cdot,0) = \correction{\varphi_0}+\varepsilon_2 \quad &\text{on }\Omega,\\
            \phi_t(\cdot,0) = \psi_0+\varepsilon_3 \quad &\text{on }\Omega,\\
            \phi = h+\varepsilon_4 \quad &\text{on }\partial\Omega\times [0,T].
        \end{aligned}
    \end{equation}
    Assume error functions satisfy
    \[
    \varepsilon_1(x,t)\in C^1(\Omega_T)\cap L_2(\Omega_T)\quad
    \varepsilon_2(x),
    \varepsilon_3(x)\in C^1(\Omega)\cap L_2(\Omega)\quad
    \varepsilon_4(x)\in C^2\big([0,T]; H^{1/2}(\partial\Omega)\cap L_\infty(\partial\Omega)\big).
    \]
    Let $p\in\Omega_T$ and let $O:\,C^{2,1}(\Omega_T)\times\bar\Omega_T\rightarrow\R^7$ be
    \[O(\phi,p)=\begin{pmatrix}
        \epsilon_1(p)
        \\\epsilon_2(\mathrm{proj}_\Omega(p))
        \\\|\nabla\epsilon_2(\mathrm{proj}_\Omega(p))\|
        \\\epsilon_3(\mathrm{proj}_\Omega(p))
        \\\epsilon_4(\mathrm{proj}_{\partial\Omega\times [0,T]}(p))
        \\(\epsilon_4)_t(\mathrm{proj}_{\partial\Omega\times [0,T]}(p))
        \\(\epsilon_4)_{tt}(\mathrm{proj}_{\partial\Omega\times [0,T]}(p))
    \end{pmatrix}\]
    Due to continuity of the error terms and the projection operator, $O$ is continuous.
    Assume that $\phi,\varphi \in C^{2,1}(\Omega_T)\cap C(\bar\Omega_T)$. Then 
    \begin{equation}
    \begin{aligned}
        \|\phi(\cdot,t)-\varphi(\cdot,t)\|_{L_2(\Omega)}
        \leq&~
        \frac{1}{\sqrt{2c}}(\|\varepsilon_3\|_{L_2(\Omega)}+|\Omega|^{1/2}\|(\varepsilon_4)_{t}(\cdot,0)\|_{L_\infty(\partial\Omega)})\\
        &+\frac{1}{\sqrt{2}}(\|\varepsilon_2\|_{L_2(\Omega)}+|\Omega|^{1/2}\|\varepsilon_4(\cdot,0)\|_{L_\infty(\partial\Omega)})
        +\sqrt\frac{2\alpha}{c}\|\nabla\varepsilon_2\|_{L_2(\Omega)}\\
        &+\frac{1}{\sqrt{2c}}\int_0^t\|\varepsilon_1(\cdot,s)\|_{L_2(\Omega)}+|\Omega|^{1/2}\|(\varepsilon_4)_{tt}(\cdot,s)\|_{L_\infty(\Omega)}+c|\Omega|^{1/2}\|\varepsilon_4(\cdot,s)\|_{L_\infty(\Omega)}ds\\
        &+|\Omega|^{1/2}\|\varepsilon_4(\cdot,t)\|_{L_\infty(\partial\Omega)},
    \end{aligned}
    \end{equation}
\end{proposition}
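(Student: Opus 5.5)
We follow the template of the proof of \cref{prop:parabolic2}: lift the boundary error harmonically, reduce to a homogeneous Dirichlet problem, and control the reduced error with an energy estimate. Set $u:=\phi-\varphi$ and let $r(\cdot,t)=\xi(\varepsilon_4(\cdot,t))$ be the harmonic lifting from \cref{lem:harmonic_extension}. That lemma gives $r\in C^2([0,T];H^1(\Omega))$, so $r_t$ and $r_{tt}$ are available. Define $v:=u-r$. Since $\Delta r=0$, $v$ solves
\[
v_{tt}-\alpha\Delta v+cv=f:=\varepsilon_1-r_{tt}-cr,
\]
with data $v(\cdot,0)=\varepsilon_2-r(\cdot,0)$, $v_t(\cdot,0)=\varepsilon_3-r_t(\cdot,0)$, and $v=0$ on $\partial\Omega\times[0,T]$.

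Throughout we use the maximum principle for harmonic functions, which gives $\|r(\cdot,s)\|_{L_2(\Omega)}\le|\Omega|^{1/2}\|\varepsilon_4(\cdot,s)\|_{L_\infty(\partial\Omega)}$. Since $r_t=\xi((\varepsilon_4)_t)$ and $r_{tt}=\xi((\varepsilon_4)_{tt})$ are also harmonic, the same bound holds for them in terms of $(\varepsilon_4)_t$ and $(\varepsilon_4)_{tt}$. The triangle inequality then gives
\[
\|u(\cdot,t)\|_{L_2}\le\|v(\cdot,t)\|_{L_2}+|\Omega|^{1/2}\|\varepsilon_4(\cdot,t)\|_{L_\infty(\partial\Omega)},
\]
which produces the last term of the claimed bound.

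To bound $v$, use the energy
\[
\mathcal E(t)=\tfrac12\|v_t\|_{L_2}^2+\tfrac{\alpha}{2}\|\nabla v\|_{L_2}^2+\tfrac c2\|v\|_{L_2}^2 .
\]
Multiply the PDE by $v_t$ and integrate over $\Omega$; the boundary term vanishes because $v_t=0$ on $\partial\Omega$. This yields $\mathcal E'(t)=\int_\Omega f v_t\le \|f\|_{L_2}\sqrt{2\mathcal E}$. Hence $\tfrac{d}{dt}\sqrt{\mathcal E}\le\|f\|_{L_2}/\sqrt2$, and therefore $\sqrt{\mathcal E(t)}\le\sqrt{\mathcal E(0)}+\tfrac1{\sqrt2}\int_0^t\|f\|_{L_2}\,ds$. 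To avoid dividing by $\sqrt{\mathcal E}$ where it vanishes, we would work with $\sqrt{\mathcal E+\eta}$ and let $\eta\to0$. Because $c>0$, we have $\|v(\cdot,t)\|_{L_2}\le\sqrt{2\mathcal E(t)/c}$.

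It remains to bound $\sqrt{\mathcal E(0)}$ using $\sqrt{a+b+d}\le\sqrt a+\sqrt b+\sqrt d$:
\begin{itemize}
\item The velocity part contributes $\|\varepsilon_3\|+|\Omega|^{1/2}\|(\varepsilon_4)_t(\cdot,0)\|_\infty$.
\item The potential part $c\|v(0)\|^2$ contributes $\|\varepsilon_2\|+|\Omega|^{1/2}\|\varepsilon_4(\cdot,0)\|_\infty$. After dividing by $\sqrt c$, this gives the $1/\sqrt2$ term.
\item The gradient part needs $\|\nabla v(0)\|\le\|\nabla\varepsilon_2\|+\|\nabla r(0)\|$. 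Since $r$ is the Dirichlet-energy minimiser among functions with the same trace, and $v(0)\in H^1_0$, orthogonality gives $\|\nabla v(0)\|_{L_2}^2=\|\nabla\varepsilon_2\|^2-\|\nabla r(0)\|^2\le\|\nabla\varepsilon_2\|^2$. Here we note that $\varepsilon_2$ has trace $\varepsilon_4(\cdot,0)$ by compatibility. This yields the $\sqrt{\alpha/c}\,\|\nabla\varepsilon_2\|$-type term; the stated constant $\sqrt{2\alpha/c}$ leaves room for a cruder triangle-inequality version.
\end{itemize}
Finally, expand $\|f\|_{L_2}\le\|\varepsilon_1\|+|\Omega|^{1/2}\|(\varepsilon_4)_{tt}\|_\infty+c|\Omega|^{1/2}\|\varepsilon_4\|_\infty$ inside the time integral, multiply by $\sqrt{2/c}$, and collect the terms into the stated inequality.

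The main obstacle is the regularity bookkeeping. The energy identity requires $v\in C^2$ in time with values in $H^1_0$ so that the integration by parts is justified, and the maximum-principle bounds on $r,r_t,r_{tt}$ require $L_\infty$ boundary data. We would first try to justify both via \cref{lem:harmonic_extension} and the assumed $C^2([0,T];H^{1/2}\cap L_\infty)$ regularity of $\varepsilon_4$, falling back on a density argument if needed. Matching the exact constants is secondary, since the claimed bound is looser than the energy argument gives.
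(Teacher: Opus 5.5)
\paragraph{Verdict.} Your route is the paper's route: harmonic lift $r=\xi(\varepsilon_4)$, $v=u-r$, the energy $\mathcal E$, the bound $\frac{d}{dt}\sqrt{\mathcal E}\le\|f\|_{L_2}/\sqrt2$, maximum-principle bounds on $r,r_t,r_{tt}$, and Dirichlet minimality of $r(\cdot,0)$. Two of your touches improve on the paper: the orthogonality identity $\|\nabla v(\cdot,0)\|^2=\|\nabla\varepsilon_2\|^2-\|\nabla r(\cdot,0)\|^2$, and the $\sqrt{\mathcal E+\eta}$ regularization. The gap is the final step, ``collect the terms into the stated inequality'', which does not go through.

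\paragraph{Where the constants break.} With your own correct bound $\|v\|_{L_2}\le\sqrt{2\mathcal E(t)/c}$, which comes from $\frac c2\|v\|^2\le\mathcal E$, you get
\[
\begin{aligned}
\|v(\cdot,t)\|_{L_2}
&\le\sqrt{\tfrac{2}{c}}\,\sqrt{\mathcal E(0)}+\tfrac{1}{\sqrt c}\int_0^t\|f(\cdot,s)\|_{L_2}\,ds\\
&\le\tfrac{1}{\sqrt c}\|v_t(\cdot,0)\|_{L_2}+\|v(\cdot,0)\|_{L_2}+\sqrt{\tfrac{\alpha}{c}}\,\|\nabla\varepsilon_2\|_{L_2}+\tfrac{1}{\sqrt c}\int_0^t\|f(\cdot,s)\|_{L_2}\,ds .
\end{aligned}
\]
Compared with the statement, the $\varepsilon_3$-group, the $\varepsilon_2$-group and the time integral each come out a factor $\sqrt2$ too large. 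Only the gradient term has slack, and in general it cannot absorb the excess. Your bullet ``after dividing by $\sqrt c$, this gives the $1/\sqrt2$ term'' silently uses $\|v\|\le\sqrt{\mathcal E/c}$, which contradicts your earlier inequality; for the gradient term you instead multiplied by $\sqrt{2/c}$. Likewise, multiplying $\frac1{\sqrt2}\int_0^t\|f\|\,ds$ by $\sqrt{2/c}$ gives $\frac1{\sqrt c}\int_0^t\|f\|\,ds$, not $\frac{1}{\sqrt{2c}}\int_0^t\|f\|\,ds$. So your claim that the stated bound is looser than what the energy argument yields is false.

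\paragraph{The stated bound is false.} No rearrangement can fix this, as a counterexample shows.
\begin{itemize}
\item Take $\Omega=(0,1)$ and $F=\varphi_0=\psi_0=h=0$, so $\varphi\equiv0$.
\item Let $\phi(x,t)=\cos(\omega t)\sin(\pi x)$ with $\omega^2=\alpha\pi^2+c$.
\item Then $\phi_{tt}-\alpha\phi_{xx}+c\phi=0$, so $\varepsilon_1=\varepsilon_3=\varepsilon_4=0$ and $\varepsilon_2=\sin(\pi x)$. All hypotheses hold.
\item At $t=0$ the left-hand side is $\|\sin(\pi x)\|_{L_2(0,1)}=1/\sqrt2\approx0.707$.
\item The right-hand side is $\frac12+\pi\sqrt{\alpha/c}$. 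This is smaller as soon as $c>\pi^2\alpha/(1/\sqrt2-1/2)^2\approx230.1\,\alpha$; for example $\alpha=1$, $c=1000$ gives about $0.599$.
\end{itemize}
The paper's own proof reaches its constants only through the step $\|v(\cdot,t)\|_{L_2}\le c^{-1/2}\sqrt{E_v(t)}$, which drops exactly this $\sqrt2$.

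\paragraph{What you can actually prove.} Your argument, run consistently, proves the corrected estimate in which $\frac{1}{\sqrt{2c}}$, $\frac1{\sqrt2}$ and $\sqrt{2\alpha/c}$ are replaced by $\frac1{\sqrt c}$, $1$ and $\sqrt{\alpha/c}$ respectively. State and prove that, rather than treating the constants as secondary.
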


\begin{proof}
    Assume the neural approximation error of $\phi$ is governed by functions $\varepsilon_1(x,t),\varepsilon_2(x),\varepsilon_3(x),\varepsilon_4(x,t)$, which are all bounded in the $L_2(\Omega)$ norms. Moreover, assume $\varepsilon_4 \in C^2\big([0,T]; H^{1/2}(\partial\Omega)\cap L_\infty(\partial\Omega)\big)$. 
    Let $u = \phi - \varphi$. Then $u$ satisfies 
    \begin{equation}
        \begin{aligned}
            u_{tt} - \alpha\Delta u + cu = \varepsilon_1\quad &\text{in }\Omega_T, \\
            u(\cdot,0) = \varepsilon_2 \quad &\text{on }\Omega,\\
            u_t(\cdot,0) = \varepsilon_3 \quad &\text{on }\Omega,\\
            u = \varepsilon_4 \quad &\text{on }\partial\Omega\times [0,T]. 
        \end{aligned}
    \end{equation}
    Let $r(\cdot,t)=\xi(\varepsilon_4(\cdot,t))$, where $\xi(\cdot)$ is the harmonic extension operator. As proved in Lemma \ref{lem:harmonic_extension}, $r(x,t)$ is twice differentiable with respect to $t$.
    
    Let 
    \[v:=u-r,\quad f:=\varepsilon_1-r_{tt}+\alpha\underbrace{\Delta r}_{=0}-cr,\quad\tilde\varepsilon_2:=\varepsilon_2-r,\quad\tilde\varepsilon_3:=\varepsilon_3-r_t.\]
    $v$ solves the following IVBVP:
    \begin{equation}
        \begin{aligned}
            v_{tt} - \alpha\Delta v + cv = f\quad &\text{in }\Omega_T, \\
            v(\cdot,0) = \tilde\varepsilon_2 \quad &\text{on }\Omega,\\
            v_t(\cdot,0) = \tilde\varepsilon_3 \quad &\text{on }\Omega,\\
            v = 0 \quad &\text{on }\partial\Omega\times [0,T]. 
        \end{aligned}
    \end{equation}
    Multiplying the PDE term with $v_t$ gives
    \begin{equation}
        v_tv_{tt} - v_t\alpha\Delta v + cvv_t = fv_t.
    \end{equation}
    Define the energy function
    \begin{equation}
        E_v(t)=\frac{1}{2}\int_\Omega v_t^2+\alpha\|\nabla v\|^2+cv^2dx.
    \end{equation}
    The time derivative of $E(t)$ is
    \begin{align*}
        E_v'(t)
        &=\int_\Omega v_tv_{tt}+\alpha\nabla v\cdot\nabla v_t+cvv_tdx\\
        &=\int_\Omega v_tv_{tt}-v_t\alpha\Delta v+cvv_tdx+\oint_{\partial\Omega}\alpha\underbrace{v_t}_{=0}\nabla v\cdot\mathbf{n}\\
        &=\int_\Omega fv_tdx.
    \end{align*}
    Thus,
    \begin{equation}
        E_v'(t)\leq\|f\|_{L_2(\Omega)}\|v_t\|_{L_2(\Omega)}.
    \end{equation}
    Using $\|v_t\|_{L_2(\Omega)}\leq\sqrt{2E_v(t)}$, we have
    \begin{equation*}
        E_v'(t)\leq\sqrt{2}\|f\|_{L_2(\Omega)}\sqrt{E_v(t)},
    \end{equation*}
    with initial condition
    \[E_v(0)=\frac{1}{2}\|\tilde\varepsilon_3\|_{L_2(\Omega)}^2+\frac{\alpha}{2}\|\nabla\tilde\varepsilon_2\|_{L_2(\Omega)}^2+\frac{c}{2}\|\tilde\varepsilon_2\|_{L_2(\Omega)}^2.\]
    Let $G_v(t):=\sqrt{E_v(t)}$.
    \[
    G_v'(t)=\frac{E_v'(t)}{2\sqrt{E_v(t)}}\leq\frac{1}{\sqrt{2}}\|f(\cdot,t)\|_{L_2(\Omega)}.
    \]
    Integrate from $0$ to $t$
    \[
    \sqrt{E_v(t)}=G_v(t)\leq\sqrt{E_v(0)}+\frac{1}{\sqrt{2}}\int_0^t\|f(\cdot,s)\|_{L_2(\Omega)}ds.
    \]
    Using $(a+b)^2\leq 2a^2+2b^2$,
    \begin{align*}
    E_v(t)
    &\leq 2E_v(0)+\int_0^t\|f(\cdot,s)\|_{L_2(\Omega)}^2ds\\
    &=\|\tilde\varepsilon_3\|_{L_2(\Omega)}^2+\alpha\|\nabla\tilde\varepsilon_2\|_{L_2(\Omega)}^2+c\|\tilde\varepsilon_2\|_{L_2(\Omega)}^2+\int_0^t\|f(\cdot,s)\|_{L_2(\Omega)}^2ds.
    \end{align*}
    We finally revert back to bounding $\|u(\cdot,t)\|_{L_2(\Omega)}$, using
    \[
    \|u(\cdot,t)\|_{L_2(\Omega)}\leq\|v(\cdot,t)\|_{L_2(\Omega)}+\|r(\cdot,t)\|_{L_2(\Omega)}.
    \]
    We can bound $\|r(\cdot,t)\|_{L_2(\Omega)}$ using the maximum principle
    \[
    \|r(\cdot,t)\|_{L_2(\Omega)}\leq|\Omega|^{1/2}\|\varepsilon_4(\cdot,t)\|_{L_\infty(\partial\Omega)},
    \]
    and we can bound $\|v(\cdot,t)\|_{L_2(\Omega)}$ using $E_v(t)$,
    \begin{align*}
    \|v(\cdot,t)\|_{L_2(\Omega)}
    &\leq\frac{1}{\sqrt{c}}\sqrt{E_v(t)}\\
    &\leq\sqrt{\frac{E_v(0)}{c}}+\frac{1}{\sqrt{2c}}\int_0^t\|f(\cdot,s)\|_{L_2(\Omega)}ds\\
    &\leq\sqrt{\frac{E_v(0)}{c}}+\frac{1}{\sqrt{2c}}\int_0^t\|f(\cdot,s)\|_{L_2(\Omega)}ds\\
    &=\sqrt{\frac{1}{2c}\|\tilde\varepsilon_3\|_{L_2(\Omega)}^2+\frac{\alpha}{2c}\|\nabla\tilde\varepsilon_2\|_{L_2(\Omega)}^2+\frac{1}{2}\|\tilde\varepsilon_2\|_{L_2(\Omega)}^2}+\frac{1}{\sqrt{2c}}\int_0^t\|f(\cdot,s)\|_{L_2(\Omega)}ds\\
    &\leq\frac{1}{\sqrt{2c}}\|\tilde\varepsilon_3\|_{L_2(\Omega)}+\sqrt\frac{\alpha}{2c}\|\nabla\tilde\varepsilon_2\|_{L_2(\Omega)}+\frac{1}{\sqrt{2}}\|\tilde\varepsilon_2\|_{L_2(\Omega)}+\frac{1}{\sqrt{2c}}\int_0^t\|f(\cdot,s)\|_{L_2(\Omega)}ds.
    \end{align*}
    Finally, each of the terms can be bounded in terms of $\varepsilon_i$ for $i=1,2,3,4$, through the maximum principle, since $r$ and $r_{tt}$ are harmonic extensions of $\varepsilon_4$ and $(\varepsilon_4)_{tt}$ respectively.
    \begin{align*}
    \|f(\cdot,s)\|_{L_2(\Omega)}
    &\leq\|\varepsilon_1(\cdot,s)\|_{L_2(\Omega)}+\|r_{tt}(\cdot,s)\|_{L_2(\Omega)}+\|cr(\cdot,s)\|_{L_2(\Omega)}\\
    &\leq\|\varepsilon_1(\cdot,s)\|_{L_2(\Omega)}+|\Omega|^{1/2}\|(\varepsilon_4)_{tt}(\cdot,s)\|_{L_\infty(\partial\Omega)}+c|\Omega|^{1/2}\|\varepsilon_4(\cdot,s)\|_{L_\infty(\partial\Omega)},
    \end{align*}
    \begin{align*}
        \|\tilde\varepsilon_3\|_{L_2(\Omega)}
        &\leq\|\varepsilon_3\|_{L_2(\Omega)}+\|r_t\|_{L_2(\Omega)}\\
        &\leq\|\varepsilon_3\|_{L_2(\Omega)}+|\Omega|^{1/2}\|(\varepsilon_4)_{t}(\cdot,0)\|_{L_\infty(\partial\Omega)},
    \end{align*}
    \begin{align*}
        \|\tilde\varepsilon_2\|_{L_2(\Omega)}
        &\leq\|\varepsilon_2\|_{L_2(\Omega)}+\|r(\cdot,0)\|_{L_2(\Omega)}\\
        &\leq\|\varepsilon_2\|_{L_2(\Omega)}+|\Omega|^{1/2}\|\varepsilon_4(\cdot,0)\|_{L_\infty(\partial\Omega)},
    \end{align*}
    \begin{align*}
        \|\nabla\tilde\varepsilon_2\|_{L_2(\Omega)}
        &\leq\|\nabla\varepsilon_2\|_{L_2(\Omega)}+\|\nabla r(\cdot,0)\|_{L_2(\Omega)}\\
        &\leq 2\|\nabla\varepsilon_2\|_{L_2(\Omega)},
    \end{align*}
    where the last line is obtained from two properties: (a) $r$ is the minimizer of the Dirichlet energy \citep{evans1998partial}:
    $\min\{\|\nabla v\|_{L_2(\Omega)}:v|_{\partial\Omega}=\varepsilon_4(\cdot,0)\}$, and (b) $\varepsilon_2$ is equal to $\varepsilon_4(\cdot,0)$ along the boundary due to continuity of $u$: $\varepsilon_2|_{\partial\Omega}=\varepsilon_4(\cdot,0)$.
    
    Putting it all together,
    \begin{equation}
    \begin{aligned}
        \|u(\cdot,t)\|_{L_2(\Omega)}
        \leq&~
        \frac{1}{\sqrt{2c}}(\|\varepsilon_3\|_{L_2(\Omega)}+|\Omega|^{1/2}\|(\varepsilon_4)_{t}(\cdot,0)\|_{L_\infty(\partial\Omega)})\\
        &+\frac{1}{\sqrt{2}}(\|\varepsilon_2\|_{L_2(\Omega)}+|\Omega|^{1/2}\|\varepsilon_4(\cdot,0)\|_{L_\infty(\partial\Omega)})
        +\sqrt\frac{2\alpha}{c}\|\nabla\varepsilon_2\|_{L_2(\Omega)}\\
        &+\frac{1}{\sqrt{2c}}\int_0^t\|\varepsilon_1(\cdot,s)\|_{L_2(\Omega)}+|\Omega|^{1/2}\|(\varepsilon_4)_{tt}(\cdot,s)\|_{L_\infty(\partial\Omega)}+c|\Omega|^{1/2}\|\varepsilon_4(\cdot,s)\|_{L_\infty(\partial\Omega)}ds\\
        &+|\Omega|^{1/2}\|\varepsilon_4(\cdot,t)\|_{L_\infty(\partial\Omega)},
    \end{aligned}
    \end{equation}
    as desired.
\end{proof}

Using \cref{prop:hyperbolic}, we compute tighter bounds than presented in Table \ref{tab:wave1d} by computing the $L_2$ norms (or integrated) norms of some error terms instead of their uniform norms across the whole domain. We refer to \cref{sec:verification} for details on computing the error terms, and present the results below.
\begin{table*}[ht]
    \centering
    \begin{tabular}{@{}lllllllll@{}} 
        \toprule
        \makecell{$\int_0^1\|\varepsilon_4\|_{L_\infty(\partial\Omega)}$} & 
        \makecell{$\|(\varepsilon_4)_t\|_{L_\infty(\partial\Omega\times[0,T])}$} & 
        \makecell{$\int_0^1\|(\varepsilon_4)_{tt}\|_{L_\infty(\partial\Omega)}$} & 
        \makecell{$\|\varepsilon_2\|_{L_\infty(\Omega)}$} & 
        \makecell{$\|\nabla\varepsilon_2\|_{L_2(\Omega)}$} & 
        \makecell{$\|\varepsilon_3\|_{L_2(\Omega)}$} & 
        \makecell{$\int_0^1\|\varepsilon_1\|_{L_2(\Omega)}$}\\ 
        \midrule
        1.01E-5 & 2.89E-5 & 2.29E-5 & 2.28E-5 & 7.42E-5 & 2.83E-5 & 7.19E-5\\ 
        (56.91s) & (52.88s) & (105.64s) & (21.62s) & (37.59s) & (28.91s) & (4764s)\\
        \bottomrule
    \end{tabular}
\end{table*}

\begin{table*}[ht]
    \centering
    \begin{tabular}{@{}llllllll@{}} 
        \toprule
        \makecell{$\|\varepsilon_4\|_{L_\infty(\partial\Omega\times[0,T])}$} & $\|\varepsilon_4(\cdot,0)\|_{L_\infty(\partial\Omega)}$ & $\|(\varepsilon_4)_t(\cdot,0)\|_{L_\infty(\partial\Omega)}$ & \makecell{Generalization error\\ (Prop. \ref{prop:hyperbolic})} & \makecell{True\\error} \\ 
        \midrule
        1.44E-5 & 7.87E-6 & 9.31E-7 & 3.67E-4 & 2.51E-4\\ 
        (29.40s) & (0.31s) & (0.51s) & N/A & (1133s)\\
        \bottomrule
    \end{tabular}
    \caption{Verification of generalization error for 1D wave equation: all errors are verified using autoLiRPA \citep{xu2020automatic} with the generalization error calculated using Proposition \ref{prop:hyperbolic}. }
    \label{tab:wave1d_2}
\end{table*}

\begin{remark}\label{rem:hyperbolic}
    Assume further that all the error terms are bounded in the $L_{\infty}(\Omega_T)$ or $L_{\infty}(\partial\Omega\times[0,T])$ norms, and let $O$ be equipped with the respective $L_{\infty}$ norms componentwise.
    \begin{equation}
    \begin{aligned}
        \frac{1}{|\Omega|^{1/2}}\|u(\cdot,t)\|_{L_2(\Omega)}
        \leq&~
        \frac{1}{\sqrt{2c}}(\|\varepsilon_3\|_{L_\infty(\Omega)}+\|(\varepsilon_4)_{t}(\cdot,0)\|_{L_\infty(\partial\Omega)})\\
        &+\frac{1}{\sqrt{2}}(\|\varepsilon_2\|_{L_\infty(\Omega)}+\|\varepsilon_4(\cdot,0)\|_{L_\infty(\partial\Omega)})
        +\sqrt\frac{2\alpha}{c}\|\nabla\varepsilon_2\|_{L_\infty(\Omega)}\\
        &+\frac{t}{\sqrt{2c}}\left(\|\varepsilon_1\|_{L_\infty(\Omega_T)}+\|(\varepsilon_4)_{tt}\|_{L_\infty(\Omega_T)}+c\|\varepsilon_4\|_{L_\infty(\Omega_T)}\right)\\
        &+\|\varepsilon_4\|_{L_\infty(\partial\Omega\times[0,T])}.
    \end{aligned}
    \end{equation}
    Additionally, this shows that
    \[
    \|u(\cdot,t)\|_{L_2(\Omega)}\leq|\Omega|^{1/2}\max\left(\frac{1}{\sqrt{2c}},\sqrt\frac{2\alpha}{c},\frac{T}{\sqrt{2c}},\frac{cT}{\sqrt{2c}},1\right)\|O(u,\cdot)\|_O,
    \]
    thus verifying the assumptions of Theorem \ref{thm:generalization}.
\end{remark}

\subsection{Allen-Cahn Equation}

\begin{proposition}\label{prop:allen-cahn}
    \correction{
    Let $\Omega=[0,L]$ and $T$ be a positive constant.
    Let $\varphi_0$ be a continuous function on \( \bar\Omega \), and let $\rho,\nu>0$ be constants. 
    Let $\varphi$ be the true solution to the following initial value problem with periodic boundary conditions
    \begin{equation}
        \label{eq:allen-cahn}
        \begin{aligned}
            \varphi_{t} + \rho\varphi(\varphi^2-1) = \nu\varphi_{xx}\quad &\text{in }\Omega_T, \\
            \varphi(\cdot,0) = \varphi_0 \quad &\text{on }\Omega.
        \end{aligned}
    \end{equation}
    Let $\phi$ be a neural approximation to the solution of the initial/boundary value problem. To enforce periodicity, consider the modification of the input space of the neural network using $x\to(\sin(2\pi x/L),\cos(2\pi x/L))$. $\phi$ satisfies
    \begin{equation}
        \begin{aligned}
            \phi_{t} + \rho\phi(\phi^2-1) = \nu\phi_{xx}+\varepsilon_1\quad &\text{in }\Omega_T, \\
            \phi(\cdot,0) = \varphi_0+\varepsilon_2 \quad &\text{on }\Omega.
        \end{aligned}
    \end{equation}
    Assume error functions satisfy
    \[
    \varepsilon_1(x,t)\in C(\Omega_T)\cap L_2(\Omega_T),\quad
    \varepsilon_2(x)\in C(\Omega)\cap L_2(\Omega)\quad
    \]
    Let $p\in\Omega_T$ and let $O:\,C^{2,1}(\Omega_T)\times\bar\Omega_T\rightarrow\R^2$ be
    \[O(\phi,p)=\begin{pmatrix}
        \epsilon_1(p)
        \\\epsilon_2(\mathrm{proj}_\Omega(p))
    \end{pmatrix}\]
    Due to continuity of the error terms and the projection operator, $O$ is continuous.
    Let $\phi$ be a neural approximation to the solution of the initial/boundary value problem.
    Assume that $\phi,\varphi \in C^{2,1}(\Omega_T)\cap C(\bar\Omega_T)$. Then 
    \begin{equation}
        \|\phi(\cdot,t)-\varphi(\cdot,t)\|_{L_2[0,L]}\leq e^{\rho t}\|\varepsilon_2\|_{L_2[0,L]}+\int_0^te^{\rho(t-s)}\|\varepsilon_1(\cdot,s)\|_{L_2[0,L]}ds.
    \end{equation}}
\end{proposition}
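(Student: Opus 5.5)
The plan is a standard $L_2$ energy estimate for the error $u:=\phi-\varphi$, closed with a Gronwall argument applied to $\|u(\cdot,t)\|_{L_2[0,L]}$ rather than to its square. The sign structure of the cubic term does most of the work.

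\textbf{Error equation.} Subtracting the equation for $\varphi$ from the one satisfied by $\phi$ gives
\[
u_t - \nu u_{xx} + \rho(\phi^3-\varphi^3) - \rho u = \varepsilon_1 \quad\text{in }\Omega_T, \qquad u(\cdot,0)=\varepsilon_2 .
\]
Factor the cubic difference as $\phi^3-\varphi^3 = u\,q$ with $q:=\phi^2+\phi\varphi+\varphi^2$. Since $q=(\phi+\tfrac12\varphi)^2+\tfrac34\varphi^2$, we have $q\ge 0$ pointwise. Both $\phi$ and $\varphi$ are $L$-periodic in $x$: $\phi$ because of the input embedding $x\mapsto(\sin(2\pi x/L),\cos(2\pi x/L))$, and $\varphi$ because of the periodic boundary conditions. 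Hence $u$ and $u_x$ are periodic as well.

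\textbf{Energy identity.} Multiply the error equation by $u$ and integrate over $[0,L]$. After integrating by parts, the boundary term $\nu[u u_x]_0^L$ vanishes by periodicity. This yields
\[
\tfrac12\tfrac{d}{dt}\|u\|_{L_2}^2 + \nu\|u_x\|_{L_2}^2 + \rho\int_0^L q\,u^2\,dx = \rho\|u\|_{L_2}^2 + \int_0^L \varepsilon_1 u\,dx .
\]
Drop the nonnegative diffusion term and the nonnegative cubic term, and apply Cauchy--Schwarz to the last integral. This gives
\[
\tfrac12\tfrac{d}{dt}\|u\|^2 \le \rho\|u\|^2+\|\varepsilon_1(\cdot,t)\|\,\|u\| .
\]

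\textbf{From the squared norm to the norm.} The goal is the linear inequality $\tfrac{d}{dt}\|u\| \le \rho\|u\| + \|\varepsilon_1\|$. The only delicate point, and the step I expect to be the main technical obstacle, is that $\|u\|$ need not be differentiable where it vanishes. To handle this, set $G_\eta:=\sqrt{\|u\|^2+\eta}$ for $\eta>0$. Then
\[
G_\eta' = \frac{\tfrac12\tfrac{d}{dt}\|u\|^2}{G_\eta} \le \rho G_\eta + \|\varepsilon_1\| ,
\]
using $\|u\|\le G_\eta$ and $\|u\|^2\le G_\eta^2$ on the right-hand side. The regularity $\phi,\varphi\in C^{2,1}$ justifies differentiating under the integral, with continuity up to $t=0$ taken from the stated assumptions.

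\textbf{Gronwall and conclusion.} Multiply by $e^{-\rho t}$ and integrate on $[0,t]$ to get
\[
G_\eta(t) \le e^{\rho t}G_\eta(0) + \int_0^t e^{\rho(t-s)}\|\varepsilon_1(\cdot,s)\|\,ds .
\]
Finally let $\eta\to 0$ and use $G_0(0)=\|\varepsilon_2\|_{L_2[0,L]}$, which gives exactly the claimed estimate. Note that the sign condition $\rho>0$ is what makes the cubic term favourable. The constant $e^{\rho t}$ comes solely from the linear $-\rho u$ term, so no bound on the sizes of $\phi$ and $\varphi$ is needed.
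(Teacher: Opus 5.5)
Your proposal is correct and follows the paper's proof essentially step for step: you test the same error equation against $u$, use periodicity to remove the boundary term, use the sign $\int_0^L u(\phi^3-\varphi^3)\,dx=\int_0^L u^2(\phi^2+\phi\varphi+\varphi^2)\,dx\ge 0$, and apply Gronwall to $\|u\|_{L_2}$. The only difference is your regularization $G_\eta=\sqrt{\|u\|^2+\eta}$, which rigorously justifies passing from $\tfrac12\tfrac{d}{dt}\|u\|^2\le\rho\|u\|^2+\|\varepsilon_1\|\,\|u\|$ to $\tfrac{d}{dt}\|u\|\le\rho\|u\|+\|\varepsilon_1\|$; the paper does this formally by dividing by $\|u\|$ without treating the points where $\|u\|=0$.
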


\begin{proof}
    Assume the neural approximation error of $\phi$ is governed by functions $\varepsilon_1(x,t),\varepsilon_2(x)$, which are all bounded in the $L_2$ norm.
    Let $u = \phi - \varphi$. Subtracting the Allen-Cahn equations gives
    \[
    u_t+\rho(\phi(\phi^2-1)-\varphi(\varphi^2-1))=\nu u_{xx}+\varepsilon_1.
    \]
    Since
    \[
    \phi(\phi^2-1)-\varphi(\varphi^2-1)=\phi^3-\varphi^3-u,
    \]
    then $u$ satisfies 
    \begin{equation}
        \begin{aligned}
            u_t-\nu u_{xx}=\varepsilon_1+\rho u-\rho(\phi^3-\varphi^3)\quad &\text{in }\Omega_T, \\
            u(\cdot,0) = \varepsilon_2 \quad &\text{on }\Omega.
        \end{aligned}
    \end{equation}
    Define the energy function
    \begin{equation}
        E(t)=\frac{1}{2}\|u(\cdot,t)\|_{L_2[0,L]}^2=\frac{1}{2}\int_0^Lu(x,t)^2dx.
    \end{equation}
    Multiply both sides of the PDE with $u$ and integrate along $[0,L]$. Periodicity removes the boundary terms.
    \begin{align*}
        \int_0^Luu_t&=\frac{1}{2}\frac{d}{dt}\|u\|^2_{L_2[0,L]}\\
        \nu\int_0^Luu_{xx}dx&=-\nu\|u_x\|^2_{L_2[0,L]}\leq 0\\
        \int_0^Lu(\phi^3-\varphi^3)dx&=\int_0^L(\phi-\varphi)(\phi^3-\varphi^3)dx=\int_0^L(\phi-\varphi)^2(\phi^2+\phi\varphi+\varphi^2)dx\geq 0
    \end{align*}
    Putting it all together, we have
    \begin{align*}
    \frac{1}{2}\frac{d}{dt}\|u\|^2_{L_2[0,L]}&+\nu\|u_x\|^2_{L_2[0,L]}+\rho\int_0^L(\phi-\varphi)^2(\phi^2+\phi\varphi+\varphi^2)dx=\rho\|u\|^2_{L_2[0,L]}+\langle u,\varepsilon_1\rangle_{L_2[0,L]},\\
    \frac{1}{2}\frac{d}{dt}\|u\|^2_{L_2[0,L]}&\leq\rho\|u\|^2_{L_2[0,L]}+\|u\|_{L_2[0,L]}\,\|\varepsilon_1\|_{L_2[0,L]},\\
    \frac{d}{dt}\|u\|_{L_2[0,L]}&\leq\rho\|u\|_{L_2[0,L]}+\|\varepsilon_1\|_{L_2[0,L]},\\
    \end{align*}
    By Gronwall,
    \begin{align*}
        \|u(\cdot,t)\|_{L_2[0,L]}
        &\leq e^{\rho t}\|u(\cdot,0)\|_{L_2[0,L]}+\int_0^t e^{\rho(t-s)}\|\varepsilon_1(\cdot,s)\|_{L_2[0,L]}ds
    \end{align*}
    as desired.
    Equivalently, if we equip $O$ with the $L_\infty$ norm, then
    \[\|u(\cdot,t)\|_{L_2[0,L]}\leq e^{\rho T}\max\left(1,\frac{1}{\rho}\right)\|O(u(\cdot),\cdot)\|_O,\]
    thus verifying the assumptions of Theorem \ref{thm:generalization}.
\end{proof}

\subsection{Linear Elasticity Equation}

While the examples considered thus far involve scalar-valued PDEs, many scientific and engineering applications are governed by systems of PDEs with vector-valued solutions. To demonstrate that our verification framework naturally extends to this setting, we consider the linear elasticity equation \citep{slaughter2002linearized}. Let $\Omega\subset\R^n$ be a convex bounded domain and let $u:\Omega\rightarrow\R^n$ denote the displacement field. The static linear elasticity equation is given by
\begin{equation}\label{eq:elasticity}
-\nabla\cdot \sigma(u)=f,
\end{equation}
where $f:\Omega\rightarrow\R^n$ is a body force and
\begin{equation}
\sigma(u)
=
2\mu \varepsilon(u)
+
\lambda \operatorname{tr}(\varepsilon(u))I
\end{equation}
is the Cauchy stress tensor. Here
\begin{equation}
\varepsilon(u)
=
\frac12\left(\nabla u+\nabla u^\top\right)
\end{equation}
denotes the strain tensor, while $\mu>0$ and $\lambda\ge 0$ are the Lamé parameters. We assume homogeneous Dirichlet boundary conditions on $\partial\Omega$.

\begin{proposition}\label{prop:elasticity}
Let $\Omega$ be a compact and convex domain in $\R^n$.
Let $\varphi\in H^1(\Omega;\R^n)$ be the weak solution of the linear elasticity equation \eqref{eq:elasticity} with Dirichlet boundary condition
\begin{equation}
    \varphi=h\quad\text{on }\partial\Omega.
\end{equation}
Let $\phi\in H^1(\Omega;\R^n)$ denote a neural approximation of the solution.
Pick $O:\,C^2(\Omega)\cap C^1(\bar\Omega)\times \Omega\rightarrow\R^2$ to be
\[
O(\phi,x)= 
\begin{pmatrix}
-\nabla\cdot\sigma(\phi(x))-f(x)\\
{\phi(p)-h(p)}
\end{pmatrix}
=:
\begin{pmatrix}
\varepsilon_1(x)\\
\varepsilon_2(p)
\end{pmatrix}
\]
where $x\in \Omega$ and $p=proj_{\partial \Omega}(x)$ is the projection of $x$ onto the boundary $\partial\Omega$.
Then $O$ is continuous.
Then
\[
|\phi-\varphi|_{H^1(\Omega)}
\le
\frac{C_K^2}{2\mu}
(C_P\|\varepsilon_1\|_{L^2(\Omega)}+(1+C_P)\|\varepsilon_2\|_{H^{1/2}(\partial\Omega)}),
\]
where $C_K$ is a Korn constant satisfying
\[
|\nabla v|_{L^2}
\le
C_K |\varepsilon(v)|_{L^2},
\qquad
v\in H_0^1(\Omega;\R^n),
\]
and $C_P$ is a Poincaré constant satisfying
\[
|v|_{L^2}
\le
C_P |\nabla v|_{L^2}.
\]
\end{proposition}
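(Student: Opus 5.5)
Let $u:=\phi-\varphi$. Then $-\nabla\cdot\sigma(u)=\varepsilon_1$ in $\Omega$ and $u=\varepsilon_2$ on $\partial\Omega$. I would mirror the harmonic-lifting trick used in Proposition~\ref{prop:parabolic2} and Proposition~\ref{prop:hyperbolic}. Let $r:=\xi(\varepsilon_2)\in H^1(\Omega;\R^n)$ be the componentwise harmonic extension of the boundary error, so that $r|_{\partial\Omega}=\varepsilon_2$. Set $v:=u-r\in H^1_0(\Omega;\R^n)$. Then $v$ weakly solves
\[
-\nabla\cdot\sigma(v)=\varepsilon_1+\nabla\cdot\sigma(r)\quad\text{in }\Omega,\qquad v=0\ \text{on }\partial\Omega .
\]
I interpret the right-hand side through the weak form: for all $w\in H^1_0$,
\[
\int_\Omega\sigma(v):\varepsilon(w)=\int_\Omega\varepsilon_1\cdot w-\int_\Omega\sigma(r):\varepsilon(w).
\]

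The key step is an energy estimate obtained by testing with $w=v$. Since $\lambda\ge0$, the left side satisfies
\[
\int_\Omega\sigma(v):\varepsilon(v)=2\mu\|\varepsilon(v)\|_{L^2}^2+\lambda\|\operatorname{tr}\varepsilon(v)\|_{L^2}^2\ge 2\mu\|\varepsilon(v)\|_{L^2}^2\ge\frac{2\mu}{C_K^2}\|\nabla v\|_{L^2}^2,
\]
where the last inequality is Korn's inequality on $H^1_0$. For the right side, I would bound $\int\varepsilon_1\cdot v\le\|\varepsilon_1\|_{L^2}\|v\|_{L^2}\le C_P\|\varepsilon_1\|_{L^2}\|\nabla v\|_{L^2}$ using Poincaré. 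I would bound $\int\sigma(r):\varepsilon(v)\le C_\sigma\|\nabla r\|_{L^2}\|\nabla v\|_{L^2}$, and then use \eqref{eq:P_bound} to control $\|\nabla r\|_{L^2}$ by $\|\varepsilon_2\|_{H^{1/2}(\partial\Omega)}$. Dividing by $\|\nabla v\|_{L^2}$ gives
\[
\|\nabla v\|_{L^2}\le \frac{C_K^2}{2\mu}\big(C_P\|\varepsilon_1\|+C\|\varepsilon_2\|_{H^{1/2}}\big).
\]
Finally, the triangle inequality $|u|_{H^1}\le\|\nabla v\|_{L^2}+\|\nabla r\|_{L^2}$ adds another multiple of $\|\varepsilon_2\|_{H^{1/2}}$.

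The main obstacle is matching the stated constant $(1+C_P)$ in front of $\|\varepsilon_2\|_{H^{1/2}}$. As written, the bound hides the constants $2\mu+n\lambda$ coming from $\sigma(r)$ and $C_\Omega$ coming from the trace-lifting estimate. If the target inequality is exact, I would try one of two routes:
\begin{itemize}
\item normalize the $H^{1/2}(\partial\Omega)$ norm as the quotient (minimal-extension) norm $\|g\|_{H^{1/2}}:=\inf\{\|w\|_{H^1}: w|_{\partial\Omega}=g\}$, so that $\|r\|_{H^1}\le\|\varepsilon_2\|_{H^{1/2}}$ with constant one;
\item use the Dirichlet-energy-minimizing property of $r$, as in the proof of Proposition~\ref{prop:hyperbolic}.
\end{itemize}
The Poincaré constant then enters through $\|r\|_{L^2}$ terms, giving $(1+C_P)$. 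Any leftover Lamé factor would have to be absorbed into $C_K^2/(2\mu)$ or acknowledged as an extra multiplicative constant.
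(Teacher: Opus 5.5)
Your argument is the paper's argument. You lift the boundary error, test the weak form of $-\nabla\cdot\sigma(v)=\varepsilon_1+\nabla\cdot\sigma(r)$ with $v\in H^1_0$, drop the $\lambda$-term, and apply Korn and Poincaré. You then bound $\int_\Omega\sigma(r):\varepsilon(v)$ by $(2\mu+n\lambda)\|\nabla r\|_{L^2}\|\nabla v\|_{L^2}$ and finish with $\|\nabla u\|_{L^2}\le\|\nabla v\|_{L^2}+\|\nabla r\|_{L^2}$. The paper uses a general trace lifting $\eta$ with $\|\eta\|_{H^1}=\|\varepsilon_2\|_{H^{1/2}}$ in the quotient norm. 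Your harmonic $r$, combined with your two bullets, gives the same $\|\nabla r\|_{L^2}\le\|\varepsilon_2\|_{H^{1/2}}$ by Dirichlet minimality, so that difference is cosmetic. Carried out honestly, the argument yields
\[
|\phi-\varphi|_{H^1(\Omega)}\le \frac{C_K^2C_P}{2\mu}\|\varepsilon_1\|_{L^2(\Omega)}+\Bigl(1+\frac{C_K^2(2\mu+n\lambda)}{2\mu}\Bigr)\|\varepsilon_2\|_{H^{1/2}(\partial\Omega)}.
\]
This is exactly where the paper's own proof ends, with its constants $C_1$ and $C_2$ and $C_{\mathrm{tr}}=1$.

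The gap is your final reconciliation paragraph.
\begin{itemize}
\item No $\|r\|_{L^2}$ term appears when you estimate the seminorm. A factor $1+C_P$ from Poincaré on $v$ would arise only if you passed to the full $H^1$ norm, and then it would multiply the $\varepsilon_1$-term as well.
\item The $\|\nabla r\|_{L^2}$ coming from the last triangle inequality carries coefficient $1$, not $C_K^2/(2\mu)$.
\item The boundary coefficient you obtain is at least $1+C_K^2$ regardless of $\mu$. There is nothing to "absorb" it into, since $C_K$ and $C_P$ are purely geometric.
\end{itemize}
In fact no argument can reach the displayed constant, because the displayed inequality is false. Take $\phi:=\varphi+Ax$ with $A\neq 0$ a constant matrix. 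Since $\sigma(Ax)$ is constant, $\nabla\cdot\sigma(Ax)=0$. Hence $\varepsilon_1=0$ and $\varepsilon_2=Ax|_{\partial\Omega}$, both independent of $(\mu,\lambda)$. Meanwhile $|\phi-\varphi|_{H^1}\ge |A|_F\,|\Omega|^{1/2}>0$, but the right-hand side $\frac{C_K^2(1+C_P)}{2\mu}\|Ax\|_{H^{1/2}(\partial\Omega)}$ tends to $0$ as $\mu\to\infty$. So you should state and prove the bound displayed above and drop the attempt to force the $(1+C_P)C_K^2/(2\mu)$ form. The paper's proof never actually derives that form either.
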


\begin{proof}
Define the error $u:=\phi-\varphi$. Then $-\nabla\cdot\sigma(u)=\varepsilon_1$ in $\Omega$ and $u|_{\partial\Omega}=\varepsilon_2$.
Let
\[
g:=u|_{\partial\Omega}=\phi-h.
\]
By the trace extension theorem \citep{evans2010partial}, there exists a lifting $\eta\in H^1(\Omega;\R^n)$ such that $\eta|_{\partial\Omega}=g$ and
\[
|\eta|_{H^1(\Omega)}
\le
C_{\rm tr}
|g|_{H^{1/2}(\partial\Omega)}
=
C_{\rm tr}
|\phi-h|_{H^{1/2}(\partial\Omega)}.
\]
We use the quotient norm on $H^{1/2}(\partial\Omega)$, defined by
\[
|g|_{H^{1/2}(\partial\Omega)}
:=
\inf_{\substack{v\in H^1(\Omega;\mathbb R^n)\\ v|_{\partial\Omega}=g}}
|v|_{H^1(\Omega)}.
\]
With this convention, there exists a lifting $\eta\in H^1(\Omega;\mathbb R^n)$ satisfying
\[
\eta|_{\partial\Omega}=g,
\qquad
|\eta|_{H^1(\Omega)}
=
|g|_{H^{1/2}(\partial\Omega)}.
\]
Thus one may take $C_{\rm tr}=1$.

Define $w:=u-\eta$. Then $w\in H_0^1(\Omega;\R^n)$ and
\[
-\nabla\cdot\sigma(w)
=
\varepsilon_1
+
\nabla\cdot\sigma(\eta).
\]
Multiplying by $w$ and integrating by parts yields
\[
\int_\Omega
\sigma(w):\varepsilon(w)\,dx
=
\int_\Omega
\varepsilon_1\cdot w\,dx
-
\int_\Omega
\sigma(\eta):\varepsilon(w)\,dx.
\]
Using the constitutive law \citep{truesdell2004non},
\[
\int_\Omega
\sigma(w):\varepsilon(w)\,dx
=
\int_\Omega
\left(
2\mu |\varepsilon(w)|^2
+
\lambda |\nabla\cdot w|^2
\right)dx
\ge
2\mu |\varepsilon(w)|_{L^2}^2.
\]
By Korn's inequality \citep{horgan1995korn},
\[
|\nabla w|_{L^2}^2
\le
C_K^2
|\varepsilon(w)|_{L^2}^2,
\]
and therefore
\[
\int_\Omega
\sigma(w):\varepsilon(w)\,dx
\ge
\frac{2\mu}{C_K^2}
|\nabla w|_{L^2}^2.
\]

Combining the above estimates and using Cauchy--Schwarz gives
\[
\frac{2\mu}{C_K^2}
|\nabla w|_{L^2}^2
\le
|\varepsilon_1|_{L^2}
|w|_{L^2}
+
|\sigma(\eta)|_{L^2}
|\varepsilon(w)|_{L^2}.
\]

Since $w\in H_0^1(\Omega;\R^n)$, Poincaré's inequality yields
\[
|w|_{L^2}
\le
C_P |\nabla w|_{L^2},
\]
while
\[
|\varepsilon(w)|_{L^2}
\le
|\nabla w|_{L^2}.
\]
Hence
\[
\frac{2\mu}{C_K^2}
|\nabla w|_{L^2}
\le
C_P |\varepsilon_1|_{L^2}
+
|\sigma(\eta)|_{L^2}.
\]

Since
\[
\sigma(\eta)
=
2\mu \varepsilon(\eta)
+
\lambda
\operatorname{tr}(\varepsilon(\eta))I,
\]
there exists a constant $C_{\lambda,\mu,n}:=2\mu+n\lambda$ such that
\[
|\sigma(\eta)|_{L^2}
\le
C_{\lambda,\mu,n}
|\eta|_{H^1(\Omega)}.
\]
Therefore,
\[
|\nabla w|_{L^2}
\le
\frac{C_K^2}{2\mu}
\left(
C_P |\varepsilon_1|_{L^2}
+
C_{\lambda,\mu,n}
|\eta|_{H^1(\Omega)}
\right).
\]

Using the lifting estimate,
\[
|\nabla w|_{L^2}
\le
\frac{C_K^2}{2\mu}
\left(
C_P |\varepsilon_1|_{L^2}
+
C_{\lambda,\mu,n}
C_{\rm tr}
|\phi-h|_{H^{1/2}(\partial\Omega)}
\right).
\]

Finally, since
\[
u=w+\eta,
\]
we obtain
\[
|u|_{H^1(\Omega)}
\le
|w|_{H^1(\Omega)}
+
|\eta|_{H^1(\Omega)}.
\]
Applying Poincaré's inequality to $w$ once more and using the lifting estimate gives
\[
|\phi-\varphi|_{H^1(\Omega)}
\le
C_1
|\varepsilon_1|_{L^2(\Omega)}
+
C_2
|\phi-h|_{H^{1/2}(\partial\Omega)},
\]
where
\[
C_1
=
\frac{C_K^2 C_P}{2\mu},
\qquad
C_2
=
C_{\rm tr}
\left(
1+
\frac{C_K^2 C_{\lambda,\mu,n}}{2\mu}
\right).
\]
This completes the proof.
\end{proof}

For convex domains contained in a ball of radius $R$, Korn and Poincaré inequalities imply
\[
C_K^2\le 2,
\qquad
C_P^2\le \frac{R^2}{2n}.
\]
Therefore,
\[
C_1
=
\frac{C_K^2C_P}{2\mu}
\le
\frac{R}{\sqrt n\,\mu},
\]
and Proposition~\ref{prop:elasticity} yields
\[
|\phi-\varphi|_{H^1(\Omega)}
\le
\frac{R}{\sqrt n\,\mu}
|\varepsilon_1|_{L^2(\Omega)}
+
C_2
|\phi-h|_{H^{1/2}(\partial\Omega)},
\]
where
\[
C_2
=
C_{\rm tr}
\left(
1+
\frac{C_K^2C_{\lambda,\mu,n}}{2\mu}
\right)
=5+
\frac{4n\lambda}{2\mu}
\]
depends only on the domain geometry and material parameters.
In particular, certified interior and boundary residual bounds immediately imply certified solution error bounds.
By the continuous embedding
\[
|\phi-h|_{H^{1/2}(\partial\Omega)}
\le
|\partial\Omega|^{1/2}
|\phi-h|_{L^\infty(\partial\Omega)}
=
|\partial\Omega|^{1/2}\|\varepsilon_2\|_{L^\infty(\Omega)},
\]
we further obtain
\[
|\phi-\varphi|_{H^1(\Omega)}
\le
\frac{R}{\sqrt n\,\mu}
\|\varepsilon_1\|_{L^2(\Omega)}
+
\left(5+
\frac{4n\lambda}{2\mu}\right)|\partial\Omega|^{1/2}\|\varepsilon_2\|_{L^\infty(\Omega)}
\]
Equivalently, if we equip $O$ with the $L_\infty$ norm, then
\[|\phi-\varphi|_{L^2(\Omega)}\leq\max\left(\frac{R}{\sqrt n\,\mu}|\Omega|,\left(5+
\frac{4n\lambda}{2\mu}\right)|\partial\Omega|^{1/2}\right)\|O(u(\cdot),\cdot)\|_O,\]
thus verifying the assumptions of Theorem \ref{thm:generalization}.

\section{Reproducibility Details}\label{sec:app:implementation}

\subsection{Van der Pol equation}

We solve the ODE using neural networks of different sizes, on different intervals of time duration, and for different choices of the parameter $\mu$. On each unit interval, we choose $1,000$ collocation points for optimizing the neural network. We tried both LBFGS and Adam as optimizers; they seem to provide comparable training results, but LBFGS takes less time. For LBFGS, we choose a learning rate of 1e-1 and run for 1,000 steps, whereas for Adam, we choose a learning rate of 1e-3 and run for 40,000 steps. On a 2 GHz Quad-Core Intel Core i5 MacBook Pro, the computation time is approximately 200 seconds for Adam and 60 seconds for LBFGS with the aforementioned settings.  

Table \ref{table:training_van_der_pol} shows the maximum residual errors (for a fixed random seed) achieved using different network sizes and optimizers. While the mean squared residual error is much lower (of order 1E-6), we choose to show maximum residual errors only, because our a posterior verification is based on maximum residual errors. The activation function is chosen to be the hyperbolic tangent function for all networks. The initial condition of solutions is set to be $(x_{10},x_{20})=(1.0,0.0)$. 

\begin{table}[ht!]
    \centering
    \begin{tabular}{ccccc}
        \toprule
        & \multicolumn{4}{c}{Network Size} \\
        \cmidrule(lr){2-5}
        & $2 \times 10$ & $2 \times 20$ & $2 \times 30$ & $3 \times 10$ \\
        \midrule
        Adam & 1.21E-1 & 9.72E-2 & 2.32E-2 & 2.28E-2  \\
        LBFGS & 9.46E-2 & 1.76E-2 & 1.56E-2 & 2.21E-2 \\
        \bottomrule
    \end{tabular}
    \caption{Solving the Van der Pol equation with $\mu=1$ over the interval $[0,7]$ using different neural networks. The displayed maximum residual is the highest tested among 70,000 points over the interval $[0,7]$.}\label{table:training_van_der_pol}
\end{table}

Figure \ref{fig:van_der_pol} shows the neural solutions compared with the fourth-order Runge-Kutta (RK4) with step size 1e-5 solved on the interval $[0,7]$. It can be seen that the neural solutions are indistinguishable from the RK4 solution.

\begin{figure}[ht]
    \centering
    \includegraphics[width=0.47\textwidth]{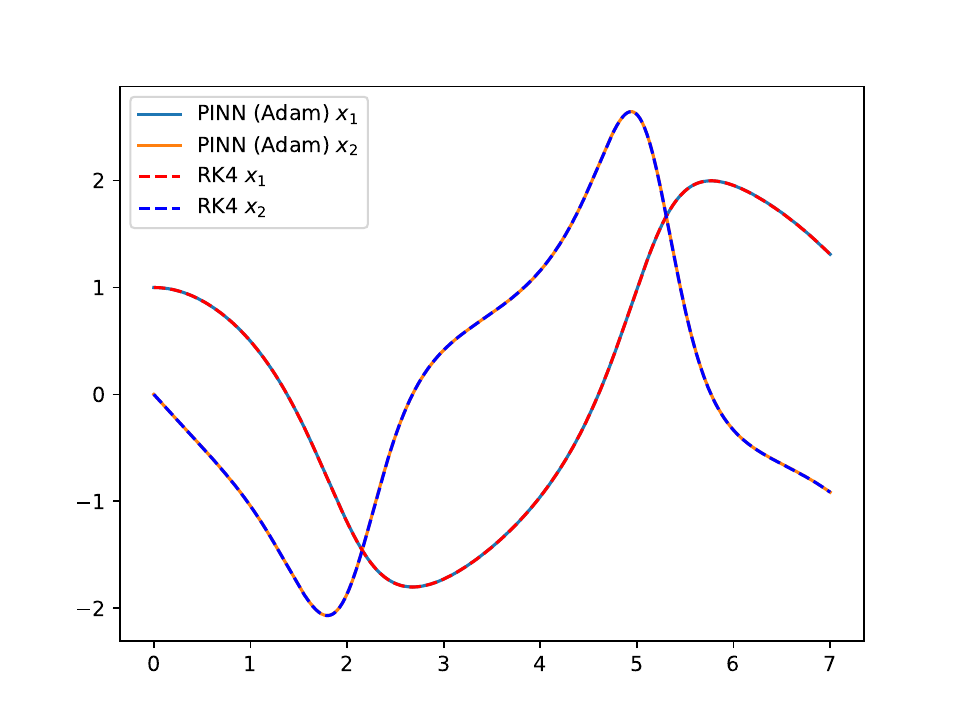}
    \includegraphics[width=0.47\textwidth]{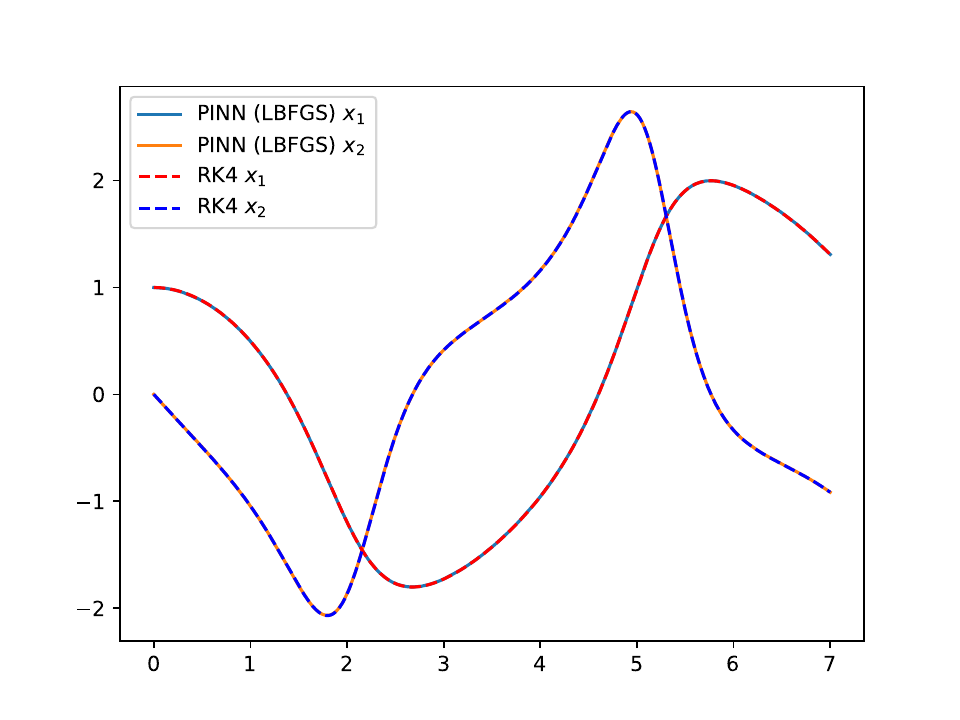}
    \caption{Neural network solutions with three hidden layers and 10 neurons each, compared with the fourth-order Runge-Kutta method.}
    \label{fig:van_der_pol}
\end{figure}

\subsection{PDE Examples}

\begin{figure}[ht]
    \centering
    \includegraphics[width=0.97\linewidth]{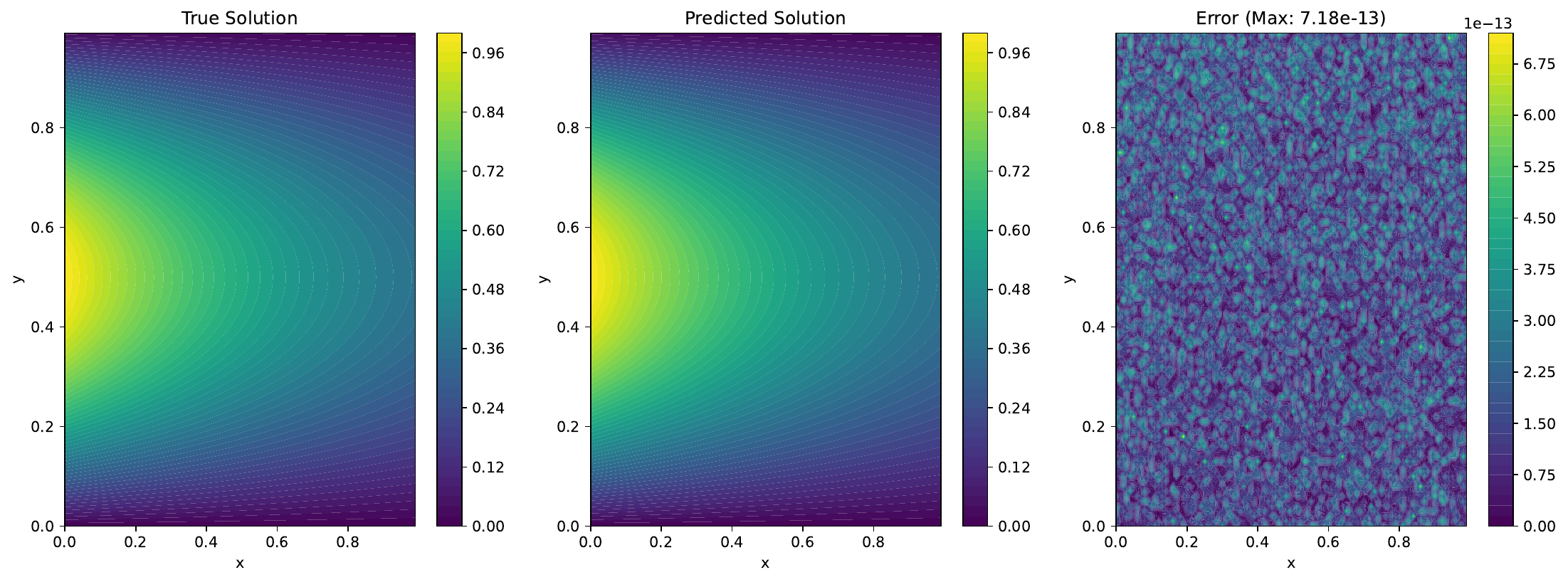}
    \includegraphics[width=0.97\linewidth]{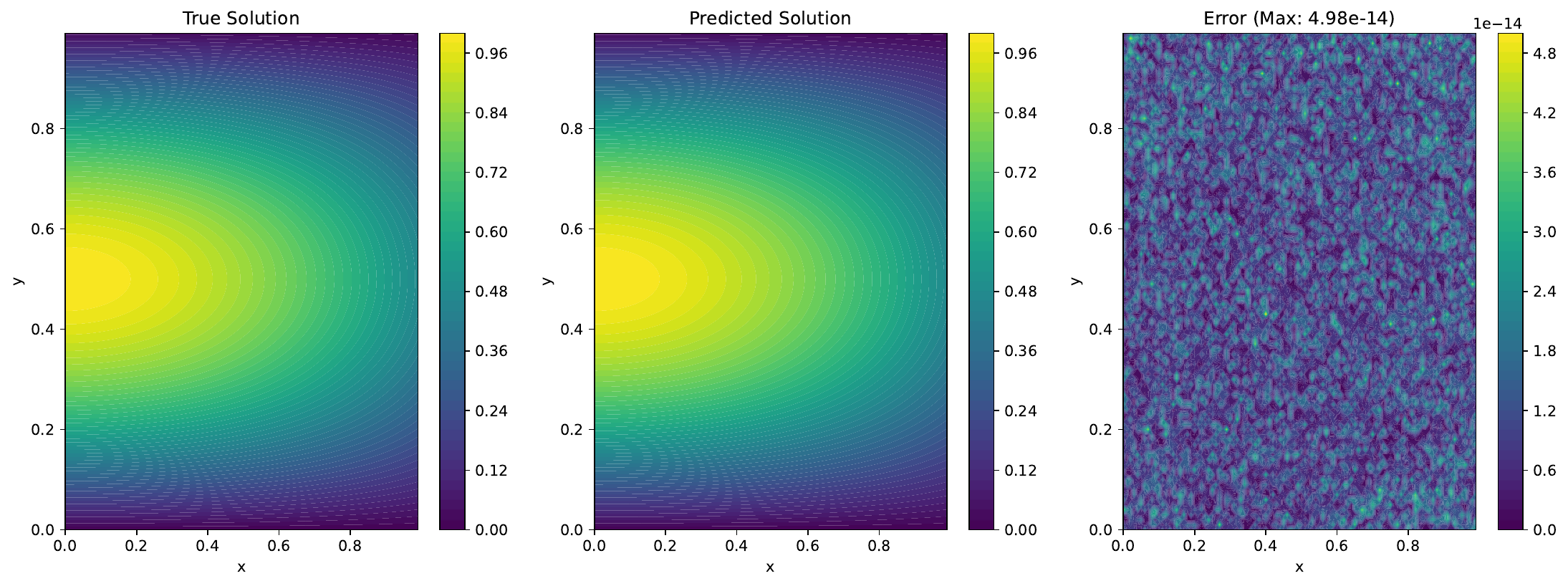}
    \caption{ELM solutions width $m=1600$ of the 1D heat equation (top) and 1D wave equation (bottom), compared with analytical solution. The error is computed empirically via finite samples in the domain.}
    \label{fig:heat_wave1D}
\end{figure}

For elliptic, parabolic, and hyperbolic examples, we employ extreme learning machines (ELMs) with a single hidden layer of width $m=1600$. Hidden-layer weights and biases are randomly initialized and fixed, while output weights are trained via linear least squares. All experiments were conducted on a single NVIDIA H100 GPU.

For each PDE, we set the domain to $\Omega_T=[0,1]^2$. We sample 1600 points randomly from the uniform distribution $U[0,1]^2$ to enforce the PDE residual loss. For each boundary or initial condition, we sample 100 additional points from $U[0,1]$ and fix $t=0$ (inital) or $x=0$, $x=1$ (boundary) to enforce the respective constraints.

Figure \ref{fig:heat_wave1D} shows the neural solutions compared with the analytical solutions along with a plot of their estimated error based on finite samples.

\textbf{Allen-Cahn Equation.}
For the nonlinear Allen-Cahn example, we use a PINN architecture with 8 hidden layers and a width of 20.
As shown in \cref{prop:allen-cahn}, error bounds can be computed independent of the true solution if we re-parameterize the input space to force periodicity. We obtain this via the sinusoidal feature map
\[
x \to (\sin(\pi x), \cos(\pi x)).
\]

\subsection{Verification Procedure}\label{sec:verification}

All certified bounds in linear PDE settings are computed using autoLiRPA \cite{xu2020automatic}.

\textbf{Domain Partitioning.}
Directly estimating residual or solution bounds over the entire computational domain leads to significant over-estimation due to relaxation conservativeness. To mitigate this effect, we partition the domain into $100^n$ uniform subgrids.

Here, $n$ denotes the intrinsic dimension of the quantity being verified:
\begin{itemize}
    \item $n=2$ when certifying PDE residuals or reference errors defined over space-time domains,
    \item $n=1$ when certifying boundary or initial
    conditions defined over lower-dimensional subsets.
\end{itemize}

Verification is then performed independently on each subdomain.

\textbf{$L_\infty$ Bounds.}
For uniform norm certification, we compute a certified $L_\infty$ upper bound of the target quantity on each subdomain using backward linear relaxation. The global $L_\infty$ bound is then obtained by taking the maximum over all subdomains.

\textbf{$L_2$ Bounds.}
For $L_2$-based estimates (e.g., Proposition~\ref{prop:parabolic2}), we compute certified $L_\infty$ bounds on each subdomain, square these bounds, and aggregate them using a Riemann-type sum over the partition. This yields a certified upper bound on the $L_2$ norm over the entire domain.





\end{document}